\documentclass[11pt]{article}
\usepackage[letterpaper,margin=1in]{geometry}
\usepackage{libertine}

\usepackage[utf8]{inputenc}
\usepackage[T1]{fontenc}
\usepackage{microtype}      
\usepackage{url}            
\usepackage{booktabs}
\usepackage{multirow}
\usepackage{comment}

\usepackage[dvipsnames]{xcolor}
\usepackage[colorlinks]{hyperref}

\usepackage{amsmath}
\usepackage{amssymb}
\usepackage{amsfonts}
\usepackage{mathtools}
\usepackage{bm}
\usepackage{bbm}
\usepackage{nicefrac}
\usepackage{centernot}

\usepackage{amsthm}
\theoremstyle{plain}
\newtheorem{proposition}{Proposition}
\newtheorem{lemma}{Lemma}

\theoremstyle{definition}
\newtheorem{definition}{Definition}

\usepackage{float}
\usepackage{graphicx}
\usepackage{wrapfig}
\usepackage{subcaption}
\graphicspath{{./figures/}}
\DeclareGraphicsExtensions{.pdf,.png,.jpeg,.jpg}

\usepackage{algorithm}
\usepackage{algorithmic}

\usepackage{tikz}
\usetikzlibrary{shapes,decorations,arrows,calc,arrows.meta,fit,positioning,backgrounds}
\tikzset{
    -Latex,auto,semithick
}
% \tikzset{
%     -Latex,auto,node distance =1 cm and 1 cm,semithick,
%     state/.style ={ellipse, draw, minimum width = 0.7 cm},
%     point/.style = {circle, draw, inner sep=0.04cm,fill,node contents={}},
%     el/.style = {inner sep=2pt, align=left, sloped}
% }

\usepackage[capitalize,sort&compress]{cleveref}

\def\H{{\mathcal H}}
\def\U{{\mathcal U}}
\def\B{{\mathcal B}}

\def\X{{\mathcal X}}

\def\F{{\mathcal F}}
\def\G{{\mathcal G}}
\def\H{{\mathcal H}}
\def\N{{\mathcal N}}
\def\V{{\mathcal V}}
\def\O{{\mathcal O}}

\def\mE{{\mathcal E}}

\def\S{{\mathbb S}}
\def\R{{\mathbb R}}
\def\E{{\mathbb E}}
\def\P{{\mathbb P}}
\def\Nat{{\mathbb N}}

\def\1{{\mathbbm 1}}

\def\y{{\hat y}}
\def\Y{{\hat Y}}

\def\hmu{{\hat\mu}}

\def\tB{{\widetilde B}}
\def\tX{{\widetilde X}}
\def\tP{{\widetilde P}}
\def\tD{{\widetilde D}}
\def\tZ{{\widetilde Z}}
\def\tS{{\widetilde S}}
\def\tH{{\widetilde H}}

\def\tp{{\tilde p}}

\def\e{{\epsilon}}
\def\k{{\kappa}}
\def\r{{\rho}}

\def\nimplies{{\centernot\implies}}
\def\nimpliedby{{\centernot\impliedby}}
\def\niff{{\centernot\iff}}

\newcommand{\indep}{\perp \!\!\! \perp}
\newcommand{\nindep}{\centernot\indep}
\newcommand{\inner}[2]{\langle #1,#2\rangle}
\newcommand{\at}[1]{^{(#1)}}

\def\th{{^\text{th}}}

\def\MMD{{\text{MMD}}}

\def\d{{~\text{d}}}
\def\FDR{{\text{FDR}}}

\DeclareMathOperator*{\argmin}{\arg\,\min}

\DeclareMathOperator*{\sign}{sign}

\def\RKHS{{\mathcal R}}
\def\SKIT{{\text{SKIT}}}
\def\cSKIT{{\textsc{c-}\text{SKIT}}}
\def\xSKIT{{\textsc{x-}\text{SKIT}}}

\def\GC{{\text{GC}}}
\def\LC{{\text{LC}}}

\def\Gnull{{H^{\text{G}}_{0,j}}}
\def\GCnull{{H^{\GC}_{0,j}}}
\def\LCnull{{H^{\LC}_{0,j,S}}}

\def\scott{{\text{scott}}}
\def\eff{{\text{eff}}}

\usepackage[numbers,sort&compress]{natbib}

\title{I Bet You Did Not Mean That:\\Testing Semantic Importance via Betting}

\author{Jacopo Teneggi\\\texttt{jtenegg1@jhu.edu}}
\author{
    Jacopo~Teneggi$^{1,2}$\\
    \texttt{\small jtenegg1@jhu.edu}
    \and
    Jeremias~Sulam$^{1,3}$\\
    \texttt{\small jsulam1@jhu.edu}
}
\date{
    \centering
    \small
    \vspace{1em}
    $^1$Mathematical Institute for Data Science (MINDS), Johns Hopkins University\\
    $^2$Department of Computer Science, Johns Hopkins University\\
    $^3$Department of Biomedical Engineering, Johns Hopkins University\\
    \rule{0.05\linewidth}{.1pt}\\
    \begin{minipage}{0.6\linewidth}
        \centering
        \begin{align*}
        \text{code:}    &&~\text{\scriptsize\url{https://github.com/Sulam-Group/IBYDMT}}\\
        \text{demo:}    &&~\text{\scriptsize\url{https://huggingface.co/spaces/jacopoteneggi/IBYDMT}}
        \end{align*}
    \end{minipage}
}

\begin{document}
\maketitle
\begin{abstract}
    Recent works have extended notions of feature importance to \emph{semantic concepts} that are inherently interpretable to the users interacting with a black-box predictive model. Yet, precise statistical guarantees, such as false positive rate and false discovery rate control, are needed to communicate findings transparently and to avoid unintended consequences in real-world scenarios. In this paper, we formalize the global (i.e., over a population) and local (i.e., for a sample) statistical importance of semantic concepts for the predictions of opaque models by means of conditional independence, which allows for rigorous testing. We use recent ideas of sequential kernelized independence testing (SKIT) to induce a rank of importance across concepts, and showcase the effectiveness and flexibility of our framework on synthetic datasets as well as on image classification tasks using several and diverse vision-language models.
    \vspace{2em}
    \begin{center}
        \centering
        \includegraphics[width=\linewidth]{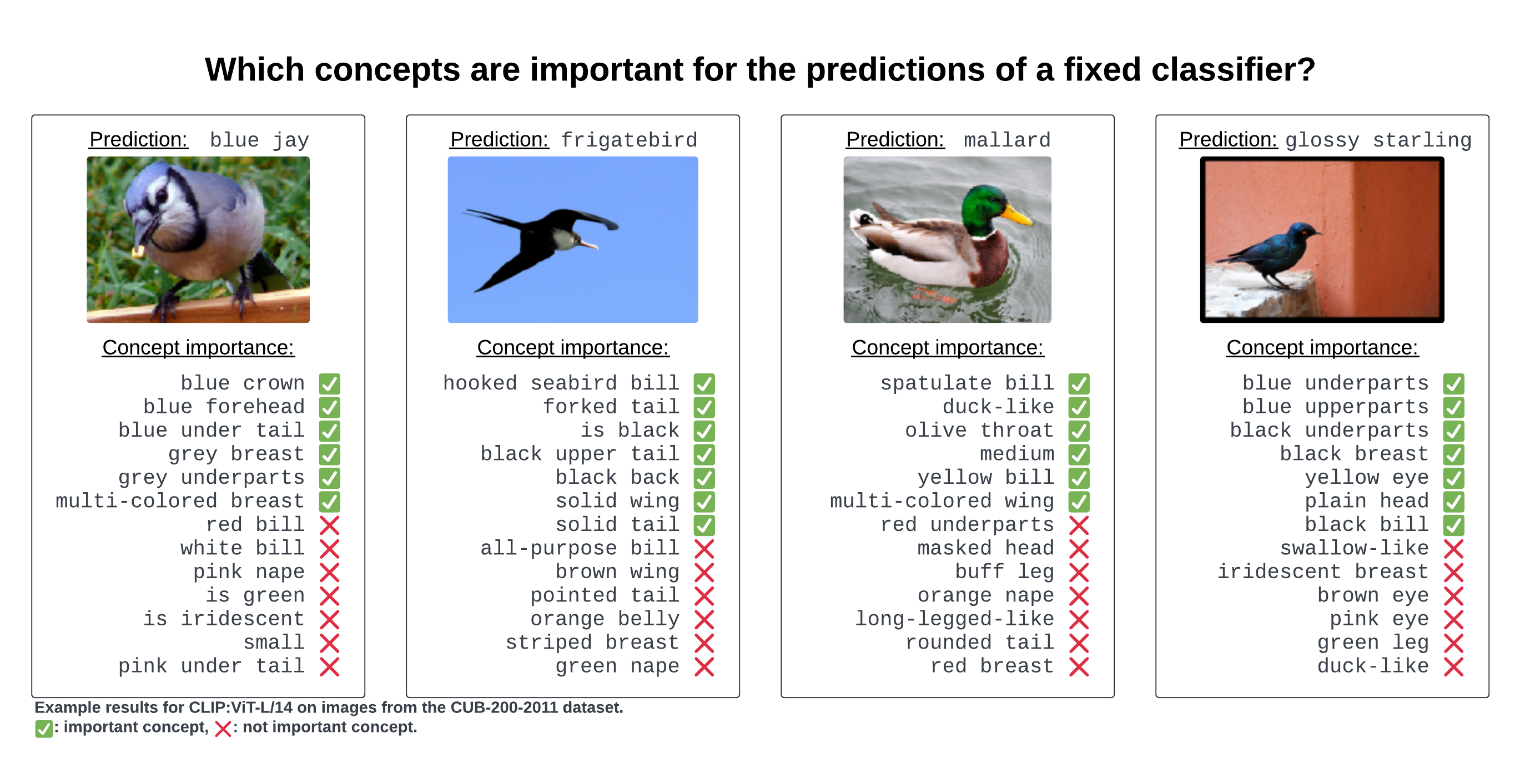}
    \end{center}
\end{abstract}

\newpage
\tableofcontents

\newpage
\section{Introduction}
\addtocontents{toc}{\protect\setcounter{tocdepth}{2}}

Providing guarantees on the decision-making processes of autonomous systems, often based on complex black-box machine learning models, is paramount for their safe deployment. This need motivates efforts towards \emph{responsible artificial intelligence}, which broadly entails questions of reliability, robustness, fairness, and interpretability. One popular approach to the latter is to use post-hoc explanation methods to identify the features that contribute the most towards the predictions of a model.\footnote{A different approach is to build \emph{inherently interpretable models} \cite{rudin2019stop}.} Several alternatives have been proposed over the past few years, drawing from various definitions of \emph{features} (e.g., pixels---or groups thereof---for vision tasks \cite{linardatos2020explainable}, words for language tasks \cite{danilevsky2020survey}, or nodes and edges for graph learning \cite{yuan2022explainability}) and of \emph{importance} (e.g, gradients for Grad-CAM \cite{selvaraju2017grad}, Shapley values for SHAP \cite{lundberg2017unified,chen2018shapley,teneggi2022fast}, or information-theoretic quantities \cite{macdonald2019rate}). While most explanation methods highlight important features in the input space of the predictor, users may care more about their meaning. For example, a radiologist may want to know whether a model considered the size and spiculation of a lung nodule to quantify its malignancy, and not just its raw pixel values.

To decouple importance from input features, \citet{kim2018interpretability} showed how to learn vector representations of \emph{semantic concepts} that are inherently interpretable to users (e.g., ``stripes'', ``sky'', or ``sand'') and how to study their gradient importance for model predictions. Recent vision-language models that jointly learn an image and text encoder---such as CLIP \cite{radford2021learning,cherti2023reproducible}---have made these representations, commonly referred to as concept activation vectors (CAVs), more easily accessible. With these models, obtaining the vector representation of a concept boils down to a forward pass of the pretrained text encoder, which alleviates the need of a dataset comprising images annotated with their concepts. Several recent works have defined semantic importance---both with CAVs and CLIP embeddings---by means of \emph{concept bottleneck models} (e.g., CBM \cite{koh2020concept}, PCBM \cite{yuksekgonul2022post}, LaBo \cite{yang2023language}), information-theoretic quantities (e.g., V-IP \cite{kolek2023learning}), sparse coding (e.g., CLIP-IP-OMP \cite{chattopadhyay2024information}, SpLiCe \cite{bhalla2024interpreting}), \emph{network dissection} \cite{bau2017network} (e.g., CLIP-DISSECT \cite{oikarinen2022clip}, TextSpan \cite{gandelsman2023interpreting}, INViTE \cite{cheninvite}), or causal inference (e.g., DiConStruct \cite{moreira2024diconstruct}, \citet{sani2020explaining}). 

On the other hand, it is important for notions of interpretability to communicate findings precisely and transparently in order to avoid unintended consequences in downstream decision tasks. Going back to the example of a radiologist diagnosing lung cancer, how should they interpret two concepts with different importance scores? Does their difference in importance carry any statistical meaning? To start addressing similar questions \cite{teneggi2023shap,burns2020interpreting} introduced statistical tests for the local (i.e., on a sample) conditional independence structure of a model's predictions. Framing importance by means of conditional independence allows for rigorous testing with false positive rate control. That is, for a user-defined significance level $\alpha \in (0,1)$, the probability of wrongly deeming a feature important is no greater than $\alpha$, which directly conveys the uncertainty in an explanation. Yet these methods consider features as coordinates in the input space, and it is unclear how to extend these ideas to abstract, semantic concepts.

In this work, we formalize semantic importance at three distinct levels of statistical independence with null hypotheses of increasing granularity: $(i)$ marginally over a population (i.e., global importance), $(ii)$ conditionally over a population (i.e., global conditional importance), and $(iii)$ for a sample (i.e., local conditional importance).\footnote{We adopt the distinction between \emph{global} and \emph{local} importance as presented in \cite{covert2020understanding}.} Each of these notions will allow us to inquire the extent to which the output of a model depends on specific concepts---both over a population and on specific samples---and thus deem them important. Instead of classical (or \emph{offline} \cite{shaer2023model}) independence testing techniques \cite{berrett2020conditional,candes2018panning,tansey2022holdout,burns2020interpreting,gretton2007kernel,zhang2012kernel,gretton2012kernel}, which are based on $p$-values and informally follow the rule \emph{``reject if $p \leq \alpha$''}, we propose to use principles of \emph{testing by betting} (or \emph{sequential} testing) \cite{shafer2021testing}, which are based on $e$-values \cite{vovk2021values} and follow the \emph{``reject when $e \geq 1 / \alpha$''} rule. As we will expand on, this choice is motivated by the fact that sequential testing is data-efficient and adaptive to the hardness of the problem---which naturally induces a rank of importance. %, and, intuitively, it holds that if concept $c$ rejects faster than $c'$, then $c$ is more important than $c'$. 
We will couple principles of conditional randomization testing (CRT) \cite{candes2018panning} with recent advances in sequential kernelized independence testing (SKIT) \cite{podkopaev2023sequential,shekhar2023nonparametric}, and introduce two novel procedures to test for our definitions of semantic importance: the conditional randomization SKIT ($\cSKIT$) to study global conditional importance, and---following the explanation randomization test (XRT) framework \cite{teneggi2023shap}---the explanation randomization SKIT ($\xSKIT$) to study local conditional importance. We will illustrate the validity of our proposed tests on synthetic datasets, and showcase their flexibility on zero-shot image classification on real-world datasets across several vision-language models, both CLIP- and non-CLIP-based.

\subsection{Summary of Contributions and Related Works}
In this paper, we will rigorously define notions of statistical importance of semantic concepts for the predictions of black-box models via conditional independence---both globally over a population and locally for individual samples. For any set of concepts, and for each level of independence, we introduce novel sequential testing procedures that induce a rank of importance. Before presenting the details of our methodology, we briefly expand on a few distinctive features of our work.

\paragraph{Explaining nonlinear predictors.} Compared to recent methods based on concept bottleneck models \cite{yuan2022explainability,yang2023language,koh2020concept}, our framework does not require training a surrogate linear classifier because we study the semantic importance structure of any given, potentially nonlinear and randomized model. This distinction is not minor---training concept bottleneck models results on explanations that pertain to the surrogate (linear) model instead of the original (complex, nonlinear) predictor, and these simpler surrogate models typically reduce performance \cite{yuksekgonul2022post,oikarinen2023label}. In contrast, we provide statistical guarantees directly on the original predictor that would be deployed in the wild. 

\paragraph{Flexible choice of concepts.} Furthermore, our framework does not rely on the presence of a large concept bank (but it can use one if it is available). Instead, we allow users to directly specify which concepts they want to test. This feature is important in settings that involve diverse stakeholders. In medicine, for example, there are physicians, patients, model developers, and members of the regulatory agency tasked with auditing the model---each of whom might desire to use different semantics for their explanations. Current explanation methods cannot account for these differences off-the-self.

\paragraph{Local semantic explanations.} Our framework entails semantic explanations for specific inputs, whereas prior approaches that rely on the weights of a linear model only inform on the global conditional independence structure between concepts and labels. Recently, \citet{shukla2023cavli} set forth ideas of local semantic importance by combining LIME \cite{ribeiro2016should} with T-CAV \cite{kim2018interpretability}. Our work differs in that it does not apply to images only, it considers formal notions of statistical importance rather than heuristics of gradient importance, and it provides guarantees such as Type I error and false discovery rate (FDR) control.

\paragraph{Sequential kernelized testing.} Motivated by the statistical properties of kernelized independence tests \cite{shekhar2023nonparametric}, we will employ the maximum mean discrepancy (MMD) \cite{gretton2012kernel} as the test statistic in our proposed procedures. The recent related work in \citet{shaer2023model} introduces the sequential version of the conditional randomization test (CRT) \cite{candes2018panning}, dubbed e-CRT because of the use of $e$-values. Unlike our work, \citet{shaer2023model} employ residuals of a predictor as test statistic, they do so in the context of global tests only, and unrelated to questions of semantic interpretability.

\section{Background}
In this section, we briefly introduce the necessary notation and general background. Throughout this work, we will denote random variables with capital letters, and their realizations with lowercase. For example, $X \sim P_X$ is a random variable sampled from a distribution $P_X$, and $x$ indicates a particular realization of $X$.

\paragraph{Problem setup.} We consider $k$-fold classification problems such that $(X,Y) \sim P_{XY}$ is a random sample $X \in \X$ with its one-hot label $Y \in \{0,1\}^k$, and $(x,y)$ denotes a particular observation. We assume we are given a fixed predictive model, consisting of an encoder $f: \X \to \R^d$ and a classifier $g: \R^d \to \R^k$ such that $h = f(x)$ is a $d$-dimensional representation of $x$, and $\y_{k'} = g(h)_{k'} = g(f(x))_{k'}$ is the prediction of the model for a particular class $k'$ (e.g., $\y_{k'}$ is the output, or score, for class ``dog''). Naturally, $H,\Y$ denote the random counterparts of $h$ and $\y$. Although our contributions do not make any assumptions on the performance of the model, $f$ and $g$ can be thought of as \emph{good} predictors, e.g. those that approximate the conditional expectation of $Y$ given $X$. 
% In our experiments, we will choose $f$ and $g$ to be {CLIP's image encoder and zero-shot classifier \cite{radford2021learning}, respectively.

\paragraph{Concept bottleneck models (CBMs).} Let $c = [c_1, \dots, c_m] \in \R^{d \times m}$ be a dictionary of $m$ concepts such that $\forall j \in [m] \coloneqq \{1, \dots, m\}$, $c_j \in \R^d$ is the representation of the $j\th$ concept---either obtained via CAVs \cite{kim2018interpretability} or CLIP's text encoder. Then, $z = \inner{c}{h}$ is the projection of the embedding $h$ onto the concepts $c$, and, with appropriate normalization, $z_j \in [-1,1]$ is the \emph{amount} of concept $c_j$ in $h$. Intuitively, CBMs project dense representations onto the subspace of interpretable semantic concepts \cite{koh2020concept}, and their performance strongly depends on the size of the dictionary \cite{oikarinen2023label}. For example, it is common for $m$ to be as large as the embedding size (e.g., $d = 718$ for CLIP:ViT-L/14).

In this work, instead, we let concepts be user-defined, allowing for cases where $m \ll d$ (e.g., $m = 20$). This is by design as $(i)$ the contributions of this paper apply to any set of concepts, and $(ii)$ it has been shown that humans can only gain valuable information if semantic explanations are succinct \cite{ramaswamy2023overlooked}. However, we remark that the construction of informative concept banks---especially for domain-specific applications---is subject of ongoing complementary research \cite{oikarinen2023label,wang2023pre,wang2022medclip,wu2023discover,daneshjou2022skincon}.

\paragraph{Conditional randomization tests.} Recall that two random variables $A,B$ are conditionally independent if, and only if, given a third random variable $C$, it holds that $P_{A \mid B,C} = P_{A \mid C}$ (i.e., $A \indep B \mid C$). That is, $B$ does not provide any more information about $A$ with $C$ present. \citet{candes2018panning} introduced the conditional randomization test (CRT), based on the observation that if $A \indep B \mid C$, then the triplets $(A,B,C)$ and $(A,\tB,C)$ with $\tB \sim P_{B|C}$, are exchangeable. That is, $P_{ABC} = P_{A \tB C}$ and one can essentially \emph{mask} $B$ without changing the joint distribution. Opposite to classical methods, the CRT assumes the conditional distribution of the covariates is known (i.e., $P_{B|C}$), which lends itself to settings with ample unlabeled data.\\

\noindent
With this general background, we now present the main technical contributions of this paper.

\section{Testing Semantic Importance via Betting}
Our objective is to test the statistical importance of semantic concepts for the predictions of a fixed, potentially nonlinear model, while inducing a rank of importance. \cref{fig:setup} depicts the problem setup---a fixed model, composed of the encoder $f$ and classifier $g$, is probed via a set of concepts $c$. This figure also illustrates the key difference with post-hoc concept bottleneck models (PCBMs) \cite{yuksekgonul2022post}, in that we do not train a sparse linear layer to approximate $\E[Y \mid Z]$. Instead, we focus on characterizing the dependence structure between $Z$ and $\Y$, i.e. the predictions of the model. Herein, for the sake of conciseness, we will drop the $\y_{k'}$ notation and simply write $\y,\Y$ unless unclear from context, and always consider the output of the model for a particular class.

\begin{figure}[t]
    \centering
    \includegraphics[width=0.8\linewidth]{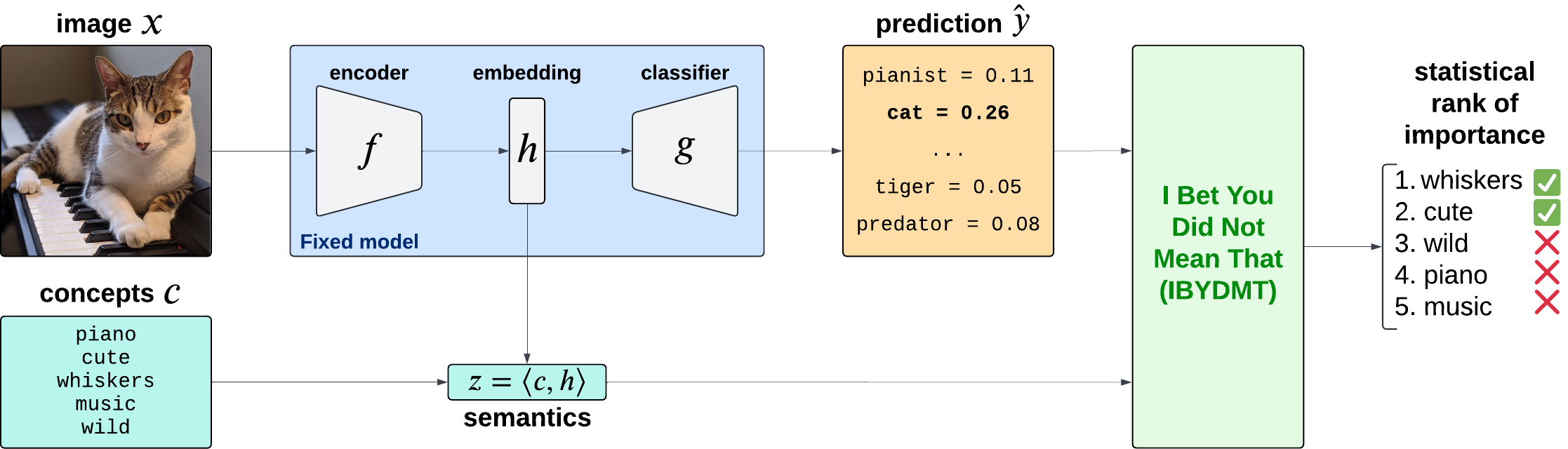}
    \caption{\label{fig:setup}Overview of the problem setup and our contribution.}
\end{figure}

\subsection{Defining Statistical Semantic Importance}
We start by defining \emph{global semantic importance} as marginal dependence between $\Y$ and $Z_j$, $j \in [m]$.

\begin{definition}[Global semantic importance]
    \label{def:global_importance}
    For a concept $j \in [m]$,
    \begin{equation}
        \label{eq:global_null}
        \Gnull:~\Y \indep Z_j
    \end{equation}
    is the global semantic independence null hypothesis.
\end{definition}

Rejecting $\Gnull$ means that we have observed enough evidence to believe the response of the model depends on concept $j$, i.e. concept $j$ is globally important over the population. Note that both $\Y$ and $Z_j$ are fixed functions of the same random variable $H$, i.e. $\Y = g(H)$ and $Z_j = \inner{c_j}{H}$. 

It is reasonable to wonder whether there is any point in testing $\Gnull$ at all---can we obtain two independent random variables from the same one? For example, let $g$ be a linear classifier such that $\Y = \inner{w}{H}$, $w \in \R^d$. Intuition might suggest that if $\inner{w}{c_j} = 0$ then $\Y \indep Z_j$, i.e. if the classifier is orthogonal to a concept, then the concept cannot be important. We show in the short lemma below (whose proof is in \cref{proof:orthogonality}) that this is false, and that, arguably unsurprisingly, statistical independence is conceptually different from orthogonality between vectors, which motivates the need for our testing procedures.

\begin{lemma}
    \label{lemma:orthogonality}
    Let $\Y = \inner{w}{H}$, $w \in \R^d$. If $d \geq 3$, then $\Gnull~\text{is true} \niff \inner{w}{c_j} = 0$.
\end{lemma}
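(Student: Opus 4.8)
Since the claim is that the biconditional fails, it suffices to exhibit, for some $d \geq 3$, a weight vector $w$, a concept dictionary $c$, and a law $P_H$ for which one of the two implications is violated; in fact I would break both directions. The key simplification is that $\Y = \inner{w}{H}$ and $Z_j = \inner{c_j}{H}$ are \emph{both} linear functionals of the single variable $H$, so prescribing their joint distribution amounts to choosing $P_H$ appropriately.

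\emph{Orthogonality does not force independence (``$\Leftarrow$'' fails).} Put $w = e_1$ and $c_j = e_2$, so $\inner{w}{c_j} = 0$, and let $H = (G, |G|, 0, \dots, 0)$ with $G$ a standard Gaussian. Then $\Y = G$ and $Z_j = |G|$, which are uncorrelated but plainly dependent, since conditioning on $Z_j = z$ confines $\Y$ to $\{-z, z\}$. Hence $\Gnull$ is false although $\inner{w}{c_j} = 0$, which already rules out the ``$\Leftarrow$'' direction. (If one insists on norm-one embeddings, replace this by $H$ uniform on $\S^{d-1}$, where the conditional support of $H_2$ given $H_1$ visibly varies with $H_1$.)

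\emph{Independence does not force orthogonality (``$\Rightarrow$'' fails).} This is the substantive direction. Choose non-parallel vectors $w, c_j$ supported on the first two coordinates with $\inner{w}{c_j} \neq 0$, and consider $T \colon h \mapsto (\inner{w}{h}, \inner{c_j}{h})$; since $w$ and $c_j$ are linearly independent, $T$ restricts to a linear isomorphism of $\mathrm{span}\{w, c_j\}$ onto $\R^2$. I would take $P_H$ concentrated on that plane with $(\inner{w}{H}, \inner{c_j}{H})$ distributed as a genuine product measure --- e.g. set $H = T^{-1}(A,B)$ for two independent standard Gaussians $A, B$. Then $\Y = A \indep B = Z_j$ by construction, so $\Gnull$ holds even though $\inner{w}{c_j} \neq 0$. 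The dimension hypothesis enters precisely here when the embeddings are constrained to the unit sphere (as in the normalized concept-bottleneck setup): the image $T(\S^{d-1})$ is a genuinely two-dimensional elliptical region only for $d \geq 3$ --- for $d = 2$ it is merely the bounding ellipse, a curve, which carries no non-degenerate product measure --- while for $d \geq 3$ every interior point of that region is attained by $T$, so the product law lifts to an honest distribution of $H$ on $\S^{d-1}$ (the fiber being determined up to a sign in the orthogonal complement of $\mathrm{span}\{w,c_j\}$).

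\emph{Main obstacle.} The only delicate step is the second construction: one must verify that the target product law of $(\Y, Z_j)$ is actually realizable by some admissible $H$ --- immediate in $\R^d$, but on the sphere it requires the image of $T$ to be two-dimensional, which is exactly the content of the hypothesis $d \geq 3$. The two (in)dependence verifications themselves are one-line checks, so I would not expand on them.
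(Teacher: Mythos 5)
Your proof is correct in outline and takes a genuinely different route from the paper's. For the direction ``orthogonality $\not\Rightarrow$ independence,'' the paper works directly on $H \sim \U(\S^{d-1})$ and exploits the constraint $\sum_j H_j^2 = 1$; your $H = (G,\lvert G\rvert,0,\dots,0)$ example is more elementary and works for unnormalized $H$. For ``independence $\not\Rightarrow$ orthogonality,'' the paper decomposes $h = \eta c + h_\perp$, identifies the level sets $\H_{\pm\eta}$, and explicitly constructs pairs $(h^+,h^-)$ with matching $\inner{w}{\cdot}$ value so the sign of the $c$-component can be drawn independently; your argument via the linear map $T = (\inner{w}{\cdot},\inner{c_j}{\cdot})$ and the dimension of its image $T(\S^{d-1})$ (filled ellipse for $d \geq 3$, bare curve for $d=2$) is more geometric and conceptually cleaner. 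You also correctly identify that $d \geq 3$ matters only because the paper implicitly restricts $\|h\|=1$ (stated at the start of their proof); without that constraint the claim already holds in $d=2$, which the paper never says out loud.

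One imprecision worth fixing: in the sphere case you write that ``the product law lifts to an honest distribution of $H$ on $\S^{d-1}$,'' but the product of two standard Gaussians has support equal to all of $\R^2$, which is not contained in the compact filled ellipse $T(\S^{d-1})$; that specific law therefore does not lift. You should instead choose $A,B$ independent with compact supports small enough that $\text{supp}(A)\times\text{supp}(B)$ lies in the interior of $T(\S^{d-1})$, then push forward along a measurable section of $T\restriction_{\S^{d-1}}$. This is an easy repair, but as written the sentence overclaims. The paper sidesteps the issue by never invoking a prescribed target law --- it constructs $H$ coordinate-by-coordinate so that $\inner{c}{H} \in \{\pm\eta\}$ is drawn independently of $\inner{w}{H}$ by fiat, which is more pedestrian but avoids any realizability question.
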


The null hypothesis $\Gnull$ above precisely defines the global importance of a concept, but it ignores the information contained in the rest of them, and concepts may be correlated. For example, predictions for class ``dog'' may be independent of ``stick'' given ``tail'' and ``paws'', although ``stick'' is marginally important. To address this, and inspired by the definitions in \cite{candes2018panning}, we introduce \emph{global conditional semantic importance}.

\begin{definition}[Global conditional semantic importance]
    \label{def:global_conditional_importance}
    For a concept $j \in [m]$, let $-j \coloneqq [m] \setminus \{j\}$ denote all but the $j\th$ concept. Then,
    \begin{equation}
        H^{\GC}_{0,j}:~\Y \indep Z_j \mid Z_{-j}
    \end{equation}
    is the global conditional semantic independence null hypothesis.
\end{definition}

Analogous to \cref{def:global_importance}, rejecting $H^{\GC}_{0,j}$ means that we have accumulated enough evidence to believe the response of the model depends on concept $j$ even in the presence of the remaining concepts, i.e. there is information about $\Y$ in concept $j$ that is missing from the rest.

We stress an important distinction between \cref{def:global_conditional_importance} and PCBMs. Recall that PCBMs approximate $\E[Y \mid Z]$ with a sparse linear layer, which is \emph{inherently interpretable} because the regression coefficients directly inform on the global conditional independence structure of the predictions, i.e. if $\Y = \inner{\beta}{Z}$, $\beta \in \R^m$ then $\Y \indep Z_j \mid Z_{-j} \iff \beta_j = 0$. In this work, however, we do not assume any parametrization between the concepts and the labels because we want to provide guarantees on the original, fixed classifier $g$ that acts directly on an embedding $h$. From this conditional independence perspective, PCBMs can be interpreted as a (parametric) test of true linear independence (i.e., $H^{\text{PCBM}}_{0,j}: Y \indep Z_j \mid Z_{-j}$) between the concepts and the labels (note that $H^{\text{PCBM}}_{0,j}$ has the \emph{true} label $Y$ and not the prediction $\Y$), whereas we study the semantic structure of the predictions of a complex model, which may have learned spurious, non-linear correlations of these concepts from data.

Akin to the CRT \cite{candes2018panning}, we assume we can sample from the conditional distribution of the concepts, i.e. $P_{Z_j | Z_{-j}}$. Within the scope of this work, $m$ is small ($m \approx 20$) and we will show how to effectively approximate this distribution with nonparametric methods that do not require prohibitive computational resources. This is an advantage of testing \emph{a few} semantic concepts compared to input features, especially for imaging tasks where the number of pixels is large ($\gtrsim 10^5$) and learning a conditional generative (e.g., a diffusion model) model may be expensive.

Finally, we define the notion of \emph{local conditional semantic importance}. That is, we are interested in finding the most important concepts for the prediction of the model locally on a particular input $x$, i.e. $\y = g(f(x))$. Recently, \cite{teneggi2023shap,burns2020interpreting} showed how to deploy ideas of conditional randomization testing for local explanations of machine learning models. To summarize, let $B,C$ be random variables, $(b,c)$ an observation of $(B,C)$, and $\eta$ a fixed real-valued predictor on $(b,c)$. Then, the explanation randomization test (XRT) \cite{teneggi2023shap} null hypothesis is $\eta(b, c) \overset{d}{=} \eta(\tB, c)$, $\tB \sim P_{B|C=c}$. That is, the \emph{observed} value of $B$ does not affect the distribution of the response given the \emph{observed} value of $C$. We now generalize these ideas.

\begin{definition}[Local conditional semantic importance]
    \label{def:local_conditional_importance}
    For a fixed $z \in [-1,1]^m$ and for any $C \subseteq[m]$, denote $\Y_C = g(\tH_C)$ with $\tH_C \sim P_{H | Z_C = z_C}$. Then, for a concept $j \in [m]$ and a subset $S \subseteq [m] \setminus \{j\}$,
    \begin{equation}
        H^{\LC}_{0,j,S}:~\Y_{S \cup \{j\}} \overset{d}{=} \Y_S
    \end{equation}
    is the local conditional semantic independence null hypothesis.
\end{definition}

In words, rejecting $H^{\LC}_{0,j,S}$ means that, given the observed concepts in $S$, concept $j \notin S$ affects the distribution of the response of the model, hence it is important. For this test, we assume we can sample from the conditional distribution of the embeddings given a subset of concepts (i.e., $P_{H \mid Z_C = z_C}$). This is equivalent to solving an inverse problem stochastically, since $z = \inner{c}{h}$ and $c$ is not invertible ($c \in \R^{d \times m}$, $m \ll d$). Hence, there are several embeddings $h$ that could have generated the observed $z_C$. We will use nonparametric sampling ideas to achieve this, stressing that it suffices to sample the embeddings $H$ and not an entire input image $X$ since the classifier $g$ directly acts on $h$ and the encoder $f$ is deterministic. Finally, we remark that $H^{\LC}_{0,j,S}$ differs from the XRT null hypothesis in that conditioning is performed in the space of semantic concepts instead of that of the input.

With these precise notions of semantic importance, we now show how to test for each one of them with principles of sequential kernelized independence testing (SKIT) \cite{podkopaev2023sequential}.

\subsection{Testing by Betting}
A classical approach to hypothesis testing consists of formulating a null hypothesis $H_0$, collecting data, and then summarizing evidence by means of a $p$-value. Under the null, the probability of observing a small $p$-value is small.\footnote{Recall that a random variable $p$ is a valid $p$-value if $\P_{H_0}[p \leq \alpha] \leq \alpha$, $\forall \alpha \in (0,1)$.} Thus, for a significance level $\alpha \in (0,1)$, we can reject $H_0$ if $p \leq \alpha$. In this setting, all data is collected first, and then processed later (i.e., offline).

Alternatively, one can instantiate a game between a bettor and nature \cite{shafer2021testing,shafer2019game}. At each turn of the game, the bettor places a wager against $H_0$, and then nature reveals the truth. If the bettor wins, they will increase their wealth, otherwise lose some. More formally, and as is commonly done \cite{shekhar2023nonparametric,podkopaev2023sequential,shaer2023model}, we define a wealth process $\{K_t\}_{t \in \Nat_0}$ with 
\begin{equation}
    K_0 = 1~\quad\text{and}\quad~K_t = K_{t-1} \cdot (1 + v_t\k_t)
\end{equation}
where $v_t \in [-1,1]$ is a betting fraction, and $\k_t \in [-1,1]$ is the payoff of the bet. It is now easy to see that when $v_t\k_t \geq 0$ (i.e., the bettor wins) the wealth increases, and the opposite otherwise. 

We briefly include here a condition under which the payoff $\k_t$ guarantees the game is \emph{fair} (under the null, the bettor cannot accumulate wealth and outwin nature) and we can thus use the wealth process to reject $H_0$ with Type I error control. In particular, for a significance level $\alpha \in (0,1)$, we denote $\tau = \min \{t \geq 1:~K_t \geq 1/\alpha\}$ the \emph{rejection time} of $H_0$.

\begin{lemma}[See \citet{shaer2023model,shekhar2023nonparametric}]
    \label{lemma:validity_wealth}
    If 
    \begin{equation}
        \E_{H_0}[\k_t \mid \F_{t-1}] = 0,
    \end{equation}
    where $\F_{t-1}$ denotes the filtration (i.e., the smallest $\sigma$-algebra) of all previous observations, then the wealth process describes a fair game and
    \begin{equation}
        \P_{H_0}[\exists t \geq 1:~K_t \geq 1/\alpha] \leq \alpha.
    \end{equation}
\end{lemma}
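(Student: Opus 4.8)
The plan is to show that under $H_0$ the wealth process $\{K_t\}_{t\in\Nat_0}$ is a nonnegative martingale with unit initial value, and then to invoke Ville's maximal inequality. First I would make explicit the measurability assumptions that are implicit in the betting construction: the payoff $\k_t$ is $\F_t$-measurable, and the betting fraction $v_t$ is \emph{predictable}, i.e. $\F_{t-1}$-measurable, since the bettor must choose the wager before nature reveals the $t\th$ outcome. Consequently $K_{t-1}$ is $\F_{t-1}$-measurable, and by induction $K_t$ is $\F_t$-measurable, so the process is adapted.

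Next I would establish nonnegativity. Since $v_t,\k_t \in [-1,1]$, we have $v_t\k_t \geq -1$, hence $1 + v_t\k_t \geq 0$; combined with $K_0 = 1 > 0$ and the multiplicative update $K_t = K_{t-1}(1+v_t\k_t)$, an easy induction gives $K_t \geq 0$ for all $t$. Then I would verify the martingale property: conditioning on $\F_{t-1}$ and using that $K_{t-1}$ and $v_t$ are $\F_{t-1}$-measurable,
\begin{equation}
    \E_{H_0}[K_t \mid \F_{t-1}] = K_{t-1}\bigl(1 + v_t\,\E_{H_0}[\k_t \mid \F_{t-1}]\bigr) = K_{t-1},
\end{equation}
where the last equality is exactly the hypothesis $\E_{H_0}[\k_t \mid \F_{t-1}] = 0$. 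Thus $\{K_t\}$ is a nonnegative martingale under $H_0$ with $\E_{H_0}[K_0] = 1$, which formalizes the claim that the game is fair.

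Finally, I would apply Ville's inequality (the martingale analogue of Markov's inequality): for any nonnegative supermartingale $\{M_t\}$ with $M_0 = 1$ and any $\lambda > 0$, $\P[\exists t \geq 1:~M_t \geq \lambda] \leq 1/\lambda$. Taking $M_t = K_t$ and $\lambda = 1/\alpha$ yields $\P_{H_0}[\exists t \geq 1:~K_t \geq 1/\alpha] \leq \alpha$, which gives anytime-valid Type I error control and in particular bounds $\P_{H_0}[\tau < \infty] \leq \alpha$ for the rejection time $\tau$. I do not expect any serious obstacle here: the only point requiring care is stating the predictability of $v_t$ so that the conditional expectation factorizes correctly, after which the martingale computation is a one-liner and Ville's inequality is a black box. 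If one prefers to avoid citing Ville's inequality directly, an alternative is to derive it via the optional stopping theorem applied to the stopped martingale $K_{t\wedge\tau}$, but invoking the standard maximal inequality is cleanest.
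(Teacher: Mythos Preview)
Your proposal is correct and follows essentially the same route as the paper: verify that $\{K_t\}$ is a nonnegative test (super)martingale under $H_0$ and then apply Ville's inequality. The only minor difference is that you explicitly use predictability of $v_t$ to pull it out of the conditional expectation and obtain an exact martingale, whereas the paper bounds via $v_t \le 1$ to get a supermartingale; your treatment is in fact the cleaner one.
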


The proof of this lemma is included in \cref{supp:test_martingale} for completeness. The choice of using sequential testing instead of classical offline methods is motivated by two fundamental properties. First, sequential tests are \emph{adaptive} to the hardness of the problem, sometimes provably \cite[Proposition~3]{shekhar2023nonparametric}. That is, the harder it is to reject the null, the longer the test will take, and vice versa. This naturally induces a rank of importance across concepts---if concept $c_j$ rejects faster than $c_{j'}$, then it is more important (i.e., it is easier to reject the null hypothesis that the predictions do not depend on $c_j$). We stress that this is not always possible by means of $p$-values because they do not measure effect sizes: consider two concepts that reject their respective nulls at the same significance level; it is not possible to distinguish which---if any---is more important. As we will show in our experiments, all tests used in this work are adaptive in practice, but statistical guarantees on their rejection times are currently open questions, and we consider them as future work. Second, sequential tests are \emph{sample-efficient} because they do not require analyzing an entire dataset, but only what is needed to reject. This is especially important for conditional randomization tests, where sampling from the conditional may be expensive. In the offline scenario, we would have to resample the entire dataset $r$ times (and $r$ is usually of the order of $10^2$), whereas the sequential test would terminate in at most the size of the dataset, providing a speedup of factor $r$.\footnote{We note recent connections by \citet{fischer2024sequential} between permutation tests and $e$-values.}

\subsection{Testing Global Semantic Importance with SKIT}
\citet{podkopaev2023sequential} show how to design sequential kernelized tests of independence (i.e., $H_0: A \indep B$) by framing them as particular two-sample tests of the form $H_0: P = \tP$, with $P = P_{AB}$ and $\tP = P_A \times P_B$. Similarly to \cite{shaer2023model,shekhar2023nonparametric}, they propose to leverage a simple yet powerful observation about the \emph{symmetry} of the data under $H_0$ \cite[Section~4]{podkopaev2023sequential}. We state here the main result we will use in this paper (the proof is included in \cref{supp:symmetry_based_testing}).

\begin{lemma}[See \cite{podkopaev2023sequential,shaer2023model,shekhar2023nonparametric}]
    \label{lemma:symmetry_based_testing}
    $\forall t \geq 1$, let $X \sim P$ and $\tX \sim \tP$, and let $\r_t: \X \to \R$ be any fixed function. Then,
    \begin{equation}
        \k_t = \tanh(\r_t(X) - \r_t(\tX))
    \end{equation}
    satisfies \cref{lemma:validity_wealth} for $H_0: P = \tP$, i.e. it provides a fair game.
\end{lemma}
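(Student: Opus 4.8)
The plan is to reduce everything to one structural fact: under $H_0: P = \tP$ the pair $(X,\tX)$ is \emph{exchangeable}, so interchanging its two coordinates does not change its (conditional) law; since $\tanh$ is odd, this forces $\E_{H_0}[\k_t \mid \F_{t-1}] = 0$.

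First I would pin down the conditioning. The phrase ``fixed function $\r_t$'' should be read as: $\r_t$ is $\F_{t-1}$-measurable---it may be tuned on past rounds but is chosen before the round-$t$ draws are revealed---and, conditionally on $\F_{t-1}$, the fresh samples $X \sim P$ and $\tX \sim \tP$ are independent of the past. Under $H_0$ we have $P = \tP$, so conditionally on $\F_{t-1}$ the two draws $X$ and $\tX$ are identically distributed and independent, hence $(X,\tX) \overset{d}{=} (\tX,X)$ given $\F_{t-1}$.

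Next, write $\phi(a,b) = \tanh(\r_t(a) - \r_t(b))$ and observe that $\phi$ is antisymmetric, $\phi(b,a) = -\phi(a,b)$, because $\tanh$ is odd. Applying the conditional exchangeability from the previous step,
\begin{equation}
    \E_{H_0}[\k_t \mid \F_{t-1}] = \E_{H_0}[\phi(X,\tX) \mid \F_{t-1}] = \E_{H_0}[\phi(\tX,X) \mid \F_{t-1}] = -\E_{H_0}[\phi(X,\tX) \mid \F_{t-1}],
\end{equation}
so the conditional expectation equals its own negative and must vanish. Integrability is free since $|\tanh| \le 1$; this same bound shows $\k_t \in [-1,1]$, so $\k_t$ is an admissible payoff, and \cref{lemma:validity_wealth} then yields the fair-game / Type I error guarantee.

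The mathematics here is essentially one line; the only thing requiring care is the measure-theoretic bookkeeping---making precise that $\r_t$ sees only $\F_{t-1}$ and that the round-$t$ draws are conditionally independent of the past with $X, \tX$ conditionally i.i.d. under the null---so that the swap $(X,\tX) \mapsto (\tX,X)$ is an honest equality in conditional distribution rather than merely an equality of marginals. (If the underlying construction instead forms a data pair and applies an independent uniform coordinate swap, exchangeability holds by construction and the identical antisymmetry argument applies, even outside the null.) I expect this bookkeeping, rather than the use of oddness of $\tanh$, to be the main obstacle.
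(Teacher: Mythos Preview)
Your proposal is correct and follows essentially the same route as the paper: exchangeability of $(X,\tX)$ under $H_0$ combined with the oddness of $\tanh$ forces the conditional expectation of $\k_t$ to vanish, after which \cref{lemma:validity_wealth} applies. The only cosmetic difference is that the paper first records that the scalar $U=\r_t(X)-\r_t(\tX)$ has a symmetric law and then integrates an odd function against it, whereas you work directly with the bivariate antisymmetric map $\phi(a,b)=\tanh(\r_t(a)-\r_t(b))$; your version is slightly cleaner in that it does not implicitly assume a density for $U$.
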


That is, \cref{lemma:symmetry_based_testing} prescribes how to construct valid payoffs for two-samples tests, and, consequently, tests of independence. We note that the choice of $\tanh$ provides $\k_t \in [-1, 1]$, but any arbitrary anti-symmetric function can be used (e.g., $\sign$). Furthermore, any fixed function $\r_t$ is valid but, in general, this function should have a positive value under the alternative in order for the bettor to increase their wealth and the testing procedure to have good power. 

In this work, note that the global semantic importance null hypothesis $\Gnull$ in \cref{def:global_importance} can be directly rewritten as a two-sample test, i.e.
\begin{align}
    \Gnull:~\Y \indep Z_j
    &&\text{is equivalent to}
    &&\Gnull:~P_{\Y Z_j} = P_{\Y} \times P_{Z_j}.
\end{align}
In particular, we follow \cite{podkopaev2023sequential} and use the maximum mean discrepancy (MMD) \cite{gretton2012kernel} to measure the distance between the joint and product of marginals. In particular, let $\RKHS_{\Y},\RKHS_{Z_j}$ be two reproducing kernel Hilbert spaces (RKHSs) on the domains of $\Y$ and $Z_j$, respectively (recall that $\Y$ and $Z_j$ are both univariate). Then, $\r_t^{\SKIT}$ is the plug-in estimate of $\MMD(P_{\Y Z_j}, P_{\Y} \times P_{Z_j})$ at time $t$.\footnote{Recal that $\MMD(P_{AB}, P_A \times P_B)$ is the Hilbert-Schmidt Independence Criterion (HSIC), see \citet{gretton2007kernel}.} Note that, in practice, we only have access to samples from the test distribution $P_{\Y Z_j}$ (i.e., the joint) and we employ a permutation approach swapping elements of two consecutive samples to simulate data from the null distribution $P_{\Y} \times P_{Z_j}$.

\begin{algorithm}[t]
    \caption{\label{algo:skit}Level-$\alpha$ SKIT for the global importance of concept $j$}
    \raggedright\textbf{Input:} Stream $(\Y\at{t}, Z_j\at{t}) \sim P_{\Y Z_j}$.
    \begin{algorithmic}[1]
        \STATE $K_0 \gets 1$
        \STATE Initialize ONS strategy as in \cref{algo:ons}.
        \FOR{$t = 1, \dots$}
            \STATE Compute $\r_t^{\SKIT}$ as in \cref{eq:skit_payoff}
            \STATE Observe $D\at{2t-1} = (\Y\at{2t-1}, Z_j\at{2t-1})$~\quad\text{and}\quad~$D\at{2t} = (\Y\at{2t}, Z_j\at{2t})$
            \STATE $\tD\at{2t-1} \gets (\Y\at{2t-1}, Z_j\at{2t})$~\quad~and~\quad~$\tD\at{2t} \gets (\Y\at{2t}, Z_j\at{2t-1})$
            \STATE Compute $\k_t^{\SKIT}$ as in \cref{eq:skit_kappa}
            \STATE $K_t \gets K_{t-1} \cdot (1 + v_t\k_t)$
            \IF{$K_t \geq 1/\alpha$}
                \RETURN $t$
            \ENDIF
            \STATE $v_{t+1} \gets \text{ONS step}$
        \ENDFOR
    \end{algorithmic}
\end{algorithm}

More formally, let
\begin{align}
    D\at{2t} = (\Y\at{2t}, Z_j\at{2t}) \sim P_{\Y Z_j},
    &&D\at{2t-1} = (\Y\at{2t-1}, Z_j\at{2t-1}) \sim P_{\Y Z_j}
\end{align}
such that
\begin{align}
    \tD\at{2t} = (\Y\at{2t}, Z_j\at{2t-1}),
    &&\tD\at{2t-1} = (\Y\at{2t-1}, Z_j\at{2t}).
\end{align}
Then,
\begin{equation}
    \label{eq:skit_payoff}
    \r^{\SKIT}_t \coloneqq \hmu\at{2(t-1)}_{\Y Z_j} - \hmu\at{2(t-1)}_{\Y} \otimes \hmu\at{2(t-1)}_{Z_j}
\end{equation}
where $\hmu_{\Y Z_j},\hmu_{\Y},\hmu_{Zj}$ are the \emph{mean embeddings} of $P_{\Y Z_j},P_{\Y},P_{Z_j}$ in $\RKHS_{\Y} \otimes \RKHS_{Z_j}$, $\RKHS_{\Y}$, and $\RKHS_{Z_j}$, respectively, and $\otimes$ is the tensor product as in \citet{gretton2012kernel}. We remark that $\r_t^{\SKIT}$ is a real-valued operator, i.e. $\r_t^{\SKIT}:~\R \times \R \to \R$, and we include technical details on how to compute it in \cref{supp:skit_payoff}. Finally, we note that for all tests presented in this work, we use data up to $t-1$ to compute $\r_t$ in order to maintain validity of the test, i.e. $\r_t$ is fixed conditionally on previous observations. Then, following \cref{lemma:symmetry_based_testing}, we conclude
\begin{equation}
    \label{eq:skit_kappa}
    \k_t^{\SKIT} \coloneqq \tanh\left(\r_t^{\SKIT}(D\at{2t-1}) + \r_t^{\SKIT}(D\at{2t}) - \r_t^{\SKIT}(\tD\at{2t-1}) - \r_t^{\SKIT}(\tD\at{2t})\right)
\end{equation}
and \cref{algo:skit} summarizes the SKIT procedure for the global semantic importance null hypothesis $\Gnull$ in \cref{def:global_importance}.

\paragraph{Computational complexity of $\SKIT$.} Analogous to the original presentation in \citet{shekhar2023nonparametric}, the computational complexity of \cref{algo:skit} is $\O(\tau^2)$, where $\tau$ is the random rejection time.\\

We now move on to presenting two novel testing procedures:
\begin{enumerate}
    \item The conditional randomization SKIT ($\cSKIT$) for $\GCnull$ in \cref{def:global_conditional_importance}, and 
    \item The explanation randomization SKIT ($\xSKIT$) for $\LCnull$ in \cref{def:local_conditional_importance}.
\end{enumerate}

\begin{algorithm}[t]
    \caption{\label{algo:cskit}Level-$\alpha$ $\cSKIT$ for the global conditional importance of concept $j$}
    \raggedright\textbf{Input:} Stream $(\Y\at{t}, Z_j\at{t}, Z_{-j}\at{t}) \sim P_{\Y Z_j Z_{-j}}$.
    \begin{algorithmic}[1]
        \STATE $K_0 \gets 1$
        \STATE Initialize ONS strategy (\cref{algo:ons})
        \FOR{$t = 1, \dots$}
            \STATE Compute $\r_t^{\cSKIT}$ as in \cref{eq:cskit_payoff}
            \STATE Observe $(\Y\at{t}, Z_j\at{t}, Z\at{t}_{-j})$
            \STATE Sample $\tZ\at{t}_j \sim P_{Z_j | Z_{-j} = Z\at{t}_{-j}}$
            \STATE Compute $k_t^{\cSKIT}$ as in \cref{eq:cskit_kappa}
            \STATE $K_t \gets K_{t-1} \cdot (1 + v_t\k_t)$
            \IF{$K_t \geq 1/\alpha$}
                \RETURN $t$
            \ENDIF
            \STATE $v_{t+1} \gets$ ONS step
        \ENDFOR
    \end{algorithmic}
\end{algorithm}

\subsection{Testing Global Conditional Semantic Importance with $\cSKIT$}
Analogous to the discussion in the previous section, we rephrase the global conditional null hypothesis $\GCnull$ in \cref{def:global_conditional_importance} as a two sample test, i.e.
\begin{align}
    \GCnull:~\Y \indep Z_j \mid Z_{-j}
    &&\text{is equivalent to}
    &&\GCnull:~P_{\Y Z_j Z_{-j}} = P_{\Y \tZ_j Z_{-j}},~\tZ_j \sim P_{Z_j | Z_{-j}}.
\end{align}
In contrast with other kernel-based notions of distance between conditional distributions \cite{park2020measure,sheng2023distance,song2009hilbert}---and akin to the CRT \cite{candes2018panning}---we assume we can sample from $P_{Z_j | Z_{-j}}$, which allows us to directly estimate $\MMD(P_{\Y Z_j Z_{-j}}, P_{\Y \tZ_j Z_{-j}})$ in our testing procedure (we will expand on how to sample from this distribution shortly). In particular, let $\RKHS_{\Y},\RKHS_{Z_j},\RKHS_{Z_{-j}}$ be three RKHSs on the domains of $\Y,Z_j,$ and $Z_{-j}$ (i.e., $\R,\R,\R^{m-1}$, where $m$ is the number of concepts), then, at time $t$, the $\cSKIT$ payoff function is
\begin{equation}
    \label{eq:cskit_payoff}
    \r^{\cSKIT}_t \coloneqq \hmu\at{t-1}_{\Y Z_j Z_{-j}} - \hmu\at{t-1}_{\Y \tZ_j Z_{-j}},
\end{equation}
where $\hmu\at{t-1}_{\Y Z_j Z_{-j}},\hmu\at{t-1}_{\Y \tZ_j Z_{-j}}$ are the mean embeddings of their respective distributions in $\RKHS_{\Y} \otimes \RKHS_{Z_j} \otimes \RKHS_{Z_{-j}}$, and
\begin{equation}
    \label{eq:cskit_kappa}
    \k_t^{\cSKIT} \coloneqq \tanh(\r_t^{\cSKIT}(\Y\at{t},Z_j\at{t},Z_{-j}\at{t}) - \r_t^{\cSKIT}(\Y\at{t}, \tZ_j\at{t}, Z_{-j}\at{t})).
\end{equation}

\begin{algorithm}[t]
    \caption{\label{algo:xskit}Level-$\alpha$ $\xSKIT$ for the local conditional importance of concept $j$}
    \raggedright\textbf{Input:} Observation $z$, subset $S \subseteq [m] \setminus \{j\}$.
    \begin{algorithmic}[1]
        \STATE $K_0 \gets 1$
        \STATE Initialize ONS strategy (\cref{algo:ons})
        \FOR{$t = 1, \dots$}
            \STATE Compute $\r_t^{\xSKIT}$ as in \cref{eq:xskit_payoff}
            \STATE Sample $\tH\at{t}_{S \cup \{j\}} \sim P_{H | Z_{S \cup \{j\}} = z_{S \cup \{j\}}}$
            \STATE Sample $\tH\at{t}_S \sim P_{H \mid Z_S = z_S}$
            \STATE $\Y\at{t}_{S \cup \{j\}} \gets g(\tH\at{t}_{S \cup \{j\}})$, $\Y\at{t}_S \gets g(\tH\at{t}_S)$
            \STATE $\k_t \gets \tanh(\r_t(\Y\at{t}_{S \cup \{j\}}) - \r_t(\Y\at{t}_S))$
            \STATE $K_t \gets K_{t-1} \cdot (1 + v_t\k_t)$
            \IF{$K_t \geq 1/\alpha$}
                \RETURN $t$
            \ENDIF
            \STATE $v_{t+1} \gets$ ONS step
        \ENDFOR
    \end{algorithmic}
\end{algorithm}

As before, $\r_t^{\cSKIT}$ is a real-valued operator on $\R \times \R \times \R^{m-1}$, and we include the remaining technical details in \cref{supp:cskit_payoff}. \cref{algo:cskit} summarizes the $\cSKIT$ procedure, which provides Type I error control for $\GCnull$, as we briefly state in the following proposition (see \cref{proof:validity_cskit} for the proof).

\begin{proposition}[Validity of $\cSKIT$]
    \label{prop:validity_cskit}
    $\forall t \geq 1$, $\k_t^{\cSKIT}$ provides a fair game for $\GCnull$.
\end{proposition}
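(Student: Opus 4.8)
The goal is to verify the hypothesis of \cref{lemma:validity_wealth}, namely $\E_{\GCnull}[\k_t^{\cSKIT} \mid \F_{t-1}] = 0$ for every $t \ge 1$; the bound on the rejection probability then follows from that lemma, and $\k_t^{\cSKIT} \in (-1,1)$ because by \cref{eq:cskit_kappa} it is a $\tanh$, so the wealth process is well defined. The plan is to reduce the conditional expectation, given the past, to the anti-symmetry argument behind \cref{lemma:symmetry_based_testing}, using the conditional-randomization reformulation $\GCnull:~P_{\Y Z_j Z_{-j}} = P_{\Y \tZ_j Z_{-j}}$ with $\tZ_j \sim P_{Z_j \mid Z_{-j}}$ recorded earlier in this section.

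First I would isolate what is genuinely new at round $t$. By \cref{eq:cskit_payoff}, $\r_t^{\cSKIT}$ depends only on the mean embeddings $\hmu\at{t-1}$, hence only on data (and auxiliary resampling) gathered strictly before round $t$; it is therefore $\F_{t-1}$-measurable and, after conditioning on $\F_{t-1}$, can be treated as a fixed measurable map $\r:~\R \times \R \times \R^{m-1} \to \R$. In contrast, the round-$t$ triple $(\Y\at{t}, Z_j\at{t}, Z_{-j}\at{t}) \sim P_{\Y Z_j Z_{-j}}$ and the fresh draw $\tZ_j\at{t} \sim P_{Z_j \mid Z_{-j} = Z_{-j}\at{t}}$ (which, crucially, uses only $Z_{-j}\at{t}$ and not $\Y\at{t}$ or $Z_j\at{t}$) are independent of $\F_{t-1}$. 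So it remains to show, for an arbitrary fixed $\r$, that $\E[\tanh(\r(\Y, Z_j, Z_{-j}) - \r(\Y, \tZ_j, Z_{-j}))] = 0$ under $\GCnull$, where $(\Y, Z_j, Z_{-j}) \sim P_{\Y Z_j Z_{-j}}$ and $\tZ_j$ is drawn from $P_{Z_j \mid Z_{-j}}$ conditionally independently of $(\Y, Z_j)$ given $Z_{-j}$.

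The core step is a swap-invariance claim: with $W := (\Y, Z_j, Z_{-j})$ and $\tW := (\Y, \tZ_j, Z_{-j})$, under $\GCnull$ one has $(W, \tW) \overset{d}{=} (\tW, W)$. I would prove this by conditioning on $Z_{-j}$: the null $\Y \indep Z_j \mid Z_{-j}$ together with the construction of $\tZ_j$ yields the factorization $P_{\Y Z_j \tZ_j \mid Z_{-j}} = P_{\Y \mid Z_{-j}} \cdot P_{Z_j \mid Z_{-j}} \cdot P_{\tZ_j \mid Z_{-j}}$ with $P_{Z_j \mid Z_{-j}} = P_{\tZ_j \mid Z_{-j}}$, which is manifestly symmetric under exchanging $Z_j \leftrightarrow \tZ_j$; integrating over $Z_{-j}$ gives the claim (this is exactly the CRT exchangeability observation of \citet{candes2018panning}, now tracked jointly with $\Y$ rather than only as an equality of marginals). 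Given swap invariance, $\r(W) - \r(\tW) \overset{d}{=} -(\r(W) - \r(\tW))$, and since $\tanh$ is odd and bounded its expectation vanishes; combined with the previous step this gives $\E_{\GCnull}[\k_t^{\cSKIT} \mid \F_{t-1}] = 0$, and \cref{lemma:validity_wealth} finishes the argument. Equivalently, one can invoke \cref{lemma:symmetry_based_testing} with $P = P_{\Y Z_j Z_{-j}}$ and $\tP = P_{\Y \tZ_j Z_{-j}}$, which coincide exactly under $\GCnull$.

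The only place requiring care is the swap-invariance claim: it is not enough that $Z_j$ and $\tZ_j$ share a conditional law given $Z_{-j}$; one needs the full triple $(\Y, Z_j, \tZ_j)$ to be exchangeable in $(Z_j, \tZ_j)$ given $Z_{-j}$, which is precisely where the conditional-independence content of $\GCnull$ (not merely a marginal fact) is used, and where one must verify that the resampling mechanism ignores $\Y\at{t}$ and $Z_j\at{t}$. Everything else — the $\F_{t-1}$-measurability of $\r_t^{\cSKIT}$ and the oddness of $\tanh$ — is routine and mirrors the global SKIT case.
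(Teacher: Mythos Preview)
Your proposal is correct and follows the same approach as the paper: reformulate $\GCnull$ as $P_{\Y Z_j Z_{-j}} = P_{\Y \tZ_j Z_{-j}}$, argue exchangeability of the two triplets, and invoke \cref{lemma:symmetry_based_testing}. The paper's proof is a two-line sketch that simply asserts exchangeability ``by assumption'' and cites the lemma; you have additionally spelled out why the swap $(W,\tW) \overset{d}{=} (\tW,W)$ holds (via the factorization of $P_{\Y Z_j \tZ_j \mid Z_{-j}}$ under the null) and why $\r_t^{\cSKIT}$ is $\F_{t-1}$-measurable, which the paper leaves implicit.
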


\paragraph{Computational complexity of $\cSKIT$.} First note that $Z_{-j}$ is an $(m-1)$-dimensional vector (where $m$ is the number of concepts). So, at each step of the test, the evaluation of the kernel associated with $\RKHS_{Z_{-j}}$ requires an additional sum over $\O(m)$ terms. Furthermore, $\cSKIT$ needs access to a sampler $P_{Z \mid Z_{-j}}$, and we conclude that the computational complexity of \cref{algo:cskit} is $\O(T_n m \tau^2)$, where $T_n$ represents the cost of the sampler on $n$ samples, and it depends on implementation. For example, in the following, we will use non-parametric samplers with $T_n = \tau n^2$ since at each step of the test we need to estimate a different conditional probability distribution. Other choices of samplers, such as variational-autoencoders, may not depend on $\tau$ (they are trained once and only used for inference).

\subsection{Testing Local Conditional Semantic Importance with $\xSKIT$}
Attentive readers will have noticed that the local conditional semantic null hypothesis $\LCnull$ in \cref{def:local_conditional_importance} is already a two-sample test where the test statistic $P$ is the distribution of the response of the model \emph{with} the observed amount of concept $j$ (i.e., $\Y_{S \cup \{j\}} = g(\tH_{S \cup \{j\}})$), and the null distribution $\tP$ \emph{without} (i.e., $\Y_S = g(\tH_S)$). Herein, we assume we can sample from $\tH_C \sim P_{H | Z_C = z_C}$ for any subset $C \subseteq [m]$, i.e. the conditional distribution of dense embeddings with specific concepts, which we will address via nonparametric methods in the following section. Then, for an RKHS $\RKHS_{\Y}$, the $\xSKIT$ payoff function is
\begin{equation}
    \label{eq:xskit_payoff}
    \r^{\xSKIT}_t \coloneqq \hmu\at{t-1}_{\Y_{S \cup \{j\}}} - \hmu\at{t-1}_{\Y_S}
\end{equation}
with $\hmu\at{t-1}_{\Y_{S \cup \{j\}}},\hmu\at{t-1}_{\Y_S}$ mean embeddings of the distributions in $\RKHS_{\Y}$, and
\begin{equation}
    \label{eq:xskit_kappa}
    \k_t^{\xSKIT} \coloneqq \tanh(\r_t^{\xSKIT}(\Y_{S \cup \{j\}}) - \r_t^{\xSKIT}(\Y_S)).
\end{equation}
That is, $\r^{\xSKIT}_t$ is the plug-in estimate of $\MMD(\Y_{S \cup \{j\}}, \Y_S)$ and, analogous to above, we include technical details in \cref{supp:xskit_payoff}. Then, the $\xSKIT$ testing procedure (which is summarized in \cref{algo:xskit}) provides Type I error control for $\LCnull$, as the following proposition summarizes (the proof is included in \cref{proof:validity_xskit}).

\begin{proposition}[Validity of $\xSKIT$]
    \label{prop:validity_xskit}
    $\forall t \geq 1$, $\k_t^{\xSKIT}$ provides a fair game for $\LCnull$.
\end{proposition}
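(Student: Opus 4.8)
The plan is to reduce to \cref{lemma:validity_wealth}: it suffices to show $\E_{\LCnull}[\k_t^{\xSKIT}\mid\F_{t-1}]=0$ for every $t\ge 1$, where $\F_{t-1}$ is the $\sigma$-algebra generated by everything observed before round $t$. First I would record the two structural facts that make the argument go through. (i) The payoff function $\r_t^{\xSKIT}$ in \cref{eq:xskit_payoff} is assembled from the mean-embedding estimates computed from data collected up to round $t-1$ only, so it is $\F_{t-1}$-measurable --- i.e., a \emph{fixed} real-valued function once we condition on the past, exactly as required by \cref{lemma:symmetry_based_testing}. (ii) At round $t$, \cref{algo:xskit} draws the two samples $\tH_{S \cup \{j\}}^{(t)}\sim P_{H\mid Z_{S \cup \{j\}}=z_{S \cup \{j\}}}$ and $\tH_S^{(t)}\sim P_{H\mid Z_S=z_S}$ afresh, independently of $\F_{t-1}$ and of each other, and then sets $\Y_{S \cup \{j\}}^{(t)}=g(\tH_{S \cup \{j\}}^{(t)})$, $\Y_S^{(t)}=g(\tH_S^{(t)})$ with the deterministic classifier $g$.

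Next I would invoke the null. Under $\LCnull$ we have $\Y_{S \cup \{j\}}\overset{d}{=}\Y_S$ by \cref{def:local_conditional_importance}; combined with (ii), the pair $(\Y_{S \cup \{j\}}^{(t)},\Y_S^{(t)})$ consists of two i.i.d.\ draws from a common law and is therefore exchangeable conditionally on $\F_{t-1}$. Applying the fixed (given $\F_{t-1}$) map $\r_t^{\xSKIT}$ coordinatewise, the real random variable $W\coloneqq\r_t^{\xSKIT}(\Y_{S \cup \{j\}}^{(t)})-\r_t^{\xSKIT}(\Y_S^{(t)})$ satisfies $W\overset{d}{=}-W$ conditionally on $\F_{t-1}$. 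Since $\tanh$ is odd and bounded, $\E[\tanh(W)\mid\F_{t-1}]=\E[\tanh(-W)\mid\F_{t-1}]=-\E[\tanh(W)\mid\F_{t-1}]$, so $\E_{\LCnull}[\k_t^{\xSKIT}\mid\F_{t-1}]=0$. \cref{lemma:validity_wealth} then gives the claimed fair game (and hence the anytime guarantee $\P_{\LCnull}[\exists t\ge 1:\ K_t\ge 1/\alpha]\le\alpha$). Equivalently, this is a direct instantiation of \cref{lemma:symmetry_based_testing} with $P=P_{\Y_{S \cup \{j\}}}$, $\tP=P_{\Y_S}$, and $X=\Y_{S \cup \{j\}}^{(t)}$, $\tX=\Y_S^{(t)}$ --- the two-sample form of $\LCnull$ asserts precisely $P=\tP$ --- so the argument parallels that of \cref{prop:validity_cskit}.

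The only point that needs genuine care --- and the place I expect a careless proof to slip --- is the bookkeeping of which randomness is ``fixed'' and which is ``fresh'': one must verify that $\r_t^{\xSKIT}$ depends on strictly past data (hence the conditioning in \cref{eq:xskit_payoff} is on $t-1$, not $t$), and that the round-$t$ conditional samples are independent of $\F_{t-1}$, so that the conditional exchangeability of $(\Y_{S \cup \{j\}}^{(t)},\Y_S^{(t)})$ is legitimate. Both hold by construction of \cref{algo:xskit}. Notably, nothing in the argument uses any structural property of $g$ beyond determinism, nor any property of the samplers for $P_{H\mid Z_C=z_C}$ beyond their producing draws with the correct conditional law; in particular the result is agnostic to how accurately these conditionals are approximated in practice, which affects only power, not validity.
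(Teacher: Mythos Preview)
Your argument is correct and follows the same route as the paper: recognize $\LCnull$ as a two-sample null, note that under it $\Y_{S\cup\{j\}}^{(t)}$ and $\Y_S^{(t)}$ are exchangeable (conditionally on $\F_{t-1}$, since $\r_t^{\xSKIT}$ is built from past data only), and invoke \cref{lemma:symmetry_based_testing}. The paper's proof is a one-line appeal to exactly this exchangeability plus \cref{lemma:symmetry_based_testing}; your write-up simply unpacks the measurability and fresh-sampling bookkeeping that the paper leaves implicit.

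One caveat on your closing remark: the claim that approximation quality of the conditional samplers ``affects only power, not validity'' is not right as stated, and in fact contradicts your own preceding clause that the argument needs the samplers to produce draws ``with the correct conditional law.'' Validity \emph{for the stated null} $\LCnull$ hinges on the fresh draws having laws $P_{\Y_{S\cup\{j\}}}$ and $P_{\Y_S}$ exactly; if you replace these by approximations $\hat P_1,\hat P_2$, the wealth process is a test martingale only under the surrogate null $\hat P_1=\hat P_2$, which need not be implied by $\LCnull$. So miscalibrated samplers can inflate Type~I error for $\LCnull$, not merely reduce power.
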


\paragraph{Computational complexity of $\xSKIT$.} Note that \cref{algo:xskit} assumes access to a sampler $P_{H \mid Z_C = z_C}$, so its computational complexity is $\O(T_n \tau^2)$, where, similarly to above, $T_n$ is the cost of the sampler. We briefly remark that, for the nonparametric samplers used in this work, $T_n = n^2$ (compared to $\tau n^2$ for $\cSKIT$) because we only need to estimate one conditional distribution.

\subsection{Controlling False Discovery Rate when Testing Multiple Concepts}
So far, we have introduced our tests and discussed their statistical validity for \emph{one concept at a time}. However, in practice, we are interested in testing $m \geq 1$ concepts, and it is well-known that multiple hypothesis testing requires appropriate corrections to avoid inflated significance levels. Here, we use a result of \citet{wang2022false} to devise a greedy post-processing algorithm that guarantees false discovery rate (FDR) control \cite{benjamini1995controlling} and provides a rank of importance. In particular, given $m$ null hypotheses $H_0\at{1}, \dots, H_0\at{m}$, denote $e\at{1}, \dots, e\at{m}$ their respective $e$-values and let $\mE:~[0,\infty]^m \to 2^{[m]}$ be an $e$-testing procedure such that $\tS = \mE(e\at{1},\dots,e\at{m})$ is the set of rejected null hypotheses. Then, FDR is defined as the expected proportion of false discoveries to the number of total findings, i.e.
\begin{equation}
    \FDR \coloneqq \E\left[\frac{\lvert \tS \cap S_0 \rvert}{\lvert \tS \rvert}\right],
\end{equation}
where $S_0 \coloneqq \{j \in [m]:~H_{0}\at{j}~\text{is true}\}$ is the set of true null hypotheses (i.e., the ones that should not be rejected). Following \cite{wang2022false,blanchard2008two}, we say that $\mE$ is \emph{self-consistent at level $\alpha$} if every rejected $e$-value satisfies $e\at{j} \geq m/\alpha \lvert \tS \rvert$, and we now state the lemma we use to construct our post-processor (see \cref{algo:fdr_control}).
\begin{lemma}[See \citet{wang2022false}]
    Any self-consistent $e$-testing procedure at level $\alpha$ controls FDR at level $\alpha$ for arbitrary configurations of $e$-values.
\end{lemma}

\begin{algorithm}[t]
    \caption{\label{algo:fdr_control}Greedy FDR post-processor.}
    \raggedright\textbf{Input:} Wealth processes $\{K_t\at{i}\}, \dots, \{K_t\at{m}\}$, $t \in \Nat_0$, significance level $\alpha$.
    \vspace{-15pt}
    \begin{algorithmic}[1]
        \STATE $\tS \gets \emptyset$
        \FOR{$s = 1, \dots, m$}
            \STATE $j',\tau' \gets \underset{\substack{j \in [m] \setminus \tS,~\tau \in [0,\infty]}}{\argmin}~~\tau~~\text{s.t.}~~K_{\tau}\at{j} \geq m/\alpha s$
            \IF{$\tau' = \infty$}
                \RETURN $\tS$
            \ENDIF
            \STATE $\tS \gets \tS \cup \{j'\}$
        \ENDFOR
        \RETURN $\tS$
    \end{algorithmic}
\end{algorithm}

Recall that the optional stopping theorem implies that for a test martingale $\{K_t\}_{t \in \Nat_0}$, the wealth $K_t$ is an $e$-value for any $t \geq 1$. Then, intuitively, \cref{algo:fdr_control} transforms an $e$-testing procedure $\mE$ into a self-consistent one by greedily rejecting concepts as soon as they cross the adjusted threshold $m/\alpha \lvert \tS \rvert$. Note that we do not know the number of rejections \emph{a priori}, and that $m/\alpha \lvert \tS \rvert$ is a decreasing function of $\lvert \tS \rvert$. Hence, the adjusted threshold for the first rejected concept will be $m/\alpha$ (which matches the common Bonferroni correction \cite{bonferroni1936teoria}), $m/2\alpha$ for the second one, then $m/3\alpha$, and so on and so forth. The procedure stops when no more concepts reach the threshold, and concepts are sorted by their adjusted rejection times. We remark that \cref{algo:fdr_control} runs in $\O(m)$ time, and that it does not change the individual testing procedures. This is important because, in practice, concepts are first tested concurrently, and then post-processed.

\section{Synthetic Experiments}
In this section, we showcase the main properties of our tests of two synthetic datasets.\footnote{Code to reproduce experiments is available at \url{https://github.com/Sulam-Group/IBYDMT}.}
% and on zero-shot image classification with CLIP. We will first illustrate our approach on simple academic examples, as they will introduce the reader to the basic elements of our tests and highlight their main features. 
Readers who prefer to focus on real-world applications may skip ahead to \cref{sec:real_experiments}.

\subsection{\label{sec:synthetic}Gaussian Data}
We start by illustrating all sequential tests presented in this work are valid, and that they adapt to the hardness of the problem, i.e. the weaker the dependence structure, the longer their rejection times. We devise a synthetic dataset with a nonlinear response such that all distributions are known and we can sample from the exact conditional distribution. 

The data-generating process we consider is defined as
\begin{align}
    Z_1 \sim \N(\mu_1, \sigma_1^2)          &&~\mu_1 = 1, \sigma_1 = 1\\
    Z_2 \sim \N(\mu_2, \sigma_2^2)          &&~\mu_2 = -1, \sigma_2 = 1\\
    Z_3 \mid Z_1 \sim \N(Z_1, \sigma_3^2)   &&~\sigma_3 = 1   
\end{align}
and
\begin{equation}
    Y \mid Z \sim S(\beta_1 Z_1 + \beta_2 Z_2 Z_3 + \beta_3 Z_3) + \e,
\end{equation}
where $S$ is the sigmoid function, $\e \sim \N(0, \sigma_0^2),~\sigma_0 = 0.01$ is independent Gaussian noise, and $\beta_i$, $i=1,2,3$ are coefficients that will allow us to test different conditions. Then, it follows that
\begin{equation}
    g(z) = \E[Y \mid Z = z] = S(\beta_1 z_1 + \beta_2 z_2z_3 + \beta_3 z_3)
\end{equation}
and
\begin{equation}
    \label{eq:synthetic_conditional}
    Z_1 \mid Z_3 \sim \N\left(\frac{\sigma_1^2}{\sigma_1^2 + \sigma_3^2}Z_3 + \frac{\sigma_3^2}{\sigma_1^2 + \sigma_3^2}\mu_1, \left(\frac{1}{\sigma_1^2} + \frac{1}{\sigma_3^2}\right)^{-1}\right).
\end{equation}

\begin{figure}[t]
    \centering
    \begin{tikzpicture}
        \node (z1) {$Z_1$};
        \node (z2) [right = of z1] {$Z_2$};
        \node (z3) [right = of z2] {$Z_3$};
        \node (y) [below = of z2] {$Y$};

        \path (z1) edge[bend left = 40] (z3);
        \path (z1) edge (y);
        \path (z2) edge (y);
        \path (z3) edge (y);
    \end{tikzpicture}
    \caption{\label{fig:synthetic_data_generation}Pictorial representation of the data-generating process for the synthetic dataset.}
\end{figure}

\cref{fig:synthetic_data_generation} depicts the data-generating process. We remark that, for this experiment, we assume there exists a known parametric relation between the response $Y$ and the \emph{concepts} $Z$. This is only to verify our tests retrieve the ground-truth structure, and our contributions do not rely on this assumption. With this data-generating process, it holds that:
\begin{enumerate}
    \item If $\beta_2 = 0$ then $Y \indep Z_2$,
    \item If $\beta_1 = 0$ then $Y \indep Z_1 \mid Z_{-1}$, and
    \item For an observation $Z = z$, if $z_3 = 0$ then $g(\tZ_{\{2,3\}}) \overset{d}{=} g(\tZ_3)$ with $\tZ_C \sim P_{Z | Z_C = z_C}$.
\end{enumerate}

We test each condition with SKIT, $\cSKIT$, and $\xSKIT$, respectively. We use both a linear and RBF kernel with bandwidth set to the median pairwise distance between all previous observations (commonly referred to as the \emph{median heuristic} \cite{garreau2017large}). For each test, we estimate the \emph{rejection rate} (i.e., how often a test rejects), and the \emph{expected rejection time} (i.e., how many steps of the test it takes to reject) over 100 draws of $\tau^{\max} = 1000$ samples, and with a significance level $\alpha = 0.05$. We remark that a normalized rejection time of 1 means failing to reject in at most $\tau^{\max}$ steps.

\subsubsection{Global Importance with SKIT} 
First, we test that
\begin{equation}
    \beta_2 = 0 \implies Y \indep Z_2
\end{equation}
with the symmetry-based SKIT test in \cref{algo:skit}. \cref{fig:synthetic_global_dist} shows samples from the joint distribution $P_{YZ_2}$ for $\beta_2=1$ and $\beta_2=0$. As expected, when $\beta_2=0$, the slope of the linear regression (red dashed line) is $\approx 0$ because $Y$ and $Z_2$ are independent. \cref{fig:synthetic_global_results} reports average rejection rate and average rejection time as a function of $\beta_2$. As $\beta_2$ increases, the strength of the dependency between $Y$ and $Z_2$ increases, and the rejection time decreases---this adaptive behavior is characteristic of sequential tests.

We can verify that the rejection rate is below the significance level $\alpha = 0.05$ when $\beta_2 = 0$, and that the SKIT procedure provides Type I error control. Finally, we note that both kernels perform similarly for this test, with the linear kernel generally rejecting less than the RBF one, and with longer rejection times.

\begin{figure}[t]
    \centering
    \begin{subfigure}{0.5\linewidth}
        \includegraphics[width=\linewidth]{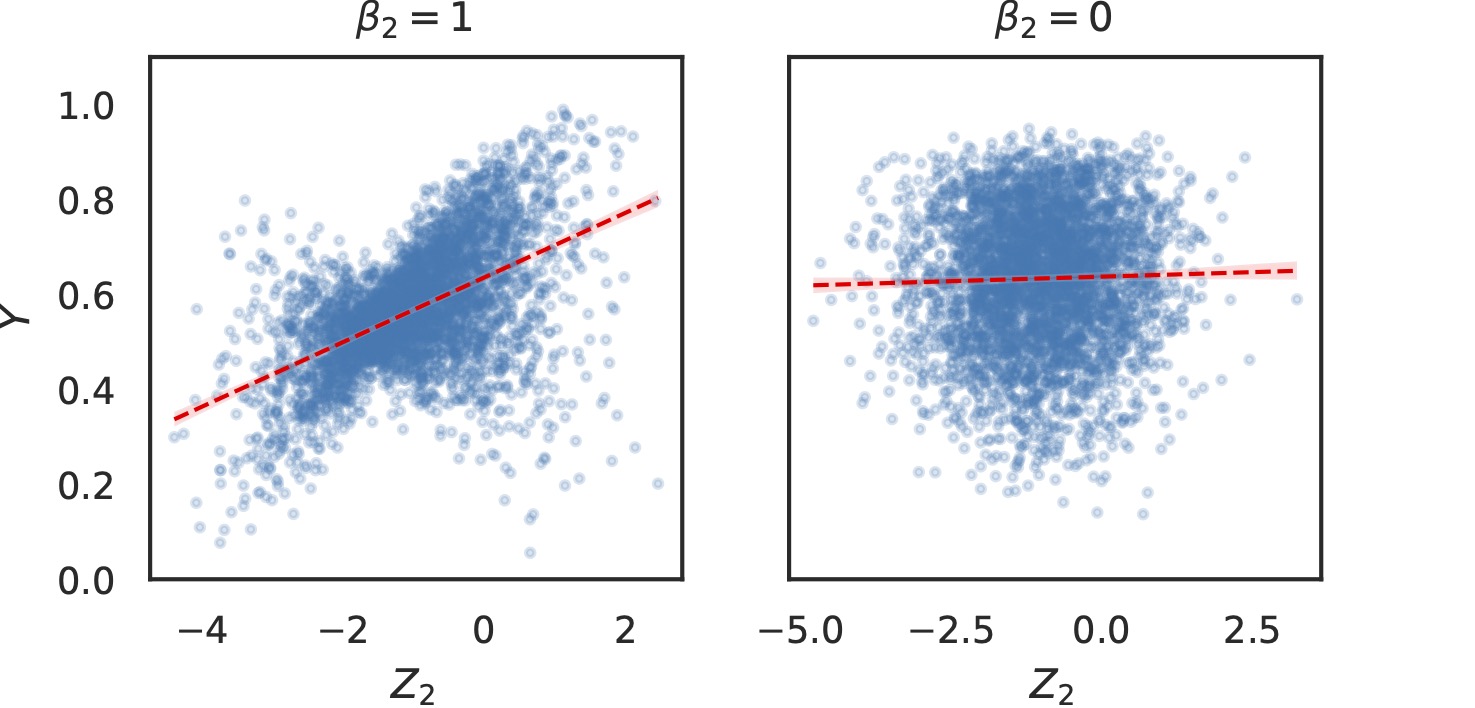}
        \caption{\label{fig:synthetic_global_dist}}
    \end{subfigure}
    \begin{subfigure}{0.3\linewidth}
        \includegraphics[width=\linewidth]{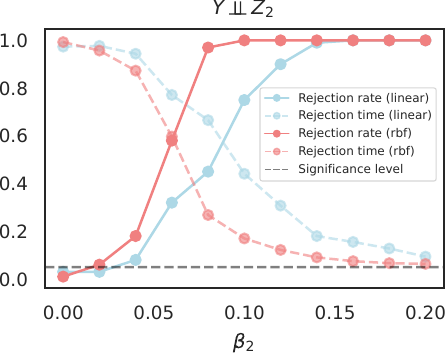}
        \caption{\label{fig:synthetic_global_results}}
    \end{subfigure}
    \captionsetup{subrefformat=parens}
    \caption{\label{fig:synthetic_global}Global importance results for $H_0: Y \indep Z_2$ with SKIT. \subref{fig:synthetic_global_dist} Marginal distributions of $Y$ and $Z_2$ for $\beta_2 = 1$ and $0$, respectively. The red dashed line is the linear regression between the two variables, and, as expected, the slope is $\approx 0$ for $\beta_2 = 0$. \subref{fig:synthetic_global_results} Mean rejection rate and mean rejection time for SKIT with a linear and RBF kernel, as a function of $\beta_2$.}
\end{figure}

\begin{figure}[t]
    \centering
    \begin{subfigure}{0.3\linewidth}
        \includegraphics[width=\linewidth]{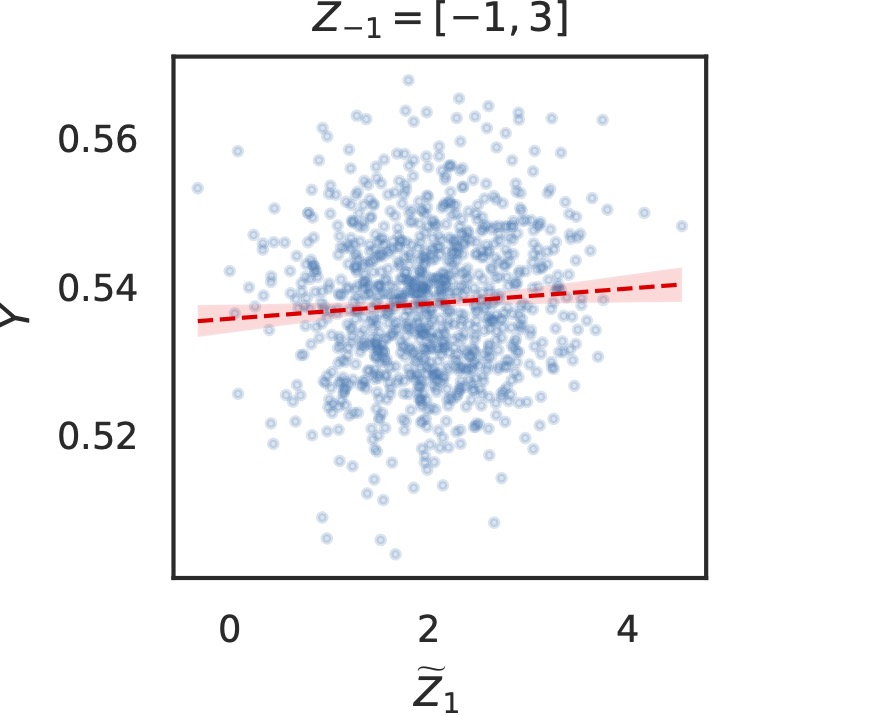}
        \caption{\label{fig:synthetic_global_cond_dist}}
    \end{subfigure}
    \begin{subfigure}{0.31\linewidth}
        \includegraphics[width=\linewidth]{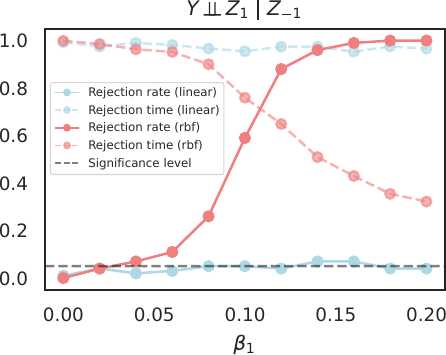}
        \caption{\label{fig:synthetic_global_cond_results}}
    \end{subfigure}
    \captionsetup{subrefformat=parens}
    \caption{\label{fig:synthetic_global_cond}Global conditional importance results for $H_0: Y \indep Z_1 \mid Z_{-1}$ with $\cSKIT$. \subref{fig:synthetic_global_cond_dist} $\tZ_1 \sim P_{Z_1 | Z_{-1}}$ is independent of $Y$ for $Z_{-1}=[-1,3]$. As expected, the slope of the linear regression between $Y$ and $\tZ_1$ is $\approx 0$. \subref{fig:synthetic_global_cond_results} Mean rejection rate and mean rejection time for $\cSKIT$ with a linear and RBF kernel, as a function of $\beta_1$.}
\end{figure}

\subsubsection{Global Conditional Importance with $\cSKIT$}
Then, we test that
\begin{equation}
    \beta_1 = 0 \implies Y \indep Z_1 \mid Z_{-1}
\end{equation}
with $\cSKIT$ (\cref{algo:cskit}). We remark that we can sample from the exact conditional distribution $P_{Z_1 | Z_{-1}} = P_{Z_1 | Z_{\{2,3\}}}$ because $Z_2$ is independent of $Z_1$ by construction, and the conditional $P_{Z_1 | Z_3}$ can be computed analytically as shown in \cref{eq:synthetic_conditional}. We verify the conditional distribution behaves as expected in \cref{fig:synthetic_global_cond_dist}. By construction, $\tZ_1$ is sampled without looking at $Y$, hence it is independent, and, as expected, the slope of the linear regression (red dashed line) is $\approx 0$. \cref{fig:synthetic_global_cond_results} shows mean rejection rate and time as a function of $\beta_1$. First and foremost, we can see that in this case the linear kernel always fails to reject---independently of the value of $\beta_1$. This behavior highlights an important aspect of all kernel-based tests, that is the kernel needs to be \emph{characteristic} for the mean embedding to be an injective function \cite{sriperumbudur2010hilbert,fukumizu2007kernel}. If this condition is not satisfied, different probability distributions may have the same mean embedding in the RKHS, and it may not be possible to disambiguate them at all. Consequently, the test will not be consistent, and increasing $\tau^{\max}$ will not increase power. For the RBF kernel---which satisfies the characteristic property for probability distributions on $\R^d$---the test is valid (i.e., it provides Type I error control for $\beta_1 = 0$), and it is adaptive to the strength of the conditional dependence structure---as $\beta_1$ increases, the rejection time decreases.

\begin{figure}[t]
    \centering
    \begin{subfigure}{0.5\linewidth}
        \includegraphics[width=\linewidth]{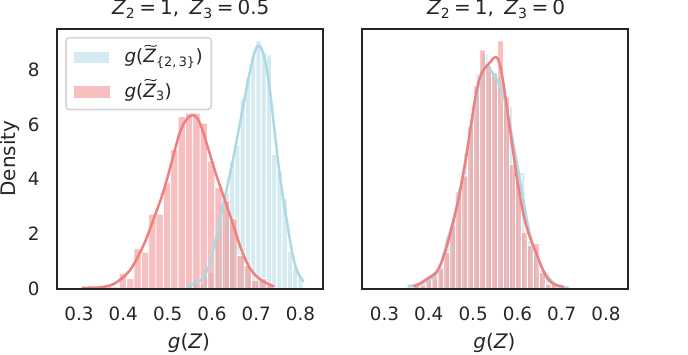}
        \caption{\label{fig:synthetic_local_cond_dist}}
    \end{subfigure}
    \begin{subfigure}{0.33\linewidth}
        \includegraphics[width=\linewidth]{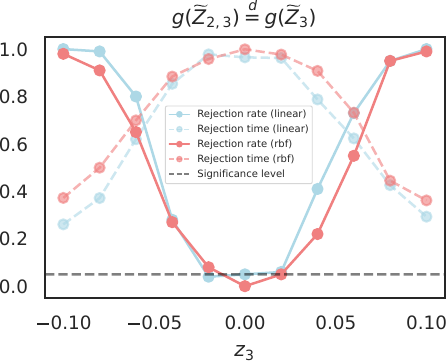}
        \caption{\label{fig:synthetic_local_cond_results}}
    \end{subfigure}
    \captionsetup{subrefformat=parens}
    \caption{\label{fig:local_cond}Local conditional importance results for $H_0: g(\tZ_{\{2,3\}}) \overset{d}{=} g(\tZ_3)$ with $\xSKIT$. \subref{fig:synthetic_local_cond_dist} Shows that, as expected, the test and null distributions overlap when $z_3 = 0$.}
\end{figure}

\subsubsection{Local Conditional Importance with $\xSKIT$}
Finally, we test that for a fixed $z$, 
\begin{equation}
    z_3 = 0 \implies g(\tZ_{\{2,3\}}) \overset{d}{=} g(\tZ_3),~\tZ_C \sim P_{Z \mid Z_C = z_C},~\forall C \subseteq [m].
\end{equation}
with $\xSKIT$ (\cref{algo:xskit}). That is---because of the multiplicative term $z_2z_3$ in $g$---the observed value of $z_2$ does not change the distribution of the response of the model when $z_3 = 0$. \cref{fig:synthetic_local_cond_dist} shows the test (i.e., $g(\tZ_{\{2,3\}})$) and null (i.e., $g(\tZ_3)$) distributions for different values of $z_3$ when $z_2=1$. As expected, we can see that when $z_3 = 0$, the two distributions overlap, whereas when $z_3 = 0.5$, the test distribution is slightly shifted to the right. \cref{fig:synthetic_local_cond_results} shows results of $\xSKIT$ with both a linear and RBF kernel as a function of $z_3$. We use both positive and negative values of $z_3$ to show that $\xSKIT$ has a two-sided alternative, i.e. it rejects both when the test distribution is to the right and to the left of the null. We can see that both the linear and RBF kernel provide Type I error control for when $z_3 = 0$, and that their rejection times adapt to the hardness of the problem.

Now that we have illustrated all tests in arguably the simplest setting, we move onto a synthetic dataset where the response is learned by means of a neural network.

\subsection{Counting MNIST Digits}
In this section, we test the semantic importance structure of a neural network trained to count numbers in synthetic images assembled by placing digits from the MNIST dataset \cite{lecun1998gradient} in a $4 \times 4$ grid. \cref{fig:counting_data_generation} depicts the data-generating process, which satisfies:
\begin{itemize}
    \item Blue zeros, orange threes, blue twos, and purple sevens are sampled independently with
    \begin{align}
        Z_{\text{blue zeros}} \sim \U(\{0, 1, 2\}) + \e&&
        Z_{\text{orange threes}} \sim \U(\{0, 1, 2\}) + \e\\
        Z_{\text{blue twos}} \sim \U(\{1,2\}) + \e&&
        Z_{\text{purple sevens}} \sim \U(\{1,2\}) + \e
    \end{align}
    \item Green fives are sampled conditionally on blue zeros with
    \begin{equation}
        Z_{\text{green fives}} \mid Z_{\text{blue zeros}} \sim
        \text{Cat}
        \left(\{1,2,3\},
        \begin{cases}
            [3/4, 1/8, 1/8]  &\text{if}~N_{\text{blue zeros}} = 0\\
            [1/8, 3/4, 1/8]  &\text{if}~N_{\text{blue zeros}} = 1\\
            [1/8, 1/8, 3/4]  &\text{if}~N_{\text{blue zeros}} = 2\\
        \end{cases}
        \right) + \e.
    \end{equation}
    That is, the number of blue zeros changes the probability distribution of green fives over 1,2,3.
    \item Red threes are sampled conditionally on both orange threes and green fives with
    \begin{gather}
        Z_{\text{red threes}} \mid Z_{\text{orange threes}}, Z_{\text{green fives}} \sim 2 + \text{Bernoulli}(p) + \e,\\
        p = 
        \begin{cases}
            \alpha      &\text{if}~N_{\text{orange threes}}N_{\text{green fives}} \geq 3\\
            1 - \alpha  &\text{otherwise}
        \end{cases}
        ,~\alpha = 0.9.
    \end{gather}
    That is, the product of the number of orange threes and green fives changes the distribution of red threes over 1,2.
\end{itemize}
Finally, we remark that $N$ denotes the nearest integer to $Z$, and $\e$ is independent uniform noise (i.e., $\e \sim \U((-0.5,0.5))$) to make the distribution of the concepts continuous. To summarize, in order to generate images, we first sample the concepts $Z$ according to the distribution above, round their values to their respective nearest integers $N$, and finally randomly place digits from the MNIST dataset in a $4 \times 4$ grid according to their number. Note that this data-generating process adds color to the original black and white MNIST digits, and that color matters for the counting task since there are both orange and red threes.

\begin{figure}[t]
    \centering
    \begin{minipage}{0.5\linewidth}
    \begin{tikzpicture}
        \node[fill=white] (n1) {blue zeros};
        \node[fill=white] (n2) [right = 1.5 of n1] {orange threes};
        \node[fill=white] (n3) [right = 1.5 of n2] {green fives};
        \node[fill=white] (n4) [below = of n1] {red threes};
        \node[fill=white] (n5) [below = of n2] {blue twos};
        \node[fill=white] (n6) [below = of n3] {purple sevens};
        \node (x) [below = of n5] {\includegraphics[width=0.20\linewidth]{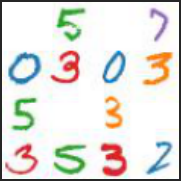}};

        \begin{scope}[on background layer]
            \path (n1) edge[bend left = 20] (n3);
            \path (n2) edge (n4);
            \path (n3) edge (n4);

            \path (n1) edge (x);
            \path (n2) edge (x);
            \path (n3) edge (x);
            \path (n4) edge (x);
            \path (n5) edge (x);
            \path (n6) edge (x);
        \end{scope}
    \end{tikzpicture}
    \end{minipage}
    \caption{\label{fig:counting_data_generation}Pictorial representation of the data-generating process for the counting dataset.}
\end{figure}

We remark that, with the data generating process above, we can sample from the true conditional distribution of the digits, and, consequently, of images. We omit details on the conditional distribution for the sake of presentation (all code necessary to reproduce experiments will be included). We stress that this differs slightly from the general setting presented in this paper, where we consider both an encoder $f$ and a classifier $g$ such that $\y = g(f(x))$, and we sample from the conditional distribution of the dense embeddings $H$ given any subset of concepts (i.e., $P_{H | Z_C}$). The scope of this experiment is to showcase the effectiveness of our tests when the response is parametrized by a complex, nonlinear, learned predictor, hence we train a neural network such that $\y = f(x)$ and directly sample from the conditional distribution of images given any subset of digits (i.e., $P_{X | Z_C}$). 

We sample a training dataset of 50,000 images and train a ResNet18 \cite{he2016deep} to predict the number of all digits for 6 epochs with batch size of 64 and Adam optimizer \cite{kingma2014adam} with learning rate equal to $10^{-4}$, weight decay of $10^{-5}$, and a scheduler that halves the learning rate every other epoch (recall that the model needs to learn to disambiguate red and orange threes, so color matters). To evaluate the model, we round predictions to the nearest integer and compute accuracy on a held-out set of 10,000 images from the same distribution (we use the original train and test splits of the MNIST dataset to guarantee no digits showed during training are included in test images), and the model achieves an accuracy greater than $99\%$.

Herein, we study the semantic importance structure of the \emph{predicted} number of red threes with respect to the \emph{predicted} number of other digits. Note that the ground-truth distribution satisfies the following conditions:
\begin{enumerate}
    \item Red threes are independent of blue twos and purple sevens, i.e.
    \begin{align}
        Z_{\text{red threes}} \indep Z_{\text{blue twos}}&&
        \text{and}&&
        Z_{\text{red threes}} \indep Z_{\text{purple sevens}}.
    \end{align}
    \item Red threes are independent of blue zeros conditionally on green fives, i.e.
    \begin{equation}
        Z_{\text{red threes}} \indep Z_{\text{blue zeros}} \mid Z_{\text{green fives}}.
    \end{equation}
    \item If---in a specific image---there are no orange threes, then red threes are independent of green fives, i.e.
    \begin{equation}
        Z_{\text{red threes}} | (N_{\text{green fives}} = n_{\text{green fives}}, N_{\text{orange threes}} = 0) \overset{d}{=} Z_{\text{red threes}} | N_{\text{orange threes}} = 0.
    \end{equation}
\end{enumerate}

\subsubsection{Global Importance with SKIT}
We start by testing whether the predictions of the model satisfy the ground-truth condition
\begin{align}
    Z_{\text{red threes}} \indep Z_{\text{blue twos}}&&
    \text{and}&&
    Z_{\text{red threes}} \indep Z_{\text{purple sevens}}&&
\end{align}
with SKIT (\cref{algo:skit}). We remark that at inference, we round predictions to the nearest integer and add independent uniform noise $\e \sim \U((-0.5, 0.5))$ to make the distribution of the response of the model continuous. \cref{fig:counting_dist} shows the ground-truth distribution of red threes as a function of other digits in the held-out set, and the kernel density estimation of the predictions of the model. As expected, we can see that the ground-truth distribution is marginally dependent on blue zeros, orange threes, and green fives, but it is independent of blue twos and purple sevens.

\begin{figure}[t]
    \centering
    \includegraphics[width=\linewidth]{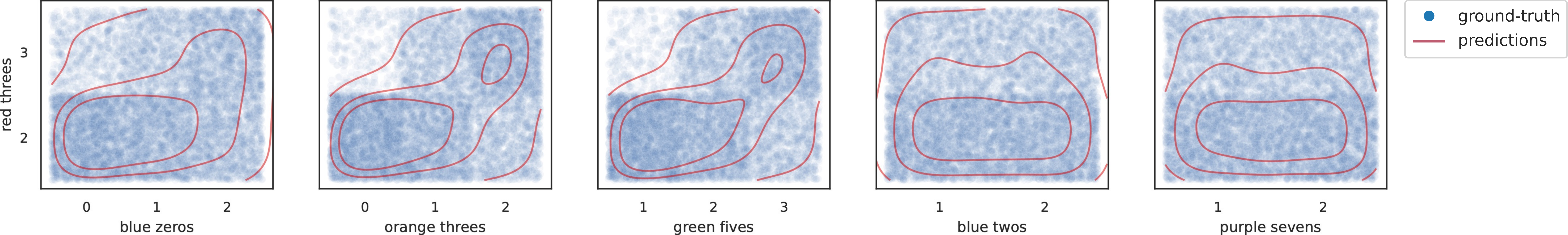}
    \caption{\label{fig:counting_dist}Distribution of ground-truth data and density estimation of the predictions of the trained model for the validation data in the counting digits experiments.}
\end{figure}

We repeat all tests 100 times with both linear and RBF kernels with bandwidth set to the median of the pairwise distances of previous observations. We perform tests on independent draws of data of size $\tau^{\max} \in \{100, 200, 400, 800, 1600\}$ from the validation set, and study the rank of importance as a function of $\tau^{\max}$, i.e. the amount of data available to test. \cref{fig:counting_global} includes mean rejection rate and mean rejection time for each digit, and the rank of importance of digits as a function of $\tau^{\max}$. We can see that---as expected---both linear and RBF kernels successfully control Type I error for ``blue twos'' and ``purple sevens'', and this confirms that the distribution of the predictions of the model agrees with the underlying ground-truth data-generating process. Furthermore, we can see that the rank of importance is stable across different values of $\tau^{\max}$, with purple sevens and blue twos consistently ranked last.

\subsubsection{Global Conditional Importance with $\cSKIT$}
Then, we test whether the predictions of the model satisfy the ground-truth conditional independence condition
\begin{equation}
    Z_{\text{red threes}} \indep Z_{\text{blue zeros}} \mid Z_{\text{green fives}}
\end{equation}
with $\cSKIT$. Analogous to above, we repeat all tests 100 times with linear and RBF kernels, and \cref{fig:counting_global_cond} include results for $\tau^{\max} \in \{100, 200, 400, 800, 1600\}$. Here---similarly to the synthetic experiment presented in \cref{sec:synthetic}---we can see that the linear kernel almost always fails to reject, i.e. the mean rejection rates for all digits are close to 0. As discussed earlier, this behavior is due to the fact that the linear kernel is not characteristic for the distributions. On the other hand, the RBF is, and, as expected, it is consistent and it provides Type I error control for the null hypothesis that red threes are independent of blue zeros conditionally on all other digits, which is true. Furthermore, we can see that the rank of importance is less stable compared to the one in \cref{fig:synthetic_global}, and in particular, $\tau^{\max} = 100$ seems not to be sufficient to retrieve the correct ground-truth structure (i.e., blue twos are ranked before green fives). This highlights how the amount of data available for testing may affect results and findings.

\begin{figure}[t]
    \centering
    \begin{subfigure}{0.47\linewidth}
        \includegraphics[width=\linewidth]{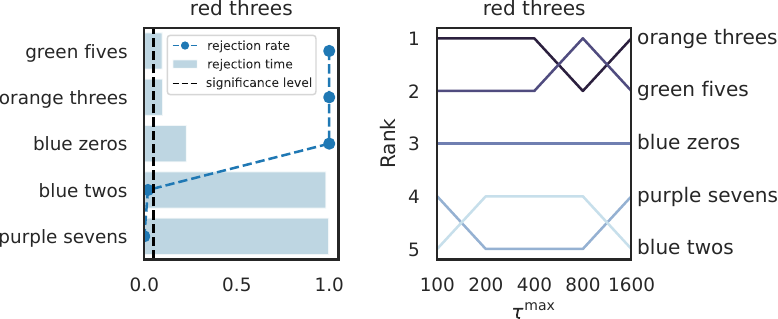}
        \caption{Linear kernel.}
    \end{subfigure}
    \hfill
    \begin{subfigure}{0.47\linewidth}
        \includegraphics[width=\linewidth]{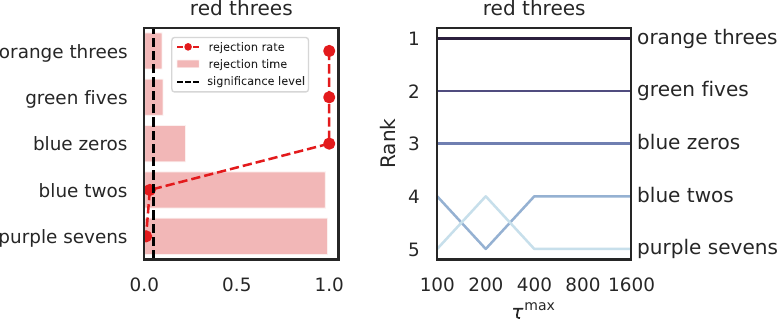}
        \caption{RBF kernel.}
    \end{subfigure}
    \caption{\label{fig:counting_global}Global semantic importance results for the predicted number of red threes with linear and RBF kernels. In each subfigure, the leftmost panel shows mean rejection rate and mean rejection time over 100 tests with $\alpha=0.05$ and $\tau^{\max} = 800$. The rightmost panel shows the rank of importance of digits for the prediction of red threes as a function of $\tau^{\max}$.}
\end{figure}

\begin{figure}[t]
    \centering
    \begin{subfigure}{0.47\linewidth}
        \includegraphics[width=\linewidth]{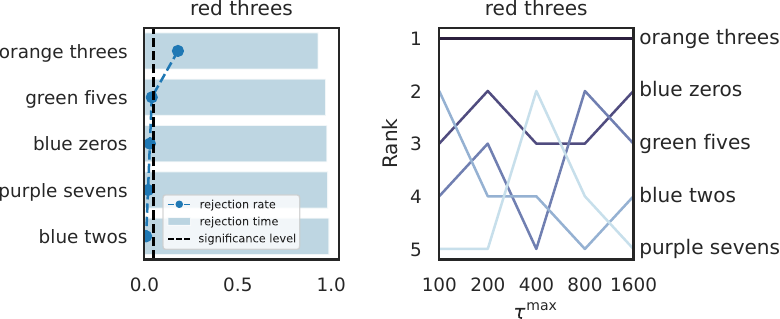}
        \caption{Linear kernel.}
    \end{subfigure}
    \hfill
    \begin{subfigure}{0.47\linewidth}
        \includegraphics[width=\linewidth]{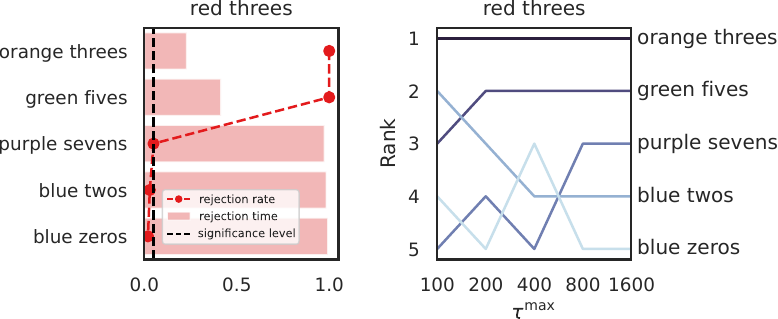}
        \caption{RBF kernel.}
    \end{subfigure}
    \caption{\label{fig:counting_global_cond}Global conditional semantic importance results for the predicted number of red threes with linear and RBF kernels. In each subfigure, the leftmost panel shows mean rejection rate and mean rejection time over 100 tests with $\alpha=0.05$ and $\tau^{\max}=800$. The rightmost panel shows the rank of importance of digits for the prediction of red threes as a function of $\tau^{\max}$.}
\end{figure}

\subsubsection{Local Conditional Importance with $\xSKIT$}
Finally, we test whether the predictions of the model satisfy the ground-truth condition that, for a particular image, if there are no orange threes (i.e., $n_{\text{orange threes}} = 0$) then red threes are independent of the observed green fives (i.e., $n_\text{green fives}$), i.e.
\begin{equation}
    Z_{\text{red threes}} | (N_{\text{green fives}} = n_{\text{green fives}}, N_{\text{orange threes}} = 0) \overset{d}{=} Z_{\text{red threes}} | N_{\text{orange threes}} = 0.
\end{equation}
We remark that, in the equation above, conditioning is written in terms of the integer $n_{\text{orange threes}} = 0$ because of its intuitive meaning, and that this is equivalent to conditioning on $z_{\text{orange threes}} \in (-0.5, 0.5)$. Similarly, we could replace $n_{\text{green fives}}$ with $z_{\text{green fives}}$, and, in practice, we run tests conditioning on the observed concepts $z$, and not their integer values $n$.

\begin{figure}[t]
    \centering
    \begin{subfigure}{0.47\linewidth}
        \includegraphics[width=\linewidth]{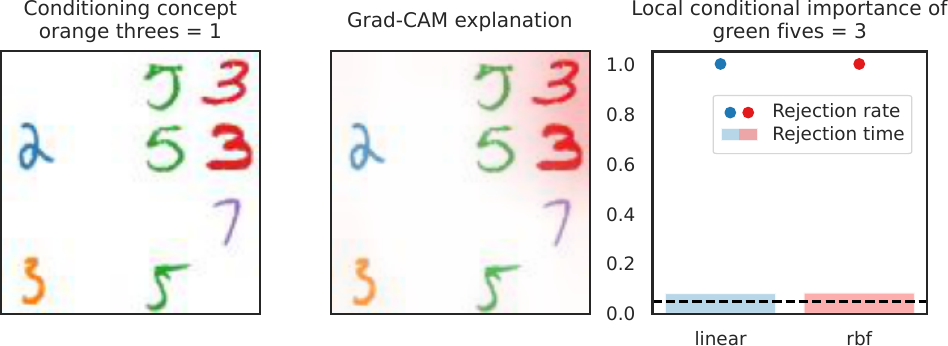}
        \caption{Results for $n_{\text{orange threes}} = 2$.}
    \end{subfigure}
    \hfill
    \begin{subfigure}{0.47\linewidth}
        \includegraphics[width=\linewidth]{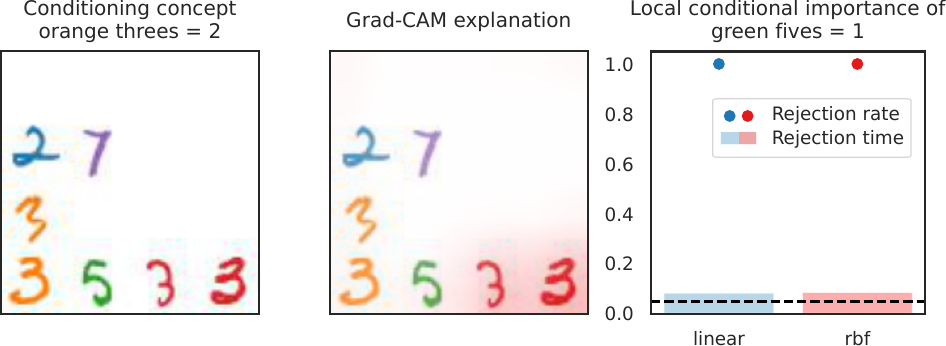}
        \caption{Results for $n_{\text{orange threes}} = 1$.}
    \end{subfigure}
    \par\bigskip
    \begin{subfigure}{0.47\linewidth}
        \includegraphics[width=\linewidth]{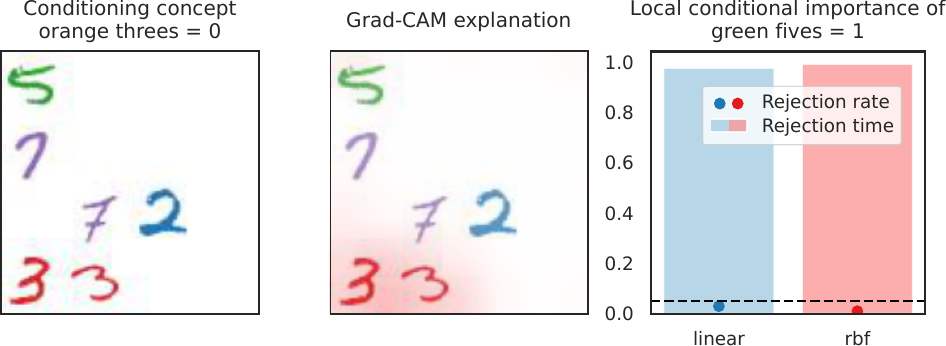}
        \caption{Results for $n_{\text{orange threes}} = 0$.}
    \end{subfigure}
    \caption{\label{fig:counting_local_cond}Local conditional importance of $N_{\text{green fives}}$ conditionally on $N_{\text{orange threes}}$. Each row contains the input image, the Grad-CAM explanation for the prediction of the model, and $\xSKIT$ results for 100 repetitions of the test with $\tau^{\max} = 400$, with a linear and RBF kernel. Note that $\xSKIT$ finds the observed number of green fives important whenever the number of orange threes is greater than zero, whereas Grad-CAM does not.}
\end{figure}

We use $\xSKIT$ (\cref{algo:xskit}) with a linear and RBF kernel with bandwidth set to the median of the pairwise distances of previous observations. We repeat all tests 100 times on individual images with 0, 1, and 2 orange threes, significance level $\alpha = 0.05$, and $\tau^{\max} = 400$. \cref{fig:counting_local_cond} shows results grouped by number of orange threes. As expected, we see that when $n_{\text{orange threes}}$ is grater than 0, the number of green fives is important for the predictions of the model (i.e., rejection rate is close to 1, with short rejection rate), whereas when there are no orange threes in the image, both the linear and RBF kernel control Type I error. We qualitatively compared our findings with pixel-level explanations with Grad-CAM \cite{selvaraju2017grad}, and we can see that they only highlight red threes because that is the digit we are explaining the prediction of. That is, pixel-level explanations cannot convey the full spectrum of semantic importance for the predictions of a model---which can be misleading to users. For example, in this case, a user may not understand when the predictions of a model depend on the number of green fives, because they are never highlighted by pixel-level saliency maps. In real-world scenarios, digits may be replaced by sensitive attributes that cannot be inferred by the raw value of pixels. For example, a saliency map highlighting a face does not convey which attributes were used by the model, such as skin color, biological sex, or gender. It is immediate to see how being able to investigate the dependencies of the predictions of a model with respects to these attributes (which our definitions provide) is paramount for their safe deployment.

With these results on synthetic datasets, we now showcase the flexibility of our proposed tests on zero-shot image classification with CLIP \cite{radford2021learning}.

\section{\label{sec:real_experiments}Real-world Experiments}
In this section, we showcase the flexibility and effectiveness of our framework on zero-shot image classification on three real-world datasets: animals with attributes 2 (AwA2) \cite{xian2018zero}, CUB-200-2011 (CUB) \cite{wah2011caltech}, and the Imagenette subset of ImageNet \cite{deng2009imagenet} (available at {\scriptsize\url{https://github.com/fastai/imagenette}}). We compare performance and transferability of the ranks of importance across 8 different vision-language models, both CLIP- and non-CLIP-based. For all experiments, $f$ is the image encoder of the vision-language model, and $g$ is the (linear) zero-shot classifier constructed by encoding \emph{``A photo of a \texttt{<CLASS\_NAME>}''} with the text encoder.

Recall that, in order to run our $\cSKIT$ and $\xSKIT$ tests, we need to sample from $P_{Z_j | Z_{-j}}$ and $P_{H \mid Z_C = z_C}$, $C \subseteq [m]$, respectively. These distributions are generally not known, and here---since $m$ is small---we propose to use nonparametric methods to estimate them from data. We refer interested readers to \cref{supp:samplers} for details. Herein, we will always use RBF kernels to estimate the appropriate MMD in each test, and we will set their scales to specific quantiles of the pairwise distances of the observations. We repeat each test 100 times on independent draws of $\tau^{\max}$ samples, and estimate each concept’s \emph{expected rejection time} and \emph{expected rejection rate} at a significance level of $\alpha = 0.05$ with the FDR post-processor described in \cref{algo:fdr_control}. That is, a (normalized) rejection time equal to 1 means failing to reject in $\tau^{\max}$ steps. We briefly list the models we use and the way we evaluate agreement among them.

\begin{description}
    \item[List of vision-language models:] CLIP:RN50,ViT-B/32,ViT-L/14 \cite{radford2021learning}, OpenClip:ViT-B-32,ViT-L-14 \cite{ilharco_gabriel_2021_5143773}, FLAVA \cite{singh2022flava}, ALIGN \cite{jia2021scaling}, and BLIP \cite{li2022blip}.
    \item[Evaluating rank agreement:] We use a weighted version of Kendall's tau \cite{kendall1938new} introduced by \citet{vigna2015weighted}, which assigns higher penalties to swaps between elements with higher ranks. This choice reflects the fact that concepts with higher importance should be more stable across different models. We briefly remark that this notion of rank agreement is bounded in $[-1,1]$ ($-1$ indicates reverse order, and $1$ perfect alignment) but not symmetric. 
    \item[Evaluating importance agreement:] We threshold rejection rates at level $\alpha$ to classify concepts into important and not important ones. Then, importance agreement is the accuracy between pairs of binarized vectors.
\end{description}

\noindent We now present results on each dataset.

\subsection{AwA2 Dataset}
The AwA2 dataset comprises 37,322 images (29,841 for training and 7,481 for testing) from 50 animal species with class-level annotations of 85 attributes. Concept annotations are reported both as frequencies (i.e., how often an attribute appears in images coming from a class) and as binary labels (i.e., \verb|1| means that an attribute is present in a class, and \verb|0| otherwise). Given the presence of global (i.e., class-level) labels, we use $\SKIT$ and $\cSKIT$ to test the global (and global conditional) semantic importance structure of the predictions for the top-10 best classified animal categories across models. For each class, we test 20 attributes ($m = 20$): the 10 most frequent, and a random subset of 10 absent ones (see \cref{fig:awa2_images} for some example images and \cref{table:awa2_concepts} for the concepts used in this experiment), and, for each model, we obtain the dictionary $c$ with its text encoder. Tests are run with bandwidths of the kernels set to the $90\th$ percentile of the pairwise distances and $\tau^{\max} = 400, 800$ for $\SKIT$ and $\cSKIT$, respectively.

\begin{table}[t]
    \centering
    \captionof{table}{\label{table:awa2_results}Comparison of $\SKIT$, $\cSKIT$, and PCBM on the AwA2 dataset.}
    \vspace{-10pt}
    \begin{small}
    \begin{tabular}{lccc}
    \toprule
    Importance & Accuracy & Rank agreement & Importance $f_1$ score\\
    \midrule
    $\textbf{\SKIT}$    & $\bm{99.50\% \pm 0.88\%}$ & $\bm{0.50 \pm 0.20}$  & $\bm{0.65 \pm 0.11}$\\
    $\textbf{\cSKIT}$   & $\bm{99.50\% \pm 0.88\%}$ & $0.46 \pm 0.17$       & $0.57 \pm 0.11$\\
    PCBM                & $95.11\% \pm 4.82\%$      & $0.36 \pm 0.22$       & $0.53 \pm 0.14$\\
    \bottomrule
    \end{tabular}
    \end{small}
\end{table}

\cref{table:awa2_results} compares $\SKIT$, $\cSKIT$, and PCBM in terms of classification accuracy, rank agreement, and importance detection $f_1$ score. Intuitively, we would expect concepts that are absent from a particular animal species not to be relevant for the predictions of a model. So, we classify the top-10 concepts reported by each method as important, and we compute the $f_1$ score with the ground-truth annotations. When comparing with PCBM---since we use different concepts for each class---we train 100 independent linear models for each class and rank concepts based on their average absolute weights. We use absolute weights and not signed ones because the null hypotheses presented in this work are two-sided, i.e. a concept is important both if it increases the prediction for a class or if it decreases it. $\SKIT$ and $\cSKIT$ outperform PCBM across all three metrics ($\SKIT$ and $\cSKIT$ have the same accuracy because they do not change the original predictor), with $\SKIT$ providing the highest rank agreement and importance $f_1$ score. Interestingly, ranks provided by our tests are more transferable across models, which suggests the latter may share a similar semantic (conditional) independence structure notwithstanding their embedding size or training strategy. We refer interested readers to \cref{fig:awa2_rank_agreement} for all individual pairwise agreements between models.

\begin{figure}[t]
    \centering
    \subcaptionbox{\label{fig:awa2_results_skit}Global importance with $\SKIT$.}{\includegraphics[width=\linewidth]{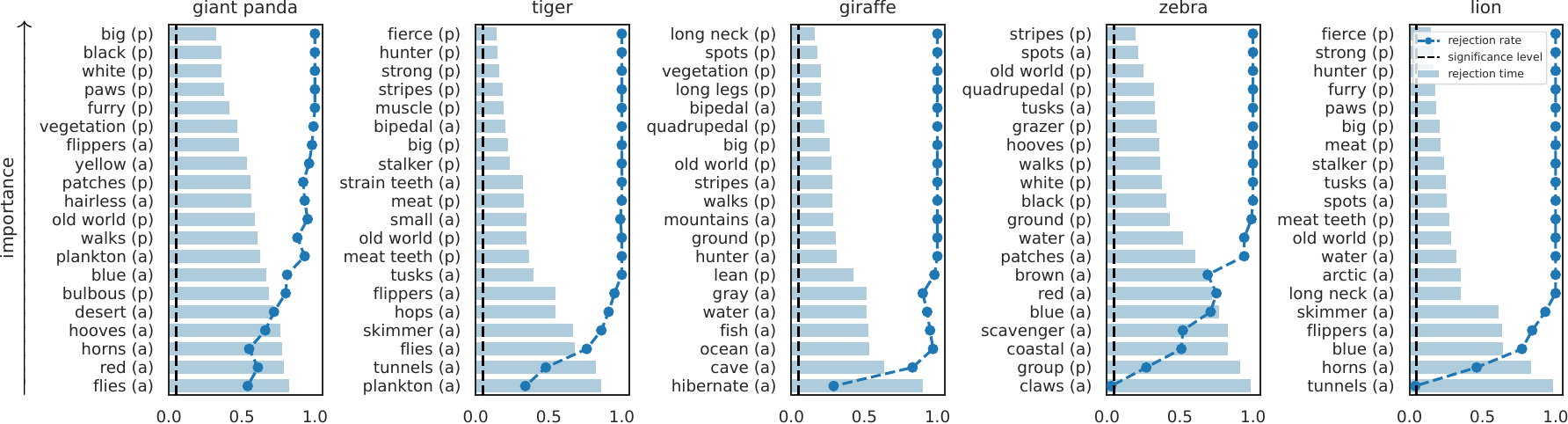}}
    \subcaptionbox{\label{fig:awa2_results_cskit}Global conditional importance with $\cSKIT$.}{\includegraphics[width=\linewidth]{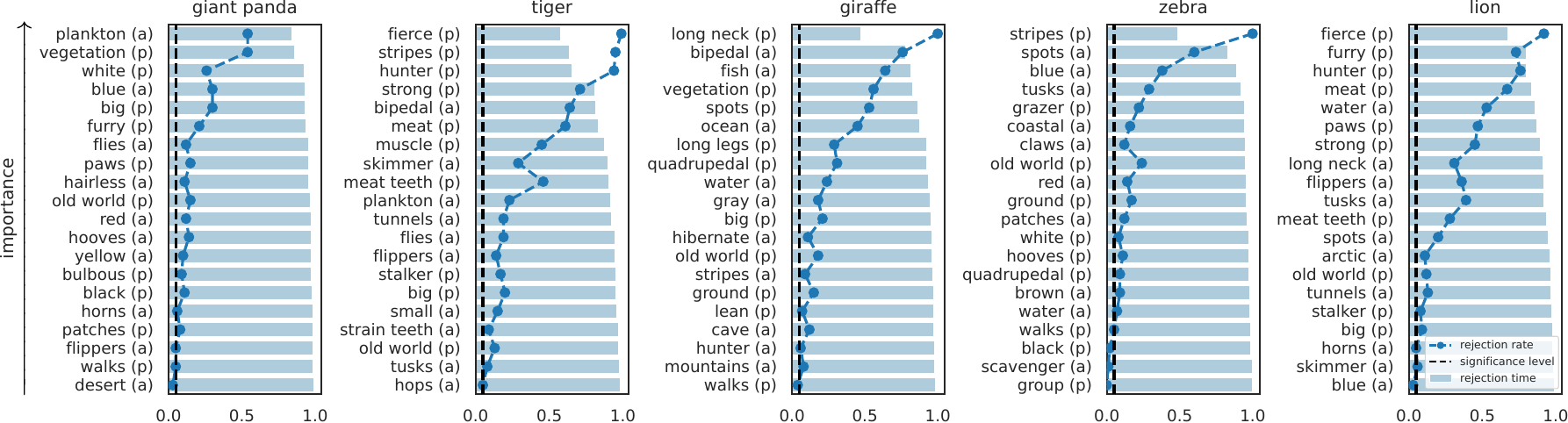}}
    \captionsetup{subrefformat=parens}
    \caption{\label{fig:awa2_results}Example ranks of global marginal and global conditional importance on CLIP:ViT-L/14 for the first 5 test classes in the AwA2 dataset. Concepts are annotated with \emph{(p)} if they are present in the class according to the ground-truth annotations, and with \emph{(a)} otherwise.}
\end{figure}

\begin{figure}
    \centering
    \includegraphics[width=\linewidth]{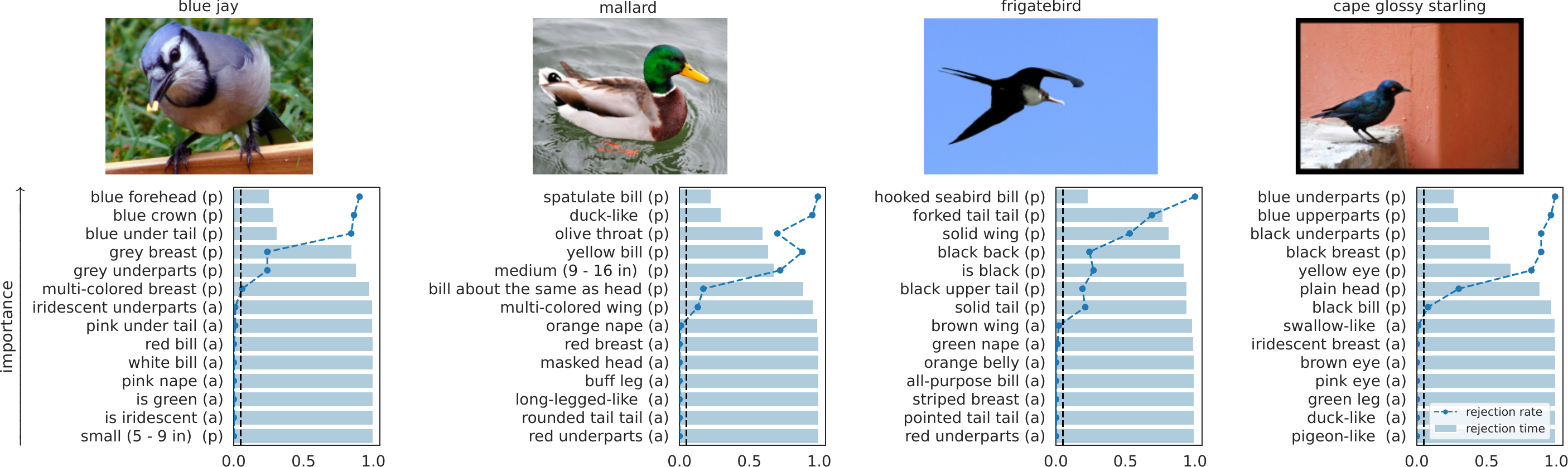}
    \caption{\label{fig:cub_results_main}Example ranks of local conditional importance with $\xSKIT$ ($s = 1$) on CLIP:ViT-L/14 for 4 images from the CUB dataset. Concepts are annotated with \emph{(p)} if they are present in the image according to the ground-truth annotations, and with \emph{(a)} otherwise.}
\end{figure}

Finally, \cref{fig:awa2_results} shows example ranks of global (and global conditional) importance obtained with $\SKIT$ and $\cSKIT$ on CLIP:ViT-L/14 for the first 5 test classes (see \cref{fig:awa2_global_supp,fig:awa2_global_cond_supp} for all test classes). We can see that---in general---concepts are globally important (rejection rates are above the significance level $\alpha$), and that it is harder to reject the global conditional null hypothesis (rejection rates are lower and rejection times larger). This reflects the fact that conditional independence is a stronger condition than marginal independence. Finally, we stress that without sequential tests, it would not be possible to break ties among concepts by rejection rates only and provide a rank of importance---which is a fundamental property of our approach.

\subsection{CUB Dataset}
The CUB-200-2011 dataset \cite{wah2011caltech} contains 11,788 images of 200 different bird classes, and each image is annotated with the presence of 312 fine-grained concepts that describe the appearance of the bird (e.g., ``has orange bill'', ``has hook-shaped bill'', ``is small'') with the labelers' confidence. More formally, the dataset is a collection $\{(x\at{i}, y\at{i}, z\at{i}, u\at{i})\}_{i=1}^n$ of images $x$ with class label $y$, binary semantic vector $z\at{i} \in \{0, 1\}^m$, and uncertainty values $u\at{i} \in \{1, 2, 3, 4\}^m$: ``not visible'' (1), ``guessing'' (2), ``probably'' (3), and ``definitely'' (4). In this experiment, we use $\xSKIT$ to test the conditional semantic importance structure of the predictions of different vision-language models locally on particular images, and we evaluate performance against the ground-truth annotations. We note that the purpose of this experiment is to evaluate the performance of $\xSKIT$, hence we use the ground-truth semantic annotations as an oracle (instead of predicting the presence of concepts with the models' text encoders, as we will do in the following experiment). In practical scenarios, however, ground-truth will not be available and one could, as done by previous work \cite{chattopadhyay2024bootstrapping}, use large language models (LLMs) to answer binary queries (e.g., ``Does this bird have an orange bill? Yes/No''). Finally, we stress that the attributes annotated in the CUB dataset are fine-grained and require domain expertise that general-purpose vision-language models may not have yet acquired. % However, when concepts are unknown \emph{a-priori}, this approach is much more expensive than transforming the embeddings $h$ with a different dictionary $c$ because it involves querying the LLM for all images in the dataset.

\begin{table}
    \centering
    \caption{\label{table:cub_results}$\xSKIT$ results on the CUB dataset as a function of conditioning set size $s$.}
    \vspace{-10pt}
    \resizebox{0.5\linewidth}{!}{%
    \begin{small}
    \begin{tabular}{cccc}
    \toprule
    $s$     & Rank agreement & Importance agreement & Importance $f_1$ score\\
    \midrule
    $1$     & $0.81 \pm 0.14$       & $0.96\% \pm 0.06\%$       & $0.93 \pm 0.15$\\
    $2$     & $0.82 \pm 0.13$       & $\bm{0.97\% \pm 0.06\%}$  & $\bm{0.93 \pm 0.14}$\\
    $4$     & $\bm{0.84 \pm 0.12}$  & $0.95\% \pm 0.08\%$       & $0.88 \pm 0.15$\\
    \bottomrule
    \end{tabular}
    \end{small}}
\end{table}

Similarly to above, we randomly sample 10 images from the 10 classes with highest average accuracy across models and, for each image, we test 14 concepts. In particular, we first restrict ourselves to annotations with good confidence (i.e., ``probably'' or ``definitely''), and then, for each concept $j$, we estimate its marginal (i.e., $p_j \coloneqq \P[Z_j = 1]$) and class-conditional (i.e., $p_{j \mid y} \coloneqq \P[Z_j = 1 \mid Y = y]$) rates over the dataset. Finally, for each test image $x$ with label $y$, we score concepts by the difference between their class-conditional and marginal rates, i.e. $s_j(y) \coloneqq p_{j \mid y} - p_j$. Intuitively, a large $s_j(y)$ indicates that concept $j$ has higher occurrence in class $y$ compared to the population, and we say that it is \emph{discriminative} for class $y$. We test the 7 most discriminative concepts that are present in the observed image $x$, and a random subset of 7 concepts that are absent according to the ground-truth annotations. \cref{table:cub_accuracy} includes the zero-shot accuracy of all models and \cref{fig:cub_images} shows example images used in this experiment. We remark that, since concepts are binary, we do not use KDE-based methods as presented in \cref{supp:samplers}, and instead we approximate $P_{H \mid Z_C = z_C}$ by sampling uniformly from the entries in the dataset that match the conditioning vector $z_C$. We use RBF kernels with bandwidths set to the median of the pairwise distances of observations and $\tau^{\max} = 200$. Finally, note that for each concept $j \in [m]$, there are exponentially many tests with null hypothesis $\LCnull$---one for each subset $S \subseteq [m] \setminus \{j\}$---which are intractable to compute. So, we average results over 100 tests with random subsets with fixed size $s$. \cref{fig:cub_results_main} includes example ranks of importance on CLIP:ViT-L/14 with $s = 1$ for four images in the CUB dataset (see \cref{fig:cub_results_supp} for ranks across all models on the same images).

\cref{table:cub_results} shows average rank and importance agreement alongside importance $f_1$ score across all models as a function of conditioning set size, $s$. We remark that, in this experiment, we classify concepts as important by thresholding their rejection rates at level $\alpha$---which is a statistically-valid way of selecting important concepts. \cref{fig:cub_agreement,table:cub_f1} include all results for each model individually. These results confirm that ranks of importance are well-aligned both across models and with the ground-truth annotations. We note that the $f_1$ score for $s = 4$ (i.e., when conditioning of $4$ concepts) is lower compared to $s = 1, 2$. This is expected, as the more concepts one conditions on, the smaller the effect of including one additional concept.

\begin{figure}[t]
    \centering
    \subcaptionbox{Global importance with $\SKIT$.}{\includegraphics[width=\linewidth]{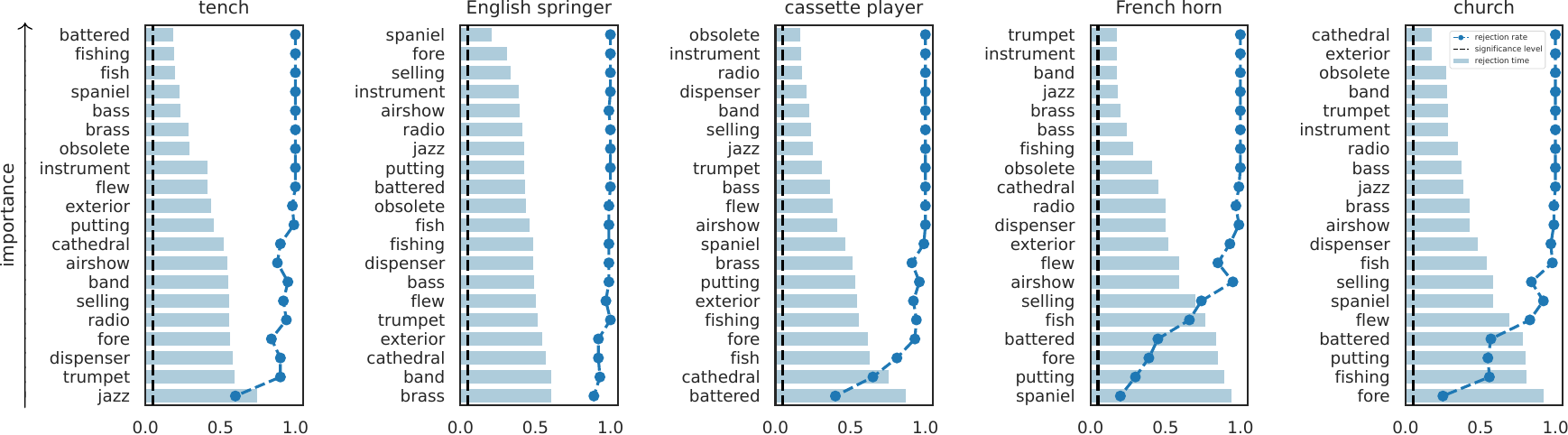}}
    \subcaptionbox{Global conditional importance with $\cSKIT$.}{\includegraphics[width=\linewidth]{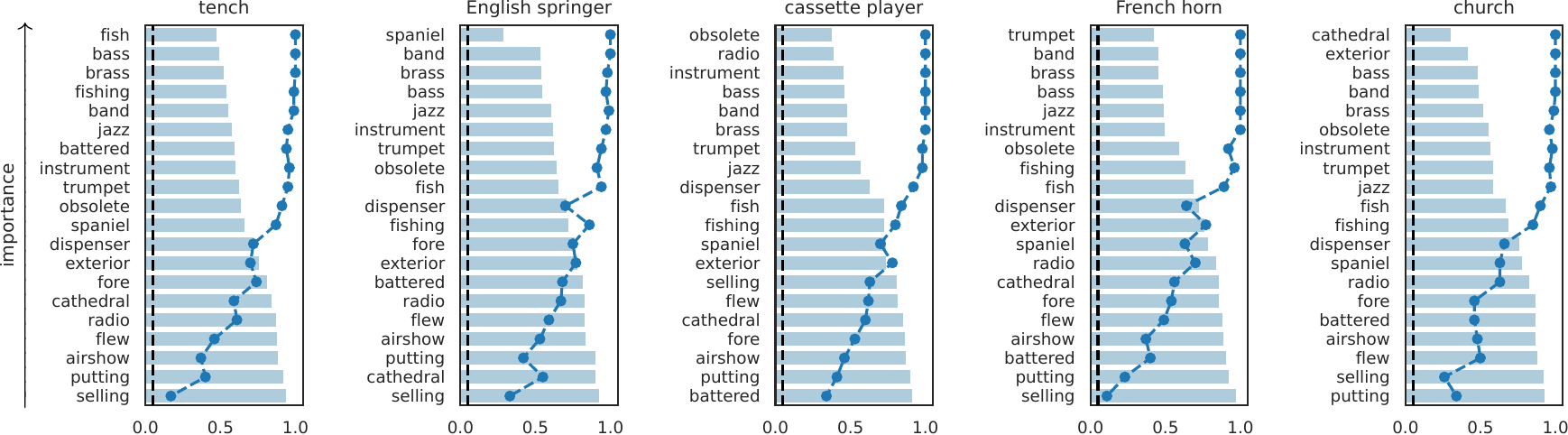}}
    \caption{\label{fig:imagenette_global_results}Example ranks of global marginal and global conditional importance on CLIP:ViT-L/14 for the first 5 classes in the Imagenette dataset.}
\end{figure}

\subsection{\label{sec:clip_imagenette}Imagenette Dataset}
Finally, we use the Imagenette subset of 13,394 images (9,469 for training and 3,925 for testing) from ten easily separable classes in ImageNet \cite{deng2009imagenet}. \cref{fig:imagenette_images} includes example images from the classes in the dataset, and \cref{table:imagenette_accuracy} reports the classification accuracy across all vision-language models ($98.99\% \pm 0.01\%$ average accuracy). We note that---differently from the two previous experiments---Imagenette does not provide ground-truth semantic annotations to evaluate performance with.

We first test the global (and global conditional) importance of semantic concepts with $\SKIT$ and $\cSKIT$. Since there are no ground-truth semantic annotations, we use SpLiCe \cite{bhalla2024interpreting} to encode the training split of the dataset and keep the top-20 words (i.e., $m = 20$) as the concepts to test, but we remark that any user-defined set of concepts would be valid. Following previous work \cite{yuksekgonul2022post}, we filter the selected concepts such that they are different from the classes in the dataset. In particular, we use the 10,000 most frequent words in the vocabulary from the MSCOCO dataset \cite{lin2014microsoft}, and we set the $\ell_1$ regularization term in SpLiCe to $0.20$. Furthermore---after running SpLiCe---we use WordNet \cite{fellbaum1998wordnet} to lemmatize both concepts and class names (e.g., ``churches'' becomes ``church'') and check that concepts are not contained in class names, and vice versa. For example, the concept ``churches'' would be skipped because ``church'' is already the name of a class, and ``gasoline'' would be skipped because it contains part of the class ``gas pump''. For each model, we use its text encoder to obtain the concepts' embedding.

\cref{fig:imagenette_global_results} shows results for the first 5 classes in the Imagenette dataset using RBF kernels with a bandwidth equal to the $90\th$ percentile of the pairwise distances, and $\tau^{\max} = 400,800$ for $\SKIT$ and $\cSKIT$, respectively (we refer interested readers to \cref{fig:imagenette_global_supp,fig:imagenette_global_cond_supp} for ranks of all classes). Analogously to the experiment on the AwA2 dataset, we can see that rejection rates are lower for tests of conditional independence (i.e., $\cSKIT$) compared to marginal (i.e., $\SKIT$). We evaluate rank agreement across all models and compare with PCBM (which achieves a validation accuracy of $95.85\% \pm 0.21\%$ across 100 independent duplicates). We found an average rank agreement of $0.51 \pm 0.21$ for $\SKIT$, $0.54 \pm 0.19$ for $\cSKIT$, and $0.45 \pm 0.21$ for PCBM (see \cref{fig:imagenette_rank_agreement} for all pairwise values). These results confirm that not only are the ranks produced by our tests more transferable across models, but also they retain the performance of the original classifier. Finally, we qualitatively study the stability of $\SKIT$ and $\cSKIT$ ranks as a function of $\tau^{\max}$. This is important because $\tau^{\max}$ is the number of samples available for testing, and it represents the sample complexity of our tests (see \cref{fig:imagenette_global_tau,fig:imagenette_global_cond_tau} for results). Results suggest that high rank positions (i.e., more important concepts) are more stable than low rank ones (i.e., less important concepts) across different values of $\tau^{\max}$, and that $\SKIT$ ranks are more stable than $\cSKIT$ ones with smaller values of $\tau^{\max}$.

\begin{table}
    \captionof{table}{\label{table:imagenette_results}$\xSKIT$ results on the Imagenette dataset as a function of conditioning set size $s$.}
    \vspace{-10pt}
    \centering
    \resizebox{0.4\linewidth}{!}{%
    \begin{small}
    \begin{tabular}{ccc}
    \toprule
    $s$     & Rank agreement & Importance agreement\\
    \midrule
    $1$     & $\bm{0.59 \pm 0.21}$   & $\bm{0.71 \pm 0.14}$\\
    $2$     & $0.56 \pm 0.21$   & $0.67 \pm 0.13$\\
    $4$     & $0.53 \pm 0.23$   & $0.68 \pm 0.14$\\
    \bottomrule
    \end{tabular}
    \end{small}
    }
\end{table}

Lastly, we use $\xSKIT$ to test the local conditional semantic importance of the prediction of the  models on 2 images for each class in the Imagenette dataset. Here, we use SpLiCe to encode each image and obtain its top-10 words, and then add the bottom-4 concepts according to PCBM, for a total of 14 concepts per image. Recall that we use nonparametric samplers to approximate $P_{H \mid Z_C = z_c}$ (see \cref{supp:samplers}), so the cost of using image-specific concepts boils down to projecting the feature embeddings with a different matrix $c$, which is negligible compared to running the tests. We note that parametric generative models---such as variational autoencoders or diffusion models---would have required retraining for each set of concepts, which is expensive. \cref{table:imagenette_results} summarizes rank and importance agreement as a function of conditioning set size for RBF kernels with bandwidths set to the median of the pairwise distances, and $\tau^{\max} = 200$ (see \cref{fig:imagenette_local_agreement} for all pairwise comparisons). Overall, ranks are well-aligned across models, with $s = 1$ as the optimal conditioning set size (rank agreement $0.59 \pm 0.21$ and importance agreement $0.71 \pm 0.14$). First, we note that ranks of local conditional importance with $\xSKIT$ are slightly better-aligned compared to global (and global conditional) importance with $\SKIT$ and $\cSKIT$. Furthermore, both rank and importance agreements are lower compared to the CUB dataset. This is because $(i)$ in the CUB dataset, we use the ground-truth binary annotations as semantics $z$ instead of predicting them with the models' text encoders, and $(ii)$ here, we need to approximate the conditional distribution $P_{H \mid Z_C = z_C}$. Furthermore, since we do not have access to ground-truth semantic annotations to evaluate importance detection $f_1$ score, we plan to design user-studies to quantitatively evaluate the alignment of the produced ranks with human intuition, and we regard this as future work. Importantly, we remark that no other method is currently available to execute such user-studies.

\begin{figure}
    \centering
    \subcaptionbox{English springer.}{\includegraphics[width=0.3\linewidth]{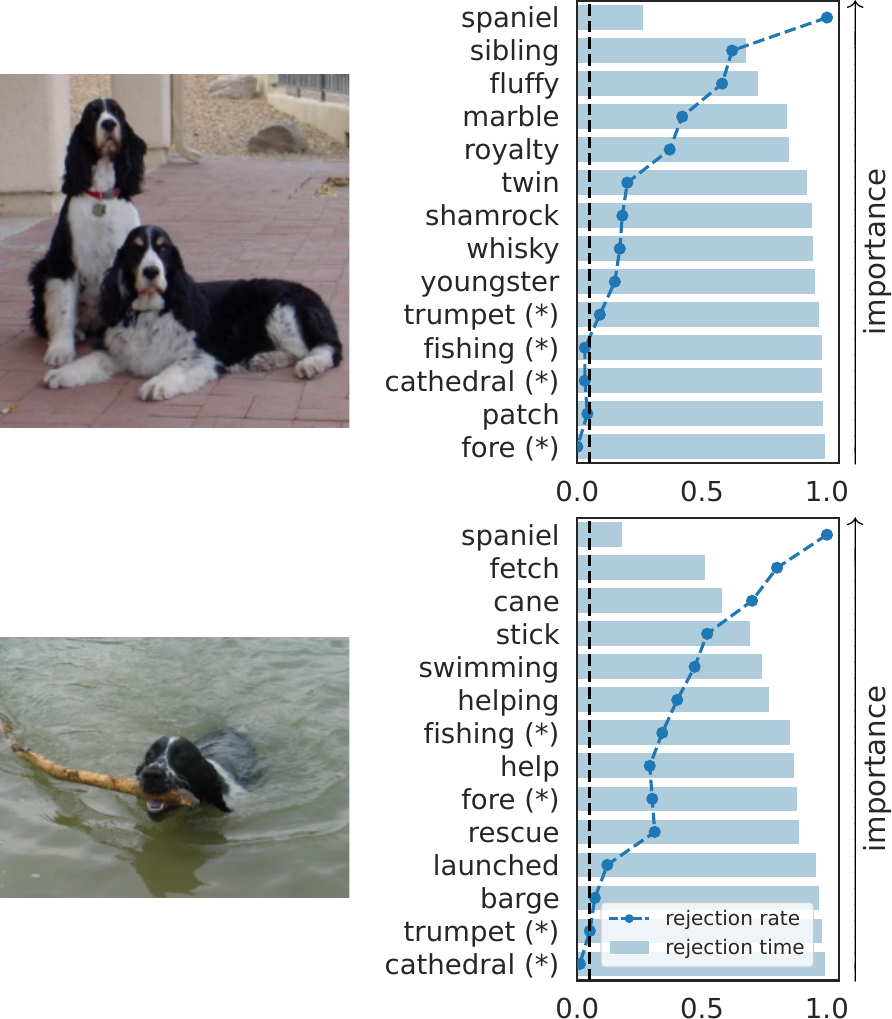}}
    \subcaptionbox{French horn.}{\includegraphics[width=0.3\linewidth]{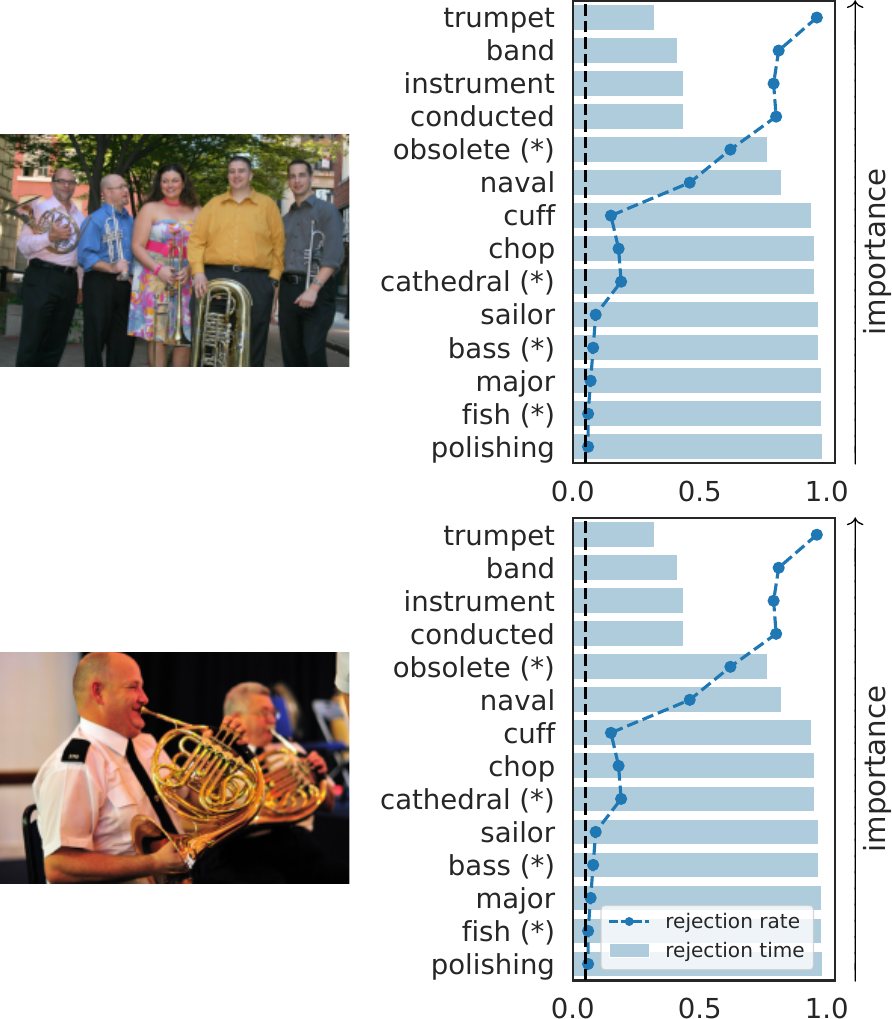}}
    \subcaptionbox{Parachute.}{\includegraphics[width=0.3\linewidth]{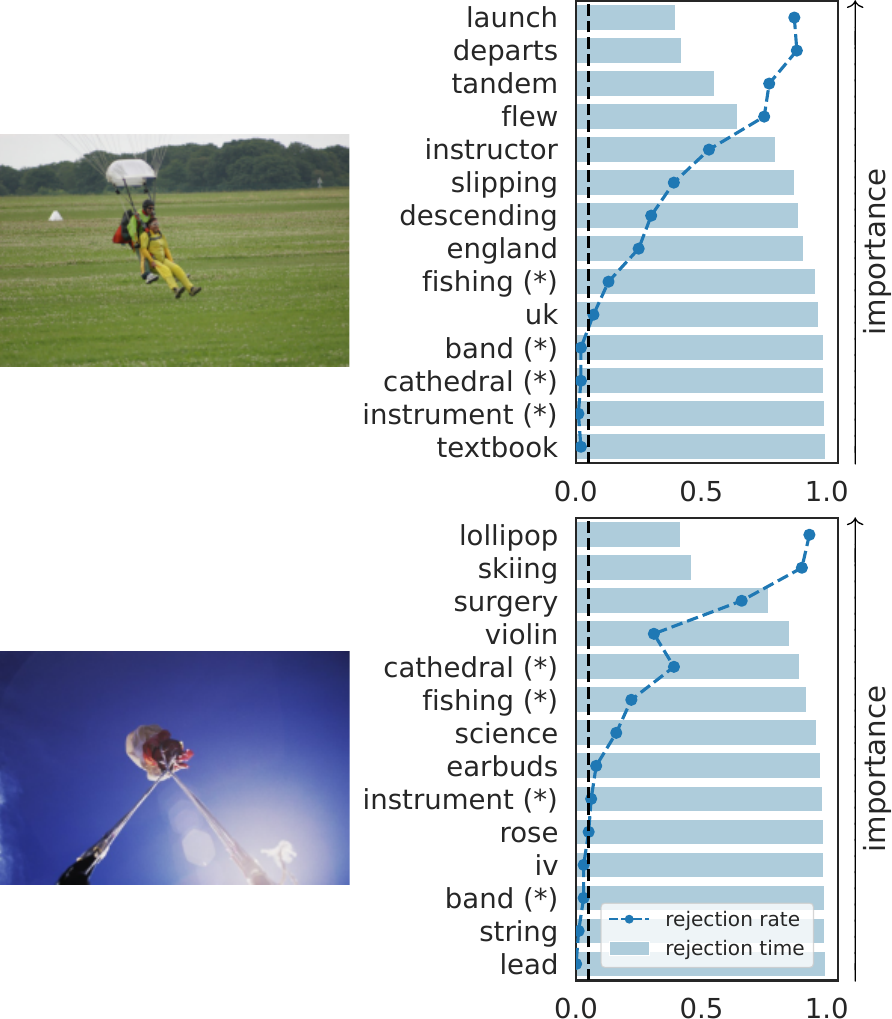}}
    \caption{\label{fig:imagenette_local_results}Example ranks of local conditional importance with $\xSKIT$ ($s = 1$) on CLIP:ViT-L/14 for 2 images from 3 classes from the Imagenette dataset. The 4 least important concepts according to PCBM are annotated with \emph{(*)}.}
\end{figure}

\cref{fig:imagenette_local_results} shows example results on CLIP:ViT-L/14 for 2 images from 3 classes in the Imagenette dataset (see \cref{fig:imagenette_local_results_supp} for ranks on the same images across all models). We can appreciate how the bottom-4 concepts from PCBM, which are annotated with an asterisk \emph{(*)}, are not always last, i.e. a concept may be locally important even if it is not globally important. For example, concept ``fishing'' may not be globally important for class ``English springer'', but it is locally important for an image of a dog in water. Conversely, a concept having a high weight according to SpLiCe does not imply it will be statistically important for the predictions of the model, and these distinctions are important in order to communicate findings transparently to users.

\vspace{-5pt}
\section{Conclusions}
\vspace{-5pt}
There exist an increasing interest in explaining modern, unintelligible predictors and, in particular, doing so with inherently interpretable concepts that convey specific meaning to users. This work is the first to formalize precise statistical notions of semantic importance in terms of global (i.e., over a population) and local (i.e., on a sample) conditional hypothesis testing. Not only do we propose novel, valid tests for each notion of importance, but also provide a rank of importance by deploying ideas of sequential testing. Importantly, by approaching importance via conditional independence (and by developing appropriate valid tests), we are able to provide Type I error and FDR control, a feature that is unique to our framework compared to existing alternatives. Furthermore, our tests allow us to explain the original---potentially nonlinear---classifier that would be used in the wild, as opposed to training surrogate linear models as has been the standard so far.

Naturally, our work has limitations. First and foremost, the procedures introduced in this work require access to samplers, and there might be settings were learning these models is hard; we used nonparametric estimators in our experiments, but modern generative models could be employed, too. Second, kernel-based tests rely on the assumption that the kernels used are characteristic for the space of distributions considered. Although these assumptions are usually satisfied in $\R^d$ for RBF kernels, there may exist data modalities where this is not true (e.g., discrete data, graphs), which would compromise the power of the test. Finally, although we grant full flexibility to users to specify the concepts they care about, there is no guarantee that these are well-represented in the feature space of the model, nor that they are the most informative ones for a specific task. All these points are a matter of ongoing and future work.

\subsubsection*{Acknowledgments}
We thank Zhenzhen Wang for useful conversations that strengthened the presentation of our experimental results. This research was supported by NSF CAREER Award CCF 2239787.

\bibliographystyle{plainnat}
\bibliography{bibliography}

\newpage
\clearpage
\appendix
\renewcommand{\thedefinition}{\thesection.\arabic{definition}}
\renewcommand{\thelemma}{\thesection.\arabic{lemma}}
\renewcommand{\theproposition}{\thesection.\arabic{proposition}}
\renewcommand\thefigure{\thesection.\arabic{figure}}
\renewcommand\thealgorithm{\thesection.\arabic{algorithm}}
\renewcommand\thetable{\thesection.\arabic{table}}

\section{\label{supp:testing_by_betting}Testing by Betting}
\setcounter{definition}{0}
\setcounter{lemma}{0}
\setcounter{proposition}{0}
\setcounter{figure}{0}
\setcounter{algorithm}{0}

In this appendix, we include additional background information on testing by betting that was omitted from the main text for the sake of conciseness of presentation. 

Recall that the wealth process $\{K_t\}_{t \in \Nat_0}$ with $\Nat_0 \coloneqq \Nat \cup \{0\}$ is defined as
\begin{equation}
    K_0 = 1~\quad\text{and}\quad~K_t = K_{t-1} \cdot (1 + v_t\k_t)
\end{equation}
where $v_t \in [-1,1]$ is the betting fraction and $\k_t \in [-1,1]$ the payoff of the bet.

\subsection{\label{supp:test_martingale}Test Martingales}
We start by introducing the definition of a \emph{test martingale} (see, for example, \citet{shaer2023model}).

\begin{definition}[Test martingale]
    A nonnegative stochastic process $\{S_t\}_{t \in \Nat_0}$ is a test martingale if $S_0 = 1$ and, under a null hypothesis $H_0$, it is a supermartingale, i.e.
    \begin{equation}
        \E_{H_0}[S_t \mid \F_{t-1}] \leq S_{t-1},
    \end{equation}
    where $\F_{t-1}$ is the filtration (i.e., the smallest $\sigma$-algebra) of all previous observations.
\end{definition}

In the following, we will use Ville's inequality, which we include for the sake of completeness.

\begin{lemma}[Ville's inequality \cite{ville1939etude}]
    If the stochastic process $\{S_t\}_{t \in \Nat_0}$ is a nonnegative supermartingale,
    \begin{equation}
        \P[\exists t \geq 0:~S_t \geq \eta] \leq \E[S_0]/\eta,~\forall \eta > 0.
    \end{equation}
\end{lemma}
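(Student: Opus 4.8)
This is the classical maximal inequality for nonnegative supermartingales, stated here over the discrete index set $\Nat_0$, which spares us any path-regularity concerns. The plan is to reduce it to an optional-stopping argument applied at the first-passage time of the level $\eta$. First I would set $\tau \coloneqq \inf\{t \geq 0 :~S_t \geq \eta\}$ with the convention $\inf \emptyset = \infty$, and observe that $\{\exists\, t \geq 0 :~S_t \geq \eta\} = \{\tau < \infty\}$, so it suffices to bound $\P[\tau < \infty]$. (If $\E[S_0] = \infty$ the inequality is vacuous, so I may assume $\E[S_0] < \infty$.)

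Next I would verify that the stopped process $S^\tau_t \coloneqq S_{t \wedge \tau}$ is again a nonnegative supermartingale with $\E[S^\tau_0] = \E[S_0]$. This is immediate from the identity $S^\tau_t - S^\tau_{t-1} = \1_{\{\tau \geq t\}}(S_t - S_{t-1})$ together with the fact that $\{\tau \geq t\} = \{\tau \leq t-1\}^c \in \F_{t-1}$, so that
\begin{equation}
    \E[S^\tau_t \mid \F_{t-1}] = S^\tau_{t-1} + \1_{\{\tau \geq t\}}\,\E[S_t - S_{t-1} \mid \F_{t-1}] \leq S^\tau_{t-1},
\end{equation}
using the supermartingale property of $\{S_t\}$. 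Taking expectations and iterating down to $t=0$ gives $\E[S_{n \wedge \tau}] \leq \E[S_0]$ for every $n \in \Nat_0$.

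The core estimate is then the following. On the event $\{\tau \leq n\}$ the stopped value coincides with the first-passage value, $S_{n \wedge \tau} = S_\tau \geq \eta$, where I use that in discrete time the first time the process reaches the level genuinely satisfies $S_\tau \geq \eta$; on the complementary event $S_{n \wedge \tau} \geq 0$ by nonnegativity. Hence, for every $n$,
\begin{equation}
    \E[S_0] \;\geq\; \E[S_{n \wedge \tau}] \;\geq\; \E\!\big[S_{n \wedge \tau}\,\1_{\{\tau \leq n\}}\big] \;=\; \E\!\big[S_\tau\,\1_{\{\tau \leq n\}}\big] \;\geq\; \eta\,\P[\tau \leq n],
\end{equation}
so $\P[\tau \leq n] \leq \E[S_0]/\eta$. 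Finally I would let $n \to \infty$: the events $\{\tau \leq n\}$ increase to $\{\tau < \infty\}$, so continuity of probability from below yields $\P[\exists\, t \geq 0 :~S_t \geq \eta] = \P[\tau < \infty] \leq \E[S_0]/\eta$, which is the claim.

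There is no genuine obstacle here — it is a textbook fact — but the two points worth stating carefully are that the stopped process remains a supermartingale using only nonnegativity (so one never needs $\{S_t\}$ to be closed or uniformly integrable), and that the first-passage identity $S_\tau \geq \eta$ on $\{\tau \leq n\}$ is exactly what the discrete-time setting buys us (in continuous time this step would require right-continuity of the sample paths).
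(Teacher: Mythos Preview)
Your argument is correct and is the standard optional-stopping proof of Ville's inequality in discrete time. The paper, however, does not prove this lemma at all: it merely states it with a citation to \cite{ville1939etude} and then invokes it in the proof of \cref{lemma:validity_wealth}, so there is no approach to compare against.
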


\subsubsection*{Proof of \cref{lemma:validity_wealth}}
Recall the lemma states that if
\begin{equation}
    \E_{H_0}[\k_t \mid \F_{t-1}] = 0
\end{equation}
then
\begin{equation}
    \P_{H_0}[\exists t \geq 1:~K_t \geq 1/\alpha] \leq \alpha,
\end{equation}
i.e. we can use the wealth process to reject the null hypothesis $H_0$ with Type I error control.

\begin{proof}
    It suffices to show that if $\E_{H_0}[\k_t \mid \F_{t-1}] = 0$, then the wealth process $\{K_t\}_{t \in \Nat_0}$ is a test martingale:
    \begin{enumerate}
        \item $K_0 = 1$ by definition, and
        \item It is immediate to see that the wealth process is nonnegative because $v_t\k_t \in [-1, 1]$ and the bettor never risks more than their current wealth, i.e. they will never go into debt. Finally,
        \item If $\E_{H_0}[\k_t \mid \F_{t-1}] = 0$, then
        \begin{align}
            \E_{H_0}[K_t \mid \F_{t-1}] &= \E_{H_0}[K_{t-1} \cdot (1 + v_t\k_t) \mid \F_{t-1}]\\
                                        &= K_{t-1} \cdot \E_{H_0}[1 + v_t\k_t \mid \F_{t-1}]~&&\text{($K_{t-1} \mid \F_{t-1}$ is constant)}\\
                                        &\leq K_{t-1} \cdot (1 + \E_{H_0}[\k_t \mid \F_{t-1}])~&&\text{($v_t \leq 1$)}\\
                                        &= K_{t-1},
        \end{align}
        and the wealth process is a supermartingale under the null.
    \end{enumerate}
    Then, by Ville's inequality, we conclude that for any significance level $\alpha \in (0,1)$
    \begin{equation}
        \P_{H_0}[\exists t \geq 1:~ K_t \geq 1/\alpha] \leq \alpha\E[K_0] = \alpha
    \end{equation}
    which is the statement of the proposition.
\end{proof}

\subsection{\label{supp:symmetry_based_testing}Symmetry-based Two-sample Sequential Testing}
In this section, we show how to leverage symmetry to construct valid sequential tests for a two-sample null hypothesis of the form
\begin{equation}
    H_0:~P = \tP.
\end{equation}

\begin{lemma}[See \cite{shaer2023model,shekhar2023nonparametric,podkopaev2023sequential}]
    \label{lemma:symmetry}
    $\forall t \geq 1$, let $X \sim P$ and $\tX \sim \tP$ be two random variables sampled from $P$ and $\tP$, respectively. If $P = \tP$, it holds that for any fixed function $\r_t: \X \to \R$
    \begin{equation}
        \r_t(X) - \r_t(\tX) \overset{d}{=} \r_t(\tX) - \r_t(X),
    \end{equation}
    that is
    \begin{equation}
        p_0(\r_t(X) - \r_t(\tX)) = p_0(\r_t(\tX) - \r_t(X)),
    \end{equation}
    where $p_0$ is the probability density function induced by $H_0$.
\end{lemma}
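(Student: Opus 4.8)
The plan is to prove \cref{lemma:symmetry} directly from the hypothesis that $P = \tP$ together with the assumption that $X$ and $\tX$ are independent (which is implicit in the two-sample setup: $X$ is drawn from the ``test'' distribution and $\tX$ from the ``null/product'' distribution, independently of one another). The key observation is that under $P = \tP$ the ordered pair $(X, \tX)$ is exchangeable, i.e. $(X,\tX) \overset{d}{=} (\tX, X)$. From there, applying the fixed measurable map $(u,v) \mapsto \r_t(u) - \r_t(v)$ to both sides of this distributional equality immediately yields $\r_t(X) - \r_t(\tX) \overset{d}{=} \r_t(\tX) - \r_t(X)$, which is the claimed symmetry; the statement in terms of the density $p_0$ is just an equivalent restatement when the relevant law admits a density.

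Concretely, I would proceed in three short steps. First, I would note that since $X \sim P$ and $\tX \sim \tP$ are independent and $P = \tP$, the joint law of $(X,\tX)$ is the product measure $P \times P$, which is symmetric under swapping coordinates; hence $(X, \tX) \overset{d}{=} (\tX, X)$. Second, I would invoke the general fact that if two random vectors are equal in distribution, then so are their images under any fixed measurable function — here the function $\psi: \X \times \X \to \R$, $\psi(u,v) = \r_t(u) - \r_t(v)$, which is measurable because $\r_t$ is a fixed (hence measurable) function. Applying $\psi$ gives $\r_t(X) - \r_t(\tX) = \psi(X,\tX) \overset{d}{=} \psi(\tX, X) = \r_t(\tX) - \r_t(X)$. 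Third, I would observe that two real random variables with the same law have the same density whenever densities exist, which gives $p_0(\r_t(X) - \r_t(\tX)) = p_0(\r_t(\tX) - \r_t(X))$, i.e. the distribution of $\r_t(X) - \r_t(\tX)$ is symmetric about $0$.

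This is a genuinely short argument, so there is no serious obstacle; the only subtlety worth flagging is making explicit the independence of $X$ and $\tX$, since the symmetry of the joint law relies on the joint distribution being a \emph{product} of two equal marginals rather than merely the marginals coinciding. One should also note that $\r_t$ must be fixed (not data-dependent) for the measurable-function argument to apply cleanly, which is precisely the hypothesis of the lemma and is exactly the reason the main text insists on computing $\r_t$ from data up to time $t-1$ only. Everything else is a one-line consequence, and this symmetry is then exactly what feeds into \cref{lemma:symmetry_based_testing}: composing with the anti-symmetric $\tanh$ produces a payoff $\k_t$ with $\E_{H_0}[\k_t \mid \F_{t-1}] = 0$, so that \cref{lemma:validity_wealth} applies.
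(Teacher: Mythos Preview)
Your argument is correct and follows the same approach as the paper's own proof, which simply observes that under $P=\tP$ the pair $(X,\tX)$ is exchangeable and the claim is immediate. You have merely unpacked this exchangeability step (via independence and the product measure $P\times P$) and the application of the fixed measurable map $(u,v)\mapsto \r_t(u)-\r_t(v)$ more explicitly than the paper does.
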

\begin{proof}
    The proof is straightforward. If $P = \tP$, then $X$ and $\tX$ are exchangeable by assumption.
\end{proof}

\subsubsection*{Proof of \cref{lemma:symmetry_based_testing}}
Recall the lemma states that for any fixed function $\r_t:~\X \to \R$, the payoff
\begin{equation}
    \k_t = \tanh(\r_t(X) - \r_t(\tX))
\end{equation}
provides a fair game (i.e., it satisfies \cref{lemma:validity_wealth}) for a two-sample test with null hypothesis $H_0:~P = \tP$. We use \cref{lemma:symmetry} above to prove a stronger result that implies the desired claim.

\begin{lemma}[See \cite{shaer2023model,podkopaev2023sequential,shekhar2023nonparametric}]
    \label{lemma:symmetric_payoff}
    For any $t \geq 1$, and any fixed anti-symmetric function $\xi: \R \to \R$, it holds that
    \begin{equation}
        \E_{H_0}[\xi(\r_t(X) - \r_t(\tX)) \mid \F_{t-1}] = 0.
    \end{equation}
\end{lemma}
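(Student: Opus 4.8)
The plan is to prove the stronger statement in \cref{lemma:symmetric_payoff} first, and then obtain \cref{lemma:symmetry_based_testing} as the special case $\xi = \tanh$. The whole argument is a one-line symmetry trick once the conditioning is set up correctly, so most of the work is bookkeeping about what is random and what is fixed given $\F_{t-1}$.

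First I would record the key structural fact: by construction $\r_t$ is a \emph{fixed} function conditionally on $\F_{t-1}$ (it is computed from observations up to time $t-1$), and, under $H_0: P = \tP$, the fresh draws $X \sim P$ and $\tX \sim \tP$ used at round $t$ are exchangeable given $\F_{t-1}$. Hence \cref{lemma:symmetry} applies conditionally, giving
\begin{equation}
    \r_t(X) - \r_t(\tX) \;\overset{d}{=}\; \r_t(\tX) - \r_t(X) \quad\text{conditionally on } \F_{t-1}.
\end{equation}
Write $W \coloneqq \r_t(X) - \r_t(\tX)$, so that $W \overset{d}{=} -W$ given $\F_{t-1}$, i.e. the conditional law of $W$ is symmetric about $0$.

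Next, since $\xi$ is anti-symmetric, $\xi(-W) = -\xi(W)$ pointwise, and applying $\xi$ to both sides of the distributional identity yields $\xi(W) \overset{d}{=} -\xi(W)$ conditionally on $\F_{t-1}$. Taking conditional expectations (assuming $\xi$ is integrable against the conditional law of $W$, which is immediate when $\xi$ is bounded, as for $\tanh$ or $\sign$) gives
\begin{equation}
    \E_{H_0}[\xi(W) \mid \F_{t-1}] = \E_{H_0}[-\xi(W) \mid \F_{t-1}] = -\,\E_{H_0}[\xi(W) \mid \F_{t-1}],
\end{equation}
so $\E_{H_0}[\xi(\r_t(X) - \r_t(\tX)) \mid \F_{t-1}] = 0$, which is \cref{lemma:symmetric_payoff}.

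Finally, for \cref{lemma:symmetry_based_testing} I would simply instantiate $\xi = \tanh$: it is anti-symmetric and bounded in $[-1,1]$, so $\k_t = \tanh(\r_t(X) - \r_t(\tX)) \in [-1,1]$ and $\E_{H_0}[\k_t \mid \F_{t-1}] = 0$, which is exactly the hypothesis of \cref{lemma:validity_wealth}; hence $\k_t$ yields a fair game. The only real subtlety — and the point I would be careful about — is the conditioning: one must argue that the exchangeability of $X$ and $\tX$ survives conditioning on $\F_{t-1}$ and that $\r_t$ is $\F_{t-1}$-measurable, since otherwise $\r_t(X)$ and $\r_t(\tX)$ need not be conditionally exchangeable. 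Everything else is routine.
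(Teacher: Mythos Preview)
Your proof is correct and follows essentially the same symmetry argument as the paper: both use that $\r_t$ is $\F_{t-1}$-measurable and that $X,\tX$ are exchangeable under $H_0$ to get $W \overset{d}{=} -W$, then combine this with the anti-symmetry of $\xi$. The only cosmetic difference is that the paper drops the conditioning and writes the expectation as an integral against a symmetric density $p_0$ before splitting over $\R_+$, whereas you argue distributionally and keep the conditioning throughout; your version is slightly cleaner and does not assume a density exists.
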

\begin{proof}
    We can see that
    \begin{align}
        \E_{H_0}[\xi(\r_t(X) - \r_t(\tX)) \mid \F_{t-1}] &= \E_{H_0}[\xi(\r_t(X) - \r_t(\tX))]~&&\text{($\r_t,\xi$ are fixed)}\\
                                                        &= \int_{\R} \xi(u) p_0(u) \d{u}~&&\text{(change of variables)}\\
                                                        &= \int_{\R_+} (\xi(u) + \xi(-u)) p_0(u) \d{u}~&&\text{(by \cref{lemma:symmetry})}\\
                                                        &= \int_{\R_+} (\xi(u) - \xi(u)) p_0(u) \d{u}~&&\text{($\xi$ is anti-symmetric)}\\
                                                        &= 0,
    \end{align}
    which concludes the proof.
\end{proof}

\begin{proof}[Proof of \cref{lemma:symmetry_based_testing}]
    First note that $\tanh$ is an anti-symmetric function, so \cref{lemma:symmetric_payoff} holds. Then, \cref{lemma:validity_wealth} implies that $\k_t = \tanh(\r_t(X) - \r_t(\tX))$ provides a test martingale for $H_0: P = \tP$.
\end{proof}

\subsection{Betting Strategies}
So far, we have discussed how to construct valid test martingales in terms of the payoff $\k_t$. Then, it remains to define a strategy to choose the betting fraction $v_t$. In general, any method that picks $v_t$ \emph{before} data is revealed maintains validity of the test, and we briefly summarize a few alternatives.

\paragraph{Constant betting fraction.} Naturally, a fixed betting fraction $v_t = v$ is valid. However, this strategy may be prone to \emph{overshooting}, i.e. the wealth may go to zero almost surely under the alternative, and severely impact the power of the test \cite[Example~2]{podkopaev2023sequential}.

\paragraph{Mixture method \cite{shaer2023model,cover1991universal}.} A possible way to overcome the limitations of setting a fixed betting fraction is to average across a distribution, i.e.
\begin{equation}
    K_t = \int_{\V} K^{(v)}_t p(v) \d{v},
\end{equation}
where $K^{(v)}_t$ is the wealth with betting fraction $v$, and $p(v)$ is a prior (e.g., uniform over $[0,1]$). This choice is valid, and motivated by the intuition that the mixture martingale will be driven by the term that achieves the optimal betting fraction \cite[Theorem~1]{shaer2023model}.

\paragraph{Online Newton step (ONS) \cite{cutkosky2018black}.} Alternatively, one can frame choosing the betting fraction as an online optimization problem that finds the optimal $v_t$ in terms of the regret of the strategy. We refer interested readers to \cite{cutkosky2018black,shekhar2023nonparametric,shaer2023model} for a theoretical analysis of this strategy and simply state here the wealth's rate of growth. \cref{algo:ons} summarizes this strategy.

\begin{lemma}[See \citet{shekhar2023nonparametric}]
    For any sequence $\{v_t \in [-1,1]:~t \geq 1\}$, it holds that
    \begin{equation}
        \log K_t \geq \frac{1}{8t}\left(\sum_{t'=1}^t v_{t'}\right)^2 - \log t.
    \end{equation}
\end{lemma}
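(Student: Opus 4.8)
The plan is to read the Online Newton Step (ONS, \cref{algo:ons}) as online convex optimization applied to the log-wealth losses $\ell_{t'}(v) \coloneqq -\log(1 + v\k_{t'})$ over the interval on which ONS bets (a sub-interval of $[-1,1]$, e.g. $[-1/2,1/2]$, on which the $\ell_{t'}$ are exp-concave), and to invoke the classical logarithmic-regret guarantee of ONS for exp-concave losses, as used in \citet{shekhar2023nonparametric}. Since each $\k_{t'}\in[-1,1]$, on this interval the $\ell_{t'}$ are Lipschitz and exp-concave with absolute constants, so ONS attains regret of order $\log t$ against any fixed comparator; with the normalization in \cref{algo:ons} this regret can be taken to be at most $\log t$. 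Writing $\log K_t = -\sum_{t'=1}^t \ell_{t'}(v_{t'})$, the regret bound gives
\begin{equation}
    \log K_t \;\geq\; \sup_{\lvert v\rvert \leq 1/2}\ \sum_{t'=1}^t \log(1 + v\k_{t'}) \;-\; \log t .
\end{equation}

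The second step is an elementary lower bound on the comparator term. The inequality $\log(1+x)\geq x-x^2$ holds for all $x\geq -1/2$, and since $\lvert v\k_{t'}\rvert \leq 1/2$ and $\k_{t'}^2 \leq 1$, for every fixed $\lvert v\rvert \leq 1/2$,
\begin{equation}
    \sum_{t'=1}^t \log(1 + v\k_{t'}) \;\geq\; v\sum_{t'=1}^t \k_{t'} \;-\; v^2 t .
\end{equation}
Maximizing the right-hand side over $v$ gives the optimizer $v^\star = \tfrac{1}{2t}\sum_{t'=1}^t \k_{t'}$, which lies in $[-1/2,1/2]$ because $\lvert\sum_{t'}\k_{t'}\rvert \leq t$, and the optimal value is $\tfrac{1}{4t}\big(\sum_{t'=1}^t\k_{t'}\big)^2$. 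Combining the two displays yields a bound of exactly the advertised shape, but with the cumulative payoffs in place of the cumulative betting fractions.

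The remaining, and I expect hardest, step is to reconcile this with the statement as written, i.e.\ to replace $\sum_{t'}\k_{t'}$ by $\sum_{t'} v_{t'}$ and to sharpen the constant. The point is that ONS does not merely compete with the best constant bet; its iterates $v_{t'}$ track it, so one can re-run the regret argument with the comparator taken to be a rescaling of the realized average $\bar v_t = \tfrac1t\sum_{t'=1}^t v_{t'}$, absorbing the resulting mismatch into the $O(\log t)$ term. This is also where the precise domain restriction and the normalization in the ONS update conspire to turn the naive constant $1/4$ into $1/8$. Everything else is either the cited ONS analysis or the one-line convexity estimate above, so the only genuine work is careful bookkeeping of the constants in this last reduction.
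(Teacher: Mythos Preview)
The paper does not supply its own proof of this lemma; it is stated with a citation to \citet{shekhar2023nonparametric} and left at that. So there is nothing to compare against beyond noting that your first two steps---the ONS logarithmic-regret bound for exp-concave losses, followed by the elementary quadratic lower bound via $\log(1+x)\ge x-x^2$---are exactly the standard argument used in the cited reference, and they are correct. Your computation even yields the tighter constant $1/(4t)$, from which $1/(8t)$ follows trivially.

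The genuine issue is your third step. You correctly observe that the argument produces $\tfrac{1}{4t}\big(\sum_{t'}\k_{t'}\big)^2$, i.e.\ a bound in the cumulative \emph{payoffs} $\k_{t'}$, and then you propose to ``reconcile'' this with the stated bound in the cumulative \emph{betting fractions} $v_{t'}$ via some tracking argument. Do not attempt this: the lemma as printed almost certainly contains a typo. In the paper's own notation (see \cref{algo:ons} and the definition of the wealth process), the $v_t$ are the ONS-chosen betting fractions and the $\k_t$ are the adversarial payoffs; the phrase ``for any sequence $\{v_t\in[-1,1]\}$'' makes no sense for quantities the algorithm itself determines, and the growth-rate guarantee in \citet{shekhar2023nonparametric} and \citet{cutkosky2018black} is stated in terms of the payoff sequence. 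Your instinct that something is off was right; the resolution is not a clever extra argument but recognizing that the intended statement has $\k_{t'}$ in place of $v_{t'}$, at which point your first two steps already finish the proof.
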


\begin{algorithm}[h]
    \caption{\label{algo:ons}ONS Betting Strategy}
    \textbf{Input:} Sequence of payoffs $\{\k_t\}_{t \geq 1}$
\begin{algorithmic}[1]
    \STATE $a_0 \gets 1$
    \STATE $v_1 \gets 0$
    \FOR{$t \geq 1$}
        \STATE $z_t \gets \k_t / (1 + v_t\k_t)$
        \STATE $a_t \gets a_{t-1} + z_t^2$
        \STATE $v_{t+1} \gets \max(0, \min(1, a_t + 2/(2 - \log(3)) \cdot z_t/a_t))$
    \ENDFOR
\end{algorithmic}
\end{algorithm}

\section{\label{supp:payoffs}Technical Details on Payoff Functions}
In this appendix, we include technical details on how to compute the payoff functions of all tests presented in this paper. We start with a brief overview of the maximum mean discrepancy (MMD) \cite{gretton2012kernel}, and we refer interested readers to \cite{aronszajn1950theory,berlinet2011reproducing,shawe2004kernel,sriperumbudur2010hilbert} for rigorous introductions to the theory of reproducing kernel Hilbert spaces (RKHSs) and their applications to probability and statistics.

\begin{definition}[Mean embedding (see \citet{gretton2012kernel})]
    \label{def:mean_embedding}
    Let $P$ be a probability distribution on $\X$ and $\RKHS$ an RKHS on the same domain. The mean embedding of $P$ in $\RKHS$ is the element $\mu_P \in \RKHS$ such that
    \begin{equation}
        \forall \r \in \RKHS,~\E_P[\r(X)] = \inner{\mu_P}{\r}_{\RKHS}.
    \end{equation}
\end{definition}

Furthermore, given $X\at{1}, \dots, X\at{n}$ sampled i.i.d. from $P$, the plug-in estimate $\hmu\at{n}_P$ is
\begin{equation}
    \hmu\at{n}_P \coloneqq \frac{1}{n} \sum_{i=1}^n \varphi(X\at{i}),
\end{equation}
where $\varphi$ is the canonical feature map, i.e. $\varphi(X) = k(X,\cdot)$, and $k$ is the kernel associated with $\RKHS$.

We now define the MMD between two probability distributions $P,Q$ and show that it can be rewritten in terms of their mean embeddings.

\begin{definition}[Integral probability metric (see \citet{muller1997integral})]
    Let $P,Q$ be two probability distributions over $\X$. Furthermore, denote $\G = \{g: \X \to \R\}$ a hypothesis class of real-valued functions over $\X$. Then,
    \begin{equation}
        D_{\G}(P,Q) \coloneqq \sup_{g \in \G}~\lvert \E_P[g(X)] - \E_Q[g(X)] \rvert
    \end{equation}
    is the distance between $P$ and $Q$ induced by $\G$, and the function $g^*$ that achieves the supremum is called \emph{witness function}.
\end{definition}

The MMD is defined as $D_{\B(\RKHS)}(P,Q)$, where $\B(\RKHS)$ is the unit ball of $\RKHS$, i.e.
\begin{equation}
    \B(\RKHS) \coloneqq \{\r \in \RKHS:~\|\r\|_{\RKHS} \leq 1\}.
\end{equation}

\begin{definition}[Maximum mean discrepancy (see \citet{gretton2012kernel})]
    For $P,Q$ defined as above, let $\RKHS$ be an RKHS on their domain. Then,
    \begin{equation}
        \MMD(P,Q) \coloneqq \sup_{\r \in \B(\RKHS)}~\E_P[\r(X)] - \E_Q[\r(X)].
    \end{equation}
\end{definition}
We note that we drop the absolute value because if $\r \in \B(\RKHS)$, then $-\r \in \B(\RKHS)$ also. From the definition of mean embedding, it follows that 
\begin{equation}
    \MMD(P,Q) = \sup_{\r \in \B(\RKHS)}~\inner{\r}{\mu_P}_{\RKHS} - \inner{\r}{\mu_Q}_{\RKHS} = \frac{\mu_P - \mu_Q}{\|\mu_P - \mu_Q\|_{\RKHS}},
\end{equation}
and its witness function satisfies
\begin{equation}
    \r^* \propto \mu_P - \mu_Q.
\end{equation}

\subsection{\label{supp:skit_payoff}Computing $\r_t^{\SKIT}$}
Recall that $\r_t^{\SKIT}$ is the estimate of $\MMD(P_{\Y Z_j}, P_{\Y} \times P_{Z_j})$ at time $t$, i.e.
\begin{equation}
    \r_t^{\SKIT} = \hmu\at{2(t-1)}_{\Y Z_j} - \hmu\at{2(t-1)}_{\Y} \otimes \hmu\at{2(t-1)}_{Z_j},
\end{equation}
where
\begin{equation}
    \hmu\at{2(t-1)}_{\Y Z_j} = \frac{1}{2(t-1)} \sum_{t'=0}^{2(t-1)} (\varphi_{\Y}(\Y\at{t'}) \otimes \varphi_{Z_j}(Z_j\at{t'})),
\end{equation}
\begin{align}
    \hmu\at{2(t-1)}_{\Y} = \frac{1}{t-1} \sum_{t'=0}^{2(t-1)} \varphi_{\Y}(\Y\at{t'}), 
    &&\hmu\at{2(t-1)}_{Z_j} = \frac{1}{t-1} \sum_{t'=0}^{2(t-1)} \varphi_{Z_j}(Z_j\at{t'}),
\end{align}
and $\varphi_{\Y},\varphi_{Z_j}$ are the canonical feature maps associated with $\RKHS_{\Y}$ and $\RKHS_{Z_j}$, respectively. We remark that $\r_t^{\SKIT}$ is an operator, and, for a sample $(\y, z_j)$, its value $\r_t^{\SKIT}(\y, z_j)$ can be computed as
\begin{align}
    \r_t^{\SKIT}(\y, z_j)    &= (\hmu\at{2(t-1)}_{\Y Z_j} - \hmu\at{2(t-1)}_{\Y} \otimes \hmu\at{2(t-1)}_{Z_j})(\y,z_j)\\
                            &= \hmu\at{2(t-1)}_{\Y Z_j}(\y, z_j) - (\hmu\at{2(t-1)}_{\Y} \otimes \hmu\at{2(t-1)}_{Z_j})(\y, z_j)
\end{align}
with
\begin{align}
    \hmu\at{2(t-1)}_{\Y Z_j}(\y, z_j)                               &= \frac{1}{2(t-1)} \sum_{t'=0}^{2(t-1)} k_{\Y}(\Y\at{t'},\y)k_{Z_j}(Z\at{t'}_{j}, z_j)\\
    (\hmu\at{2(t-1)}_{\Y} \otimes \hmu\at{2(t-1)}_{Z_j})(\y, z_j)   &= \frac{1}{2(t-1)} \sum_{t'=0}^{2(t-1)} k_{\Y}(\Y\at{t'}, \y) \cdot \frac{1}{2(t-1)} \sum_{t'=0}^{2(t-1)} k_{Z_j}(Z_j\at{t'}, z_j),
\end{align}
where $k_{\Y},k_{Z_j}$ are the kernels associated with $\RKHS_{\Y}$ and $\RKHS_{Z_j}$, respectively.

\subsection{\label{supp:cskit_payoff}Computing $\r_t^{\cSKIT}$}
Recall that $\r_t^{\cSKIT}$ is the estimate of $\MMD(P_{\Y Z_j Z_{-j}}, P_{\Y \tZ_j Z_{-j}})$ with $\tZ_j \sim P_{Z_j | Z_{-j}}$ at time $t$, i.e.
\begin{equation}
    \r^{\cSKIT}_t = \hmu\at{t-1}_{\Y Z_j Z_{-j}} - \hmu\at{t-1}_{\Y \tZ_j Z_{-j}},
\end{equation}
where
\begin{gather}
    \hmu\at{t-1}_{\Y Z_j Z_{-j}} = \frac{1}{t-1} \sum_{t'=0}^{t-1} \left(\varphi_{\Y}(\Y\at{t'}) \otimes \varphi_{Z_j}(Z_j\at{t'}) \otimes \varphi_{Z_{-j}}(Z_{-j}\at{t'})\right)\\
    \hmu\at{t-1}_{\Y \tZ_j Z_{-j}} = \frac{1}{t-1} \sum_{t'=0}^{t-1} \left(\varphi_{\Y}(\Y\at{t'}) \otimes \varphi_{Z_j}(\tZ_j\at{t'}) \otimes \varphi_{Z_{-j}}(Z_{-j}\at{t'})\right)
\end{gather}
and $\varphi_{\Y},\varphi_{Z_j},\varphi_{Z_{-j}}$ are the canonical feature maps associated with their respective RKHSs. We remark that $\r_t^{\cSKIT}$ is defined as an operator, and, for a triplet $(\y,z_j,z_{-j})$ its value can be computed as
\begin{align}
    \r_t^{\cSKIT}(\y, z_j, z_{-j})  &= (\hmu\at{t-1}_{\Y Z_j Z_{-j}} - \hmu\at{t-1}_{\Y \tZ_j Z_{-j}})(\y, z_j, z_{-j})\\
                                    &= \hmu\at{t-1}_{\Y Z_j Z_{-j}}(\y, z_j, z_{-j}) - \hmu\at{t-1}_{\Y \tZ_j Z_{-j}}(\y, z_j, z_{-j})
\end{align}
with
\begin{align}
    \hmu\at{t-1}_{\Y Z_j Z_{-j}}(\y, z_j, z_{-j})   &= \frac{1}{t-1} \sum_{t'=0}^{t-1} k_{\Y}(\Y\at{t'}, y)k_{Z_j}(Z_j\at{t'}, z_j)k_{Z_{-j}}(Z_{-j}\at{t'}, z_{-j})\\
    \hmu\at{t-1}_{\Y \tZ_j Z_{-j}}(\y, z_j, z_{-j}) &= \frac{1}{t-1} \sum_{t'=0}^{t-1} k_{\Y}(\Y\at{t'}, y)k_{Z_j}(\tZ_j\at{t'}, z_j)k_{Z_{-j}}(Z_{-j}\at{t'}, z_{-j})
\end{align}
where $k_{\Y},k_{Z_j},k_{Z_{-j}}$ are the kernels associated with $\RKHS_{\Y},\RKHS_{Z_j},$ and $\RKHS_{Z_{-j}}$, respectively.

\subsection{\label{supp:xskit_payoff}Computing $\r_t^{\xSKIT}$}
Recall that, for a particular sample $z$, a concept $j \in [m]$, and a subset $S \subseteq [m] \setminus \{j\}$ that does not contain $j$, $\r_t^{\xSKIT}$ is the estimate---at time $t$---of $\MMD(\Y_{S \cup \{j\}}, \Y_S)$ with $\Y_C = g(\tH_C),~\tH_C \sim P_{H | Z_C = z_C}$, i.e.
\begin{equation}
        \r^{\xSKIT}_t = \hmu\at{t-1}_{\Y_{S \cup \{j\}}} - \hmu\at{t-1}_{\Y_S},
\end{equation}
where
\begin{align}
    \hmu\at{t-1}_{\Y_{S \cup \{j\}}} = \frac{1}{t-1} \sum_{t' = 0}^{t-1} \varphi_{\Y}(\Y\at{t'}_{S \cup \{j\}}),&&
    \hmu\at{t-1}_{\Y_S} = \frac{1}{t-1} \sum_{t' = 0}^{t-1} \varphi_{\Y}(\Y\at{t'}_S),
\end{align}
and $\varphi_{\Y}$ is the canonical feature map of $\RKHS_{\Y}$. Then, for a prediction $\y$, the value of $\r_t^{\xSKIT}$ is
\begin{align}
\r_t^{\xSKIT}(\y)   &= (\hmu\at{t-1}_{\Y_{S \cup \{j\}}} - \hmu\at{t-1}_{\Y_S})(\y)\\
                    &= \hmu\at{t-1}_{\Y_{S \cup \{j\}}}(\y) - \hmu\at{t-1}_{\Y_S}(\y)\\
                    &= \frac{1}{t-1} \sum_{t'=0}^{t-1} k_{\Y}(\Y_{S \cup \{j\}}\at{t'},\y) - \frac{1}{t-1} \sum_{t'=0}^{t-1} k_{\Y}(\Y_S\at{t'},\y),
\end{align}
where $k_{\Y}$ is the kernel associated with $\RKHS_{\Y}$.

\section{Proofs}
\addtocontents{toc}{\protect\setcounter{tocdepth}{1}}

In this appendix, we include the proofs of the results presented in this paper.

\subsection{\label{proof:orthogonality}Proof of \cref{lemma:orthogonality}}
Recall that $c \in \R^{d \times m}$ is a dictionary of $m$ concepts such that $c_j$, $j \in [m]$ is the vector representation of the $j\th$ concept. Then, $Z = \inner{c}{H}$ is the vector such that---with appropriate normalization---$Z_j \in [-1,1]$ represents the amount of concept $j$ in $h$. 

We want to show that if $\Y = \inner{w}{H}$, $w \in \R^d$, and $d \geq 3$, then
\begin{equation}
    \label{eq:lemma_statement}
    \Y \indep Z_j \niff \inner{w}{c_j} = 0.
\end{equation}
That is, $w$ being orthogonal to $c_j$ does not provide any information about the statistical dependence between $\Y$ and $Z_j$, and vice versa.

\begin{proof}
    Herein, for the sake of simplicity, we will drop the $c_j$ notation and consider a single concept $c$. Furthermore, we will assume that all vectors are normalized, i.e. $\|w\| = \|h\| = \|c\| = 1$. Note that the \cref{eq:lemma_statement} above can directly be rewritten as
    \begin{equation}
        \inner{w}{H} \indep \inner{c}{H} \niff \inner{w}{c} = 0.
    \end{equation}
    \begin{itemize}
        \item[($\nimpliedby$)] We show there exist random vectors $H$ such that $\inner{w}{c} = 0$ but $\inner{w}{H} \nindep \inner{c}{H}$.
        
        Let $H \sim \U(\S^d)$, i.e. $H = [H_1, \dots, H_d]$ is sampled uniformly over the sphere in $d$ dimensions. It is easy to see that $\forall j \in [d]$, $H_j = \inner{e_j}{H}$, where $e_j$ is the $j\th$ element of the standard basis. Furthermore, it holds that $H_j = \sqrt{1 - \sum_{j' \neq j} H_{j'}^2}$ by definition. We conclude that $\forall (j,j')$, let $w = e_j$ and $c = e_{j'}$, then $\inner{w}{c} = 0$ but $\inner{w}{H} \nindep \inner{c}{H}$. That is, the fact that $e_j$ and $e_{j'}$ are orthogonal does not mean that their respective projections of $H$ are statistically independent.
        
        \item[($\nimplies$)] We show how to construct a random vector $H$ such that $\inner{w}{H} \indep \inner{c}{H}$ but $\inner{w}{c} \neq 0$.
        
        Denote $\H_{\eta} \coloneqq \{h \in \S^d:~\inner{c}{h} = \eta,~\eta \neq 0\}$ the linear subspace of unit vectors in $\R^d$ with the same nonzero inner product with $c$. Each vector $h \in \H_{\eta}$ can be decomposed into a parallel and an orthogonal component to $c$, i.e. $\forall h \in \H_{\eta}$, $h = h_c + h_{\perp} = \eta c + h_{\perp}$, where the last equality follows by definition of $\H_{\eta}$.
        
        Consider $\H_{\eta}$ and $\H_{-\eta}$, it follows that for $w,h^+ \in \H_{\eta}$ and $h^- \in \H_{-\eta}$
        \begin{align}
            \inner{w}{h^+} = \inner{w}{h^-} &\iff \inner{\eta c + w_{\perp}}{\eta c + h^+_{\perp}} = \inner{\eta c + w_{\perp}}{-\eta c + h^-_{\perp}}\\
                                            &\iff \eta^2 + \inner{w_{\perp}}{h^+_{\perp}} = -\eta^2 + \inner{w_{\perp}}{h^-_{\perp}}\\
                                            &\iff \inner{w_{\perp}}{h^+_{\perp} - h^-_{\perp}} = -2\eta^2\\
                                            &\iff \inner{w_{\perp}}{\Delta} = -2\eta^2.~\quad\quad\quad\quad\text{($\Delta \coloneqq h^+_{\perp} - h^-_{\perp}$)}\label{eq:difference_subspace}
        \end{align}
        Denote $\mathcal{S} = \{(h^+, h^-):~\inner{w_{\perp}}{\Delta} = -2\eta^2\}$ the set of pairs of vector that satisfy \cref{eq:difference_subspace}, and note that for each pair $(h^+, h^-)$ there exists a value $\beta$ such that $\inner{w}{h^+} = \inner{w}{h^-} = \beta$. Then, sampling from $\mathcal{S}$ is equivalent to sampling from the set of possible correlations with $w$ that can be attained by vectors both in $\H_{\eta}$ and $\H_{-\eta}$. 
        
        Note that when $d=2$, $h^+_{\perp},h^-_{\perp} \in \{\pm w_{\perp}\}$ by construction, hence $\Delta \in \{0, \pm2w_{\perp}\}$ and $\inner{w_{\perp}}{\Delta} \in \{0, \pm2(1-\eta^2)\}$. Then, $\mathcal{S} = \emptyset$ because there are no pairs of vectors such that $\inner{w_{\perp}}{\Delta} = -2\eta^2$. For $d \geq 3$, $\mathcal{S}$ is nonempty as long as $\eta \leq \sqrt{1/2}$.

        Then, we can construct $H$ as follows:
        \begin{itemize}
            \item Sample the component parallel to $c$, i.e. $H_c \sim \U(\pm \eta c)$,
            \item Sample the component orthogonal to $c$, i.e. $(H^+_{\perp}, H^-_{\perp}) \sim \U(\mathcal{S})$,
        \end{itemize}
        and note that by doing so, we have sampled $\inner{c}{H}$ and $\inner{w}{H}$ independently. Finally
        \begin{equation}
            H = \begin{cases}
                H_c + H^+_{\perp}   &\text{if $H_c = \eta c$}\\
                H_c + H^-_{\perp}   &\text{if $H_c = -\eta c$}
            \end{cases}
        \end{equation}
        has $\inner{w}{H} \indep \inner{c}{H}$ by construction, but, since $w \in \H_{\eta}$, $\inner{w}{c} = \eta \neq 0$.
    \end{itemize}
    This concludes the proof of the lemma.
\end{proof}

\subsection{\label{proof:validity_cskit}Proof of \cref{prop:validity_cskit}}
Recall that \cref{prop:validity_cskit} states that the payoff function
\begin{equation}
    \k_t^{\cSKIT} = \tanh(\r_t^{\cSKIT}(\Y,Z_j,Z_{-j}) - \r_t^{\cSKIT}(\Y, \tZ_j, Z_{-j}))
\end{equation}
provides a fair game for the global conditional semantic importance null hypothesis $H^{\GC}_{0,j}$ in \cref{def:global_conditional_importance}. That is, the wealth process provides Type I error control.

\begin{proof}
    Note that $H^{\GC}_{0,j}$ can be directly rewritten as
    \begin{equation}
        H^{\GC}_{0,j}:~P_{\Y Z_j Z_{-j}} = P_{\Y \tZ_j Z_{-j}}
    \end{equation}
    where $\tZ_j \sim P_{Z_j | Z_{-j}}$. Then, under the null, the triplets $(\Y,Z_j,Z_{-j})$ and $(\Y, \tZ_j, Z_{-j})$ are exchangeable by assumption, and the result follows from \cref{lemma:symmetry_based_testing}.
\end{proof}

\subsection{\label{proof:validity_xskit}Proof of \cref{prop:validity_xskit}}
Recall that \cref{prop:validity_xskit} claims that a wealth process with
\begin{equation}
    \k_t^{\xSKIT} = \tanh(\r_t^{\xSKIT}(\Y_{S \cup \{j\}}) - \r_t^{\xSKIT}(\Y_S))
\end{equation}
can be used to reject the local conditional semantic importance $H^{\LC}_{0,j,S}$ in \cref{def:local_conditional_importance} with Type I error control, i.e. the game is fair.

\begin{proof}
    It is easy to see that $H^{\LC}_{0,j,S}$ is already written as a two-sample test. Then, under this null, $\Y_{S \cup \{j\}}$ and $\Y_S$ are exchangeable by assumption, and \cref{lemma:symmetry_based_testing} implies the statement of the proposition.
\end{proof}

\section{\label{supp:experimental_details}Experimental Details}
\addtocontents{toc}{\protect\setcounter{tocdepth}{2}}
\setcounter{figure}{0}
\setcounter{table}{0}

In this appendix, we include further details on the real-world data experiments.

\subsection{\label{supp:samplers}Estimating Conditional Distributions from Data}
Here, we introduce nonparametric methods to estimate the conditional distributions necessary to run our $\cSKIT$ and $\xSKIT$ tests. Throughout this section, we will assume access to a training set $\{(h\at{i}, z\at{i})\}_{i=1}^n$ of $n$ tuples of dense embeddings $h \in \R^d$ with their semantics $z \in [-1,1]^m$.

\subsubsection{Estimating $P_{Z_j | Z_{-j}}$ for $\cSKIT$}
Here, we describe how to sample from the conditional distribution of a concept $Z_j$ given the rest, $Z_{-j}$, i.e. $\tZ_j \sim P_{Z_j | Z_{-j}}$, which is necessary to run our $\cSKIT$ test. In particular, for a concept $j \in [m]$, and an observation $z \in [-1,1]^m$, we define the unnormalized conditional distribution
\begin{equation}
    \label{eq:ckde}
    \tp(z_j \mid z_{-j}) = \sum_{i=1}^n w_i \phi\left(\frac{z_j\at{i} - z_j}{\nu_{\scott}}\right)
\end{equation}
by means of weighted kernel density estimation (KDE), where $\phi$ is the standard normal probability density function, $\nu_{\scott}$ is Scott's factor \cite{scott2015multivariate}, and 
\begin{equation}
    w_i = \phi\left(\frac{z_{-j}\at{i} - z_{-j}}{\nu}\right),~\nu > 0
\end{equation}
are the weights. That is, the further $z_{-j}\at{i}$ is from $z_{-j}$, the lower its weight in the KDE. As for all kernel-based methods, the bandwidth $\nu$ is important for the practical performance of the model. For our experiments, we choose $\nu$ adaptively such that the effective number of points in the KDE (i.e. $n_{\eff} = (\sum_{i=1}^n w_i)^2 / \sum_{i=1}^n w_i^2$) is the same across concepts. This choice is motivated by the fact that different concepts have different distributions, and we want to guarantee the same number of points are used to estimate their conditional distributions. Furthermore, we note that $n_{\eff}$ controls the strength of the conditioning---the larger $n_{\eff}$, the slower the decay of the weights, and the weaker the conditioning. That is, in the limit, the weights become uniform, the conditional distribution tends to the marginal $\tp(z_j)$, and all tests presented become of decorrelated semantic importance \cite{verdinelli2023feature,verdinelli2024decorrelated}. With this, sampling $\tZ_j \sim P_{Z_j | Z_{-j} = z_{-j}}$ boils down to first sampling $i'$ according to the weights $w = [w_i, \dots, w_n]$, and then sampling $\tZ_j$ from the Gaussian distribution centered at $z_{j}\at{i'}$.

\begin{figure}[t]
    \centering
    \includegraphics[width=\linewidth]{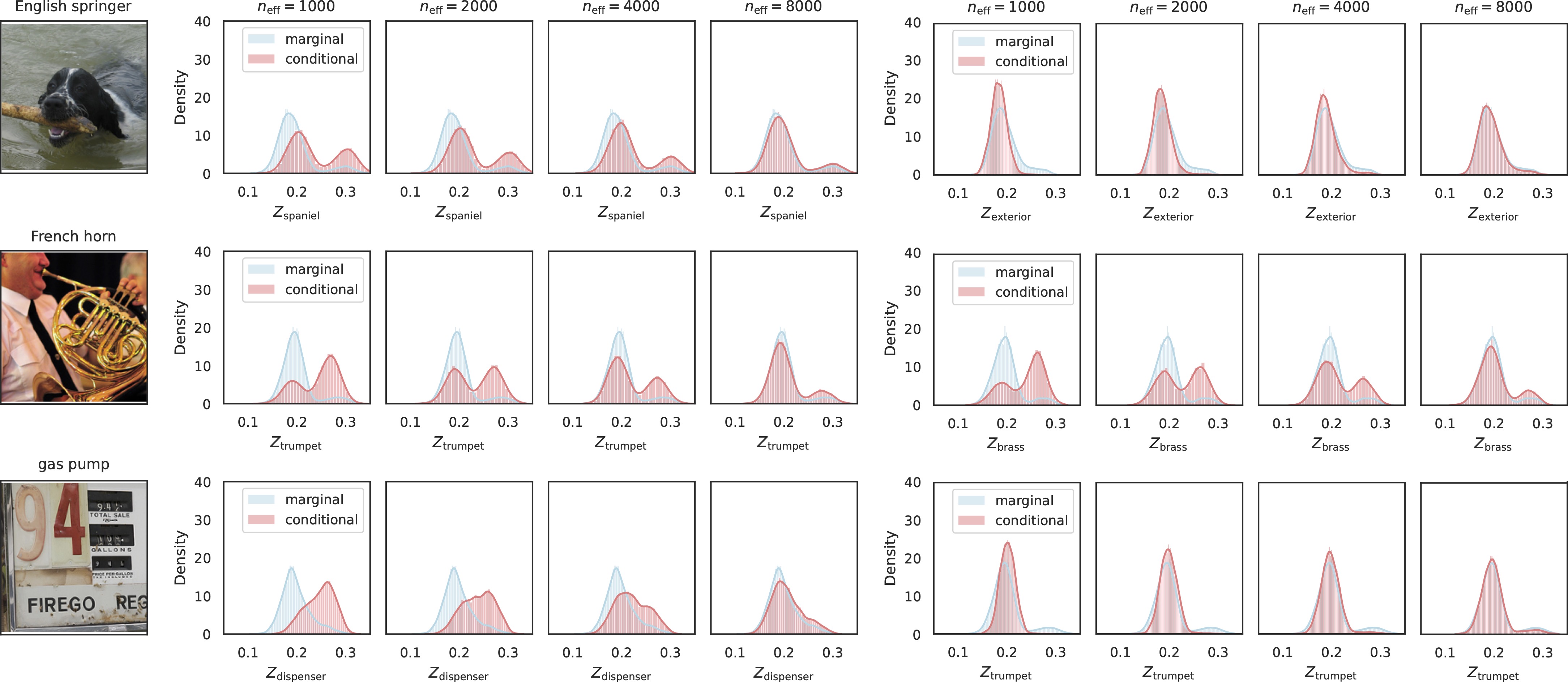}
    \caption{\label{fig:ckde_cond_z}Example marginal and estimated conditional distributions $p(z_j)$ and $\tp(z_j | z_{-j})$ for two class-specific concepts on three images from the Imagenette dataset. Distributions are shown as a function of the effective number of points in the weighted KDE (i.e., $n_{\eff}$).}
\end{figure}

\cref{fig:ckde_cond_z} shows the marginal (i.e., $p(z_j)$) and estimated conditional distributions (i.e., $\tp(z_j | z_{-j})$) of two class-specific concepts as a function of effective number of points $n_{\text{eff}}$ for three images in the Imagenette dataset. We can see how as $n_{\text{eff}}$ increases, the estimated conditional distribution becomes closer to the marginal, and that the conditional distributions of class-specific concepts tend to be skewed to higher values compared to their marginals. This behavior suggests that images from a specific class have higher values of concepts that are related to the class. We use $n_{\eff} = 2000$ for all tests across all real-world experiments.

\subsubsection{Estimating $P_{H | Z_C = z_C}$ for $\xSKIT$}
Here, we describe how to sample from the conditional distribution of dense embeddings $H$ conditionally on any subset of concepts $C \subseteq [m]$ of a particular semantic vector $z \in [-1,1]^m$, i.e. $\tH_C \sim P_{H | Z_C = z_C}$, which is necessary to run our $\xSKIT$ test. We show how to achieve this by coupling the nonparametric sampler introduced above with ideas of nearest neighbors. This choice is motivated by the need to keep samples in-distribution with respects to the downstream classifier $g$. Intuitively, we propose to:
\begin{enumerate}
    \item Sample $\tZ \sim P_{Z | Z_C = z_C}$, and then
    \item Retrieve the embedding $H\at{i'}$ such that its concept representation $Z\at{i'}$ is the nearest neighbor of $\tZ$.
\end{enumerate}
Step 2. above makes sure that samples are coming from \emph{real images}, and it overcomes some of the hurdles of sampling a high-dimensional vector ($H \in \R^d$, $d \sim 10^2$) conditionally on a low-dimensional one ($z_C \in [-1,1]^{\lvert C \rvert}$, $C \subseteq [m]$, $m \approx 20$). More precisely, recall that $\{(h\at{i}, z\at{i})\}_{i=1}^n$ is a set of $n$ pairs of dense embeddings and their semantics, then
\begin{equation}
    \tH_C = h\at{i'}~\quad\text{s.t.}\quad~i' = \argmin_{i \in [n]}~\|z\at{i} - \tZ\|,~\tZ \sim P_{Z | Z_C = z_C},
\end{equation}
where $P_{Z | Z_C = z_C}$ is approximated with $\tp(z_{-C} \mid z_C)$, $-C \coloneqq [m] \setminus C$ as in \cref{eq:ckde}.

\begin{figure}[t]
    \centering
    \includegraphics[width=\linewidth]{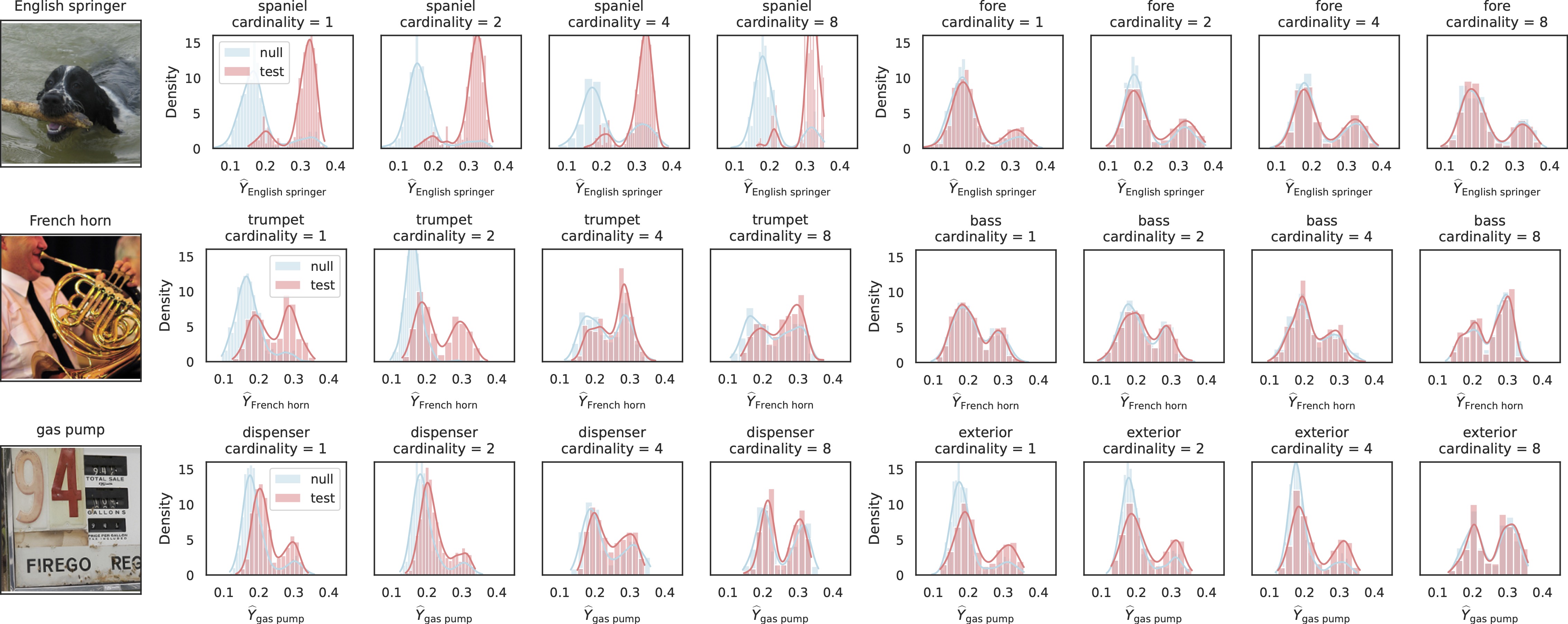}
    \caption{\label{fig:ckde_local_dist}Example test (i.e., $g(\tH_{S \cup \{j\}})$) and null (i.e., $g(\tH_S)$) distributions for a class-specific concept and a non-class specific one on three images from the Imagenette dataset as a function of the cardinality of $S$.}
\end{figure}

\cref{fig:ckde_local_dist} shows some example test (i.e., $g(\tH_{S \cup \{j\}})$) and null distributions (i.e., $g(\tH_S)$) for a class-specific concept and a non-class specific one on the same three images from the Imagenette dataset as in \cref{fig:ckde_cond_z}. We remark that $S$ can be any subset of the remaining concepts, so we show results for random subsets of increasing size. We can see that the test distributions of class-specific concepts are skewed to the right, i.e. including the observed class-specific concept increases the output of the predictor. Furthermore, we see the shift decreases the more concepts are included in $S$, i.e. if $S$ is larger and it contains more information, then the marginal contribution of one additional concept will be smaller. On the other hand, including a non-class-specific concept does not change the distribution of the response of the model, no matter the size of $S$---precisely as our local definition of importance ($\LCnull$) demands.

\newpage
\clearpage
\subsection{\label{supp:experimental_details_awa2}AwA2 Dataset}
In this section, we include additional tables and figures for the AwA2 dataset experiment.

\begin{figure}[h!]
    \centering
    \includegraphics[width=\linewidth]{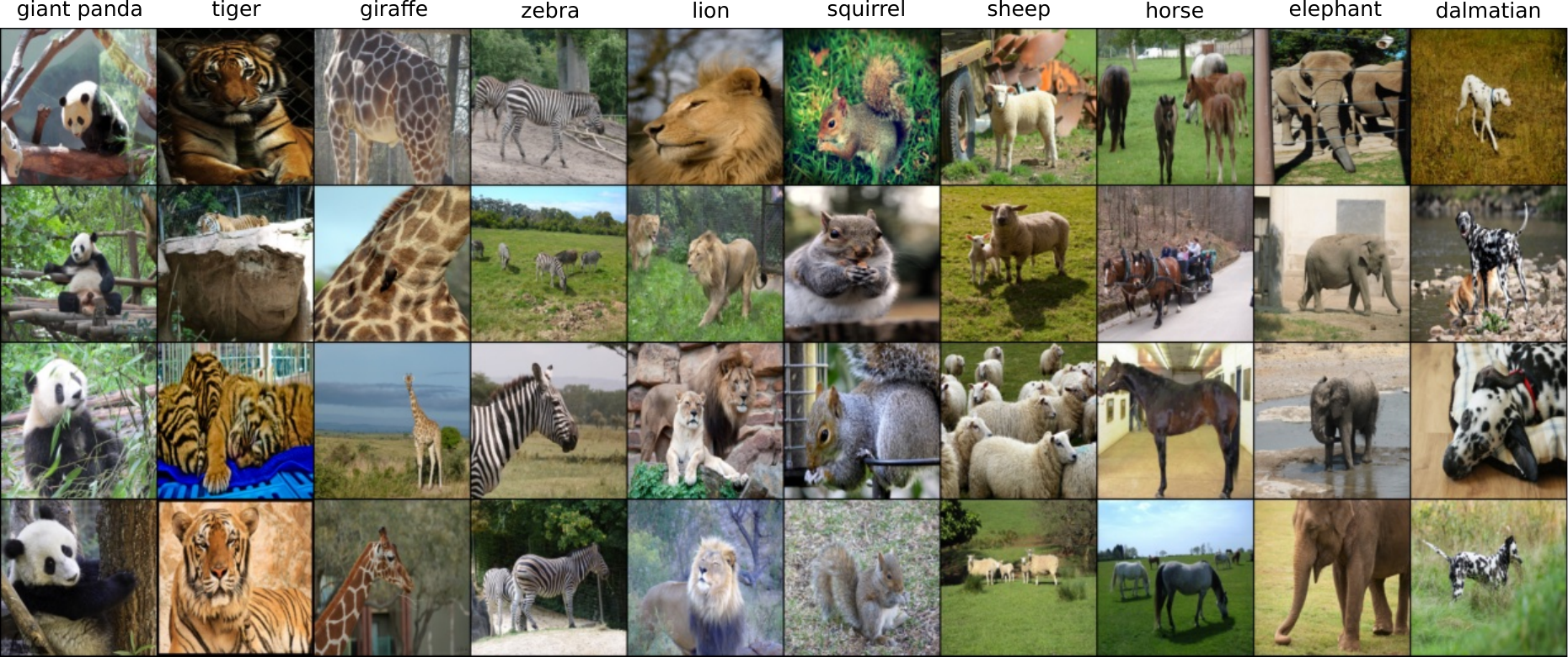}
    \caption{\label{fig:awa2_images}Example images from the top-10 best classified classes in the AwA2 dataset.}
    \vspace{-15pt}
\end{figure}

\begin{table}[h!]
    \caption{\label{table:awa2_accuracy}Zero-shot classification accuracy on the AwA2 dataset.}
    \vspace{-10pt}
    \centering
    \resizebox{\linewidth}{!}{%
    \begin{tabular}{lcccccccccc}
        \toprule
                & \multicolumn{10}{c}{Accuracy}\\
        \cmidrule(r){2-11}
        Model               & giant panda & tiger & giraffe & zebra & lion & squirrel & sheep & horse & elephant & dalmatian\\
        \midrule
        CLIP:RN50           & $100.00\%$ & $100.00\%$ & $100.00\%$ & $100.00\%$ & $99.51\%$ & $99.17\%$ & $97.54\%$ & $98.18\%$ & $94.71\%$ & $96.36\%$\\
        CLIP:ViT-B/32       & $100.00\%$ & $100.00\%$ & $100.00\%$ & $100.00\%$ & $99.51\%$ & $99.58\%$ & $98.59\%$ & $99.39\%$ & $99.04\%$ & $98.18\%$\\
        CLIP:ViT-L/14       & $100.00\%$ & $100.00\%$ & $99.59\%$ & $100.00\%$ & $100.00\%$ & $100.00\%$ & $99.65\%$ & $98.78\%$ & $100.00\%$ & $100.00\%$\\
        OpenClip:ViT-B-32   & $100.00\%$ & $100.00\%$ & $100.00\%$ & $100.00\%$ & $100.00\%$ & $99.58\%$ & $99.65\%$ & $98.78\%$ & $100.00\%$ & $97.27\%$\\
        OpenClip:ViT-L-14   & $100.00\%$ & $100.00\%$ & $100.00\%$ & $100.00\%$ & $100.00\%$ & $100.00\%$ & $100.00\%$ & $100.00\%$ & $100.00\%$ & $100.00\%$\\
        FLAVA               & $100.00\%$ & $98.86\%$ & $100.00\%$ & $100.00\%$ & $99.02\%$ & $99.17\%$ & $100.00\%$ & $99.39\%$ & $99.04\%$ & $98.18\%$\\
        ALIGN               & $100.00\%$ & $100.00\%$ & $100.00\%$ & $99.57\%$ & $100.00\%$ & $99.58\%$ & $99.65\%$ & $99.70\%$ & $100.00\%$ & $99.09\%$\\
        BLIP                & $100.00\%$ & $100.00\%$ & $99.17\%$ & $99.15\%$ & $99.51\%$ & $99.17\%$ & $98.94\%$ & $99.39\%$ & $100.00\%$ & $100.00\%$\\
        \midrule\midrule
        average             & $100.00\% \pm 0.00$ & $99.86\% \pm 0.38$ & $99.84\% \pm 0.29$ & $99.84\% \pm 0.30$ & $99.69\% \pm 0.34$ & $99.53\% \pm 0.33$ & $99.25\% \pm 0.80$ & $99.20\% \pm 0.55$ & $99.10\% \pm 1.71$ & $98.64\% \pm 1.29$\\
        \bottomrule
    \end{tabular}}
\end{table}

\begin{figure}[h!]
\begin{minipage}[t]{0.63\linewidth}
    \captionof{table}{\label{table:awa2_concepts}Class-level attributes tested on the AwA2 dataset.}
    \vspace{-10pt}
    \centering
    \begin{small}
    \resizebox{\linewidth}{!}{%
    \begin{tabular}{lcc}
        \toprule
                & \multicolumn{2}{c}{Attributes (20)}\\
        \cmidrule(r){2-3}
        Class   & present (10) & absent (10)\\
        \midrule
        giant panda &
        \begin{tabular}[c]{@{}c@{}}
            patches, old world, furry, black, big,\\
            white, walks, paws, bulbous, vegetation
        \end{tabular} &
        \begin{tabular}[c]{@{}c@{}}
            flies, flippers, hooves, desert, hairless,\\
            red, blue, horns, plankton, yellow
        \end{tabular}\\
        \cmidrule(r){2-3}
        tiger & 
        \begin{tabular}[c]{@{}c@{}}
            stripes, stalker, meat, meat teeth, hunter,\\
            strong, fierce, old world, muscle, big
        \end{tabular} &
        \begin{tabular}[c]{@{}c@{}}
            strain teeth, tunnels, hops, plankton, bipedal,\\
            tusks, flippers, flies, small, skimmer
        \end{tabular}\\
        \cmidrule(r){2-3}
        giraffe &
        \begin{tabular}[c]{@{}c@{}}
            long neck, long legs, big, quadrupedal, spots,\\
            vegetation, lean, old world, walks, ground
        \end{tabular} &
        \begin{tabular}[c]{@{}c@{}}
            bipedal, hibernate, cave, mountains, ocean,\\
            hunter, water, stripes, gray, fish
        \end{tabular}\\
        \cmidrule(r){2-3}
        zebra & 
        \begin{tabular}[c]{@{}c@{}}
            stripes, old world, quadrupedal, black, white,\\
            ground, group, grazer, walks, hooves
        \end{tabular} &
        \begin{tabular}[c]{@{}c@{}}
            blue, spots, brown, water, coastal,\\
            patches, tusks, claws, scavenger, red
        \end{tabular}\\
        \cmidrule(r){2-3}
        lion &
        \begin{tabular}[c]{@{}c@{}}
            meat, stalker, strong, hunter, meat teeth,\\
            big, fierce, paws, furry, old world
        \end{tabular} &
        \begin{tabular}[c]{@{}c@{}}
            tunnels, blue, horns, skimmer, long neck,\\
            water, flippers, tusks, arctic, spots
        \end{tabular}\\
        \cmidrule(r){2-3}
        squirrel &
        \begin{tabular}[c]{@{}c@{}}
            tail, furry, forager, small, gray,\\
            tree, new world, forest, vegetation, brown
        \end{tabular} &
        \begin{tabular}[c]{@{}c@{}}
            horns, blue, yellow, tusks, meat teeth,\\
            flippers, scavenger, desert, plankton, strain teeth
        \end{tabular}\\
        \cmidrule(r){2-3}
        sheep &
        \begin{tabular}[c]{@{}c@{}}
            white, quadrupedal, walks, group, vegetation,\\
            grazer, ground, fields, furry, new world
        \end{tabular} &
        \begin{tabular}[c]{@{}c@{}}
            arctic, flippers, insects, paws, long neck,\\
            red, yellow, swims, plankton, hands
        \end{tabular}\\
        \cmidrule(r){2-3}
        horse &
        \begin{tabular}[c]{@{}c@{}}
            hooves, fast, grazer, big, long legs\\
            tail, quadrupedal, fields, brown, strong
        \end{tabular} &
        \begin{tabular}[c]{@{}c@{}}
            arctic, tree, bipedal, plankton, fish,\\
            stripes, ocean, strain teeth, scavenger, orange
        \end{tabular}\\
        \cmidrule(r){2-3}
        elephant &
        \begin{tabular}[c]{@{}c@{}}
            big, old world, gray, tough skin, quadrupedal\\
            tusks, hairless, strong, ground, walks
        \end{tabular} &
        \begin{tabular}[c]{@{}c@{}}
            claws, flippers, orange, swims, ocean\\
            stripes, tunnels, plankton, coastal, strain teeth
        \end{tabular}\\
        \cmidrule(r){2-3}
        dalmatian &
        \begin{tabular}[c]{@{}c@{}}
            big, old world, gray, tough skin, quadrupedal\\
            tusks, hairless, strong, ground, walks
        \end{tabular} &
        \begin{tabular}[c]{@{}c@{}}
            claws, flippers, orange, swims, ocean\\
            stripes, tunnels, plankton, coastal, strain teeth
        \end{tabular}\\
        \bottomrule
    \end{tabular}}
    \end{small}
\end{minipage}
\hfill
\begin{minipage}[t]{0.30\linewidth}
    \captionof{table}{\label{table:awa2_f1_supp}Comparison of $\SKIT$, $\cSKIT$, and PCBM on the AwA2 dataset for all models.}
    \vspace{-10pt}
    \begin{small}
    \resizebox{\linewidth}{!}{%
    \begin{tabular}{lccc}
        \toprule
        Model & Importance & Accuracy & $f_1$ score\\
        \midrule
        \multirow{3}{*}{CLIP:RN50}          & $\SKIT$   & $98.55\%$ & $0.69$\\
                                            & $\cSKIT$  & $98.55\%$ & $0.58$\\
                                            & PCBM      & $94.35\%$ & $0.45$\\
        \cmidrule{2-4}
        \multirow{3}{*}{CLIP:ViT-B/32}      & $\SKIT$   & $99.43\%$ & $0.75$\\
                                            & $\cSKIT$  & $99.43\%$ & $0.58$\\
                                            & PCBM      & $95.39\%$ & $0.48$\\
        \cmidrule{2-4}
        \multirow{3}{*}{CLIP:ViT-L/14}      & $\SKIT$   & $99.80\%$ & $0.75$\\
                                            & $\cSKIT$  & $99.80\%$ & $0.58$\\
                                            & PCBM      & $94.81\%$ & $0.52$\\
        \cmidrule{2-4}
        \multirow{3}{*}{OpenClip:ViT-B-32}  & $\SKIT$   & $99.53\%$ & $0.55$\\
                                            & $\cSKIT$  & $99.53\%$ & $0.56$\\
                                            & PCBM      & $93.84\%$ & $0.53$\\
        \cmidrule{2-4}                         
        \multirow{3}{*}{OpenClip:ViT-L-14}  & $\SKIT$   & $100.0\%$ & $0.57$\\
                                            & $\cSKIT$  & $100.0\%$ & $0.54$\\
                                            & PCBM      & $95.87\%$ & $0.50$\\
        \cmidrule{2-4}
        \multirow{3}{*}{FLAVA}              & $\SKIT$   & $99.37\%$ & $0.61$\\
                                            & $\cSKIT$  & $99.37\%$ & $0.59$\\
                                            & PCBM      & $96.08\%$ & $0.48$\\
        \cmidrule{2-4}
        \multirow{3}{*}{ALIGN}              & $\SKIT$   & $99.76\%$ & $0.62$\\
                                            & $\cSKIT$  & $99.76\%$ & $0.53$\\
                                            & PCBM      & $96.45\%$ & $0.42$\\
        \cmidrule{2-4}
        \multirow{3}{*}{BLIP}               & $\SKIT$   & $99.53\%$ & $0.69$\\
                                            & $\cSKIT$  & $99.53\%$ & $0.62$\\
                                            & PCBM      & $93.85\%$ & $0.54$\\
        \bottomrule
    \end{tabular}}
    \end{small}
\end{minipage}
\end{figure}

\begin{figure}[h!]
    \vspace{-40pt}
    \centering
    \includegraphics[width=0.8\linewidth]{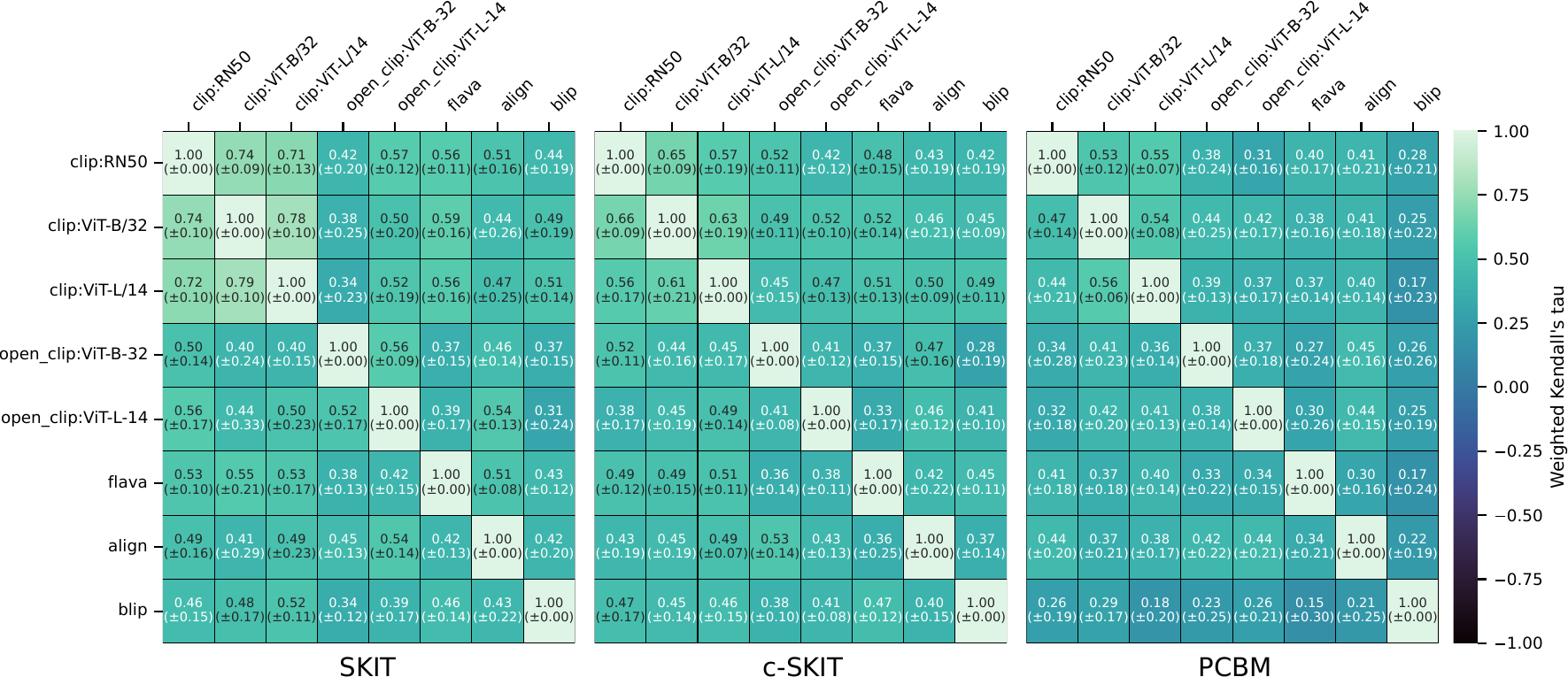}
    \caption{\label{fig:awa2_rank_agreement}Rank agreement comparison between $\SKIT$, $\cSKIT$, and PCBM on the AwA2 dataset across 8 different vision-language models. Results are reported as mean and standard deviation over the 10 classes considered in this experiment.}
\end{figure}

\begin{figure}[h!]
    \centering
    \includegraphics[width=0.8\linewidth]{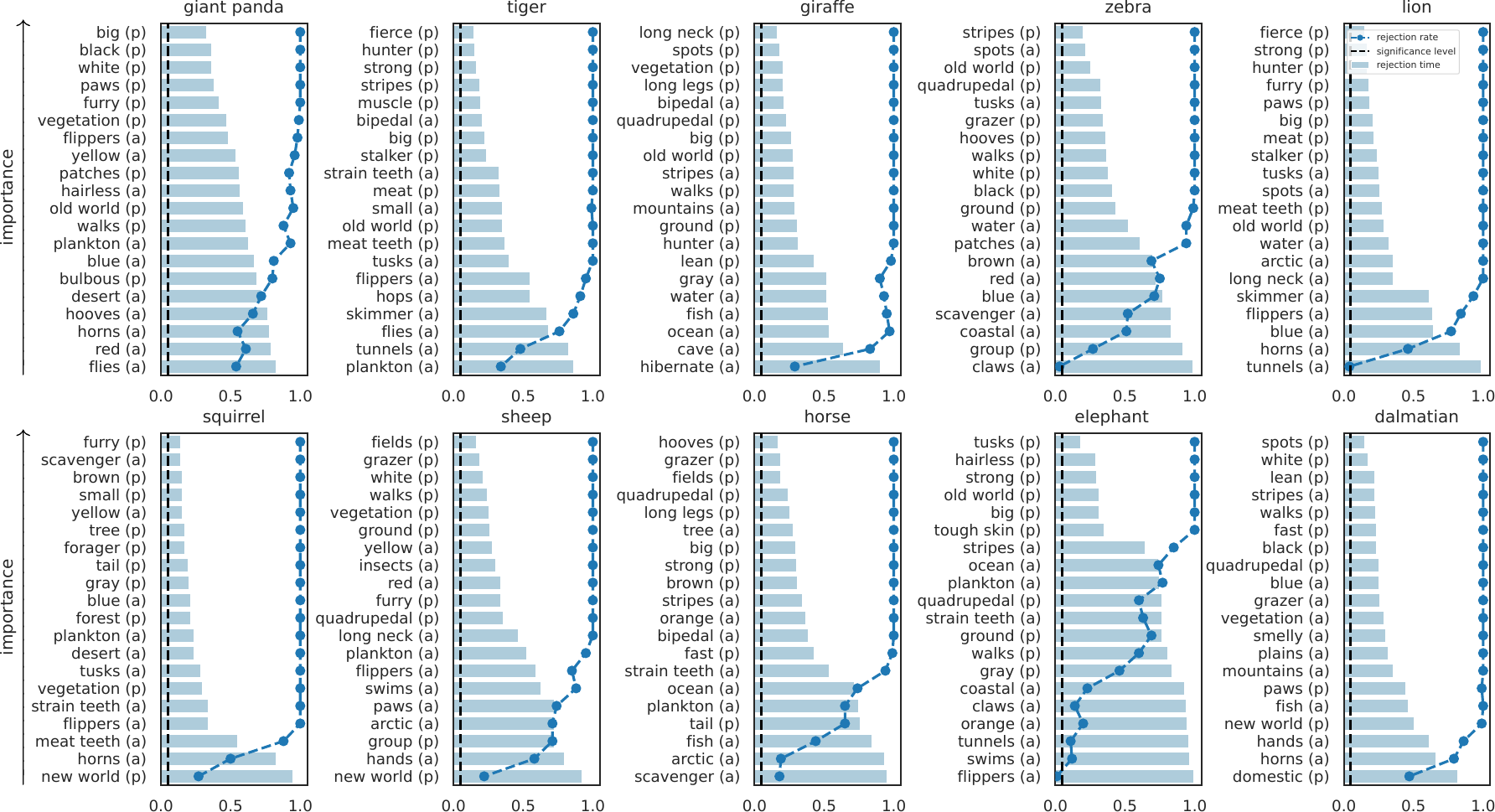}
    \caption{\label{fig:awa2_global_supp}$\SKIT$ importance results for CLIP:ViT-L/14 on the AwA2 dataset.}
\end{figure}

\begin{figure}[h!]
    \centering
    \includegraphics[width=0.8\linewidth]{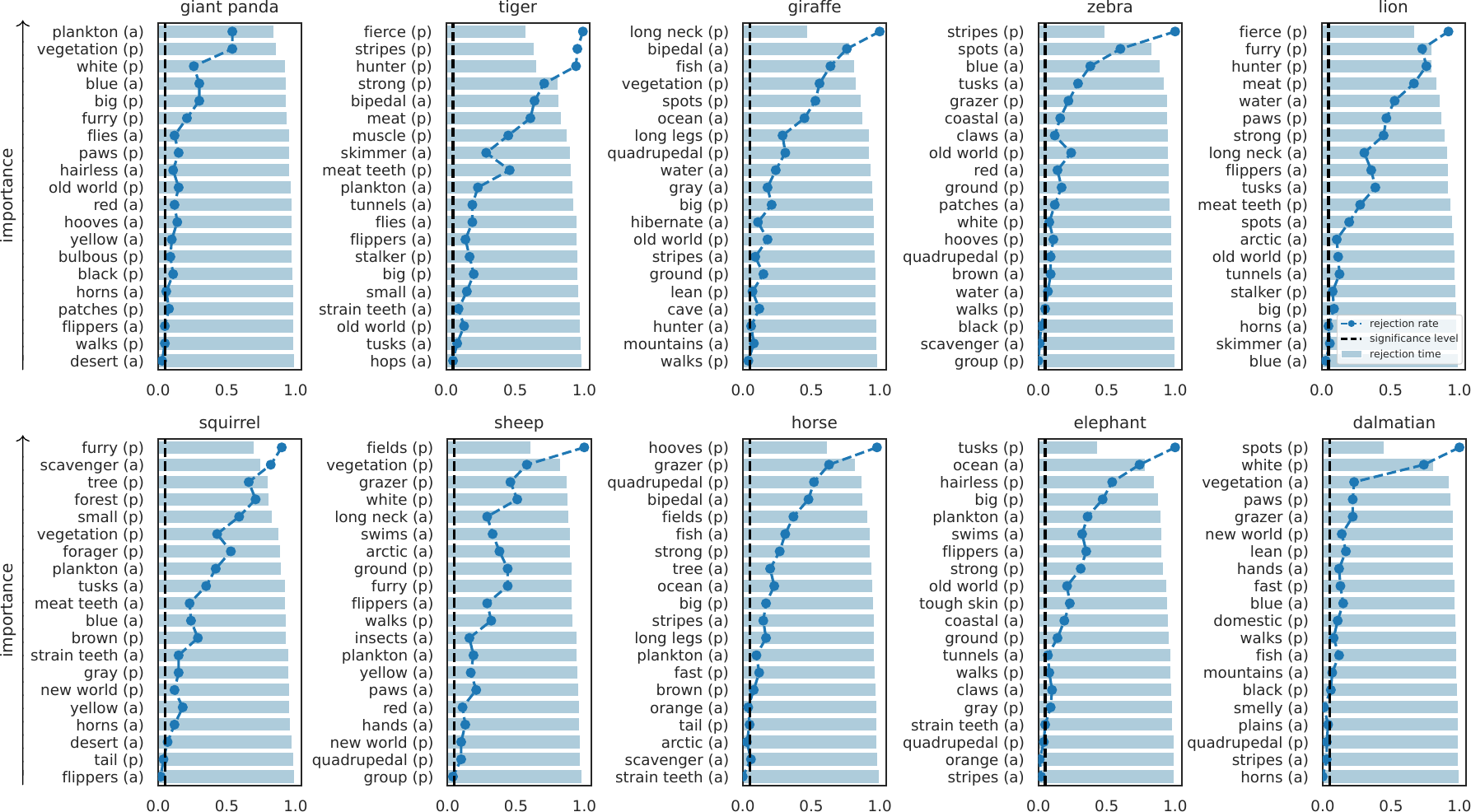}
    \caption{\label{fig:awa2_global_cond_supp}$\cSKIT$ importance results for CLIP:ViT-L/14 on the AwA2 dataset.}
\end{figure}

\newpage
\clearpage
\subsection{CUB Dataset}
In this section, we include additional tables and figures for the CUB dataset experiment.

\begin{figure}[h!]
    \centering
    \includegraphics[width=\linewidth]{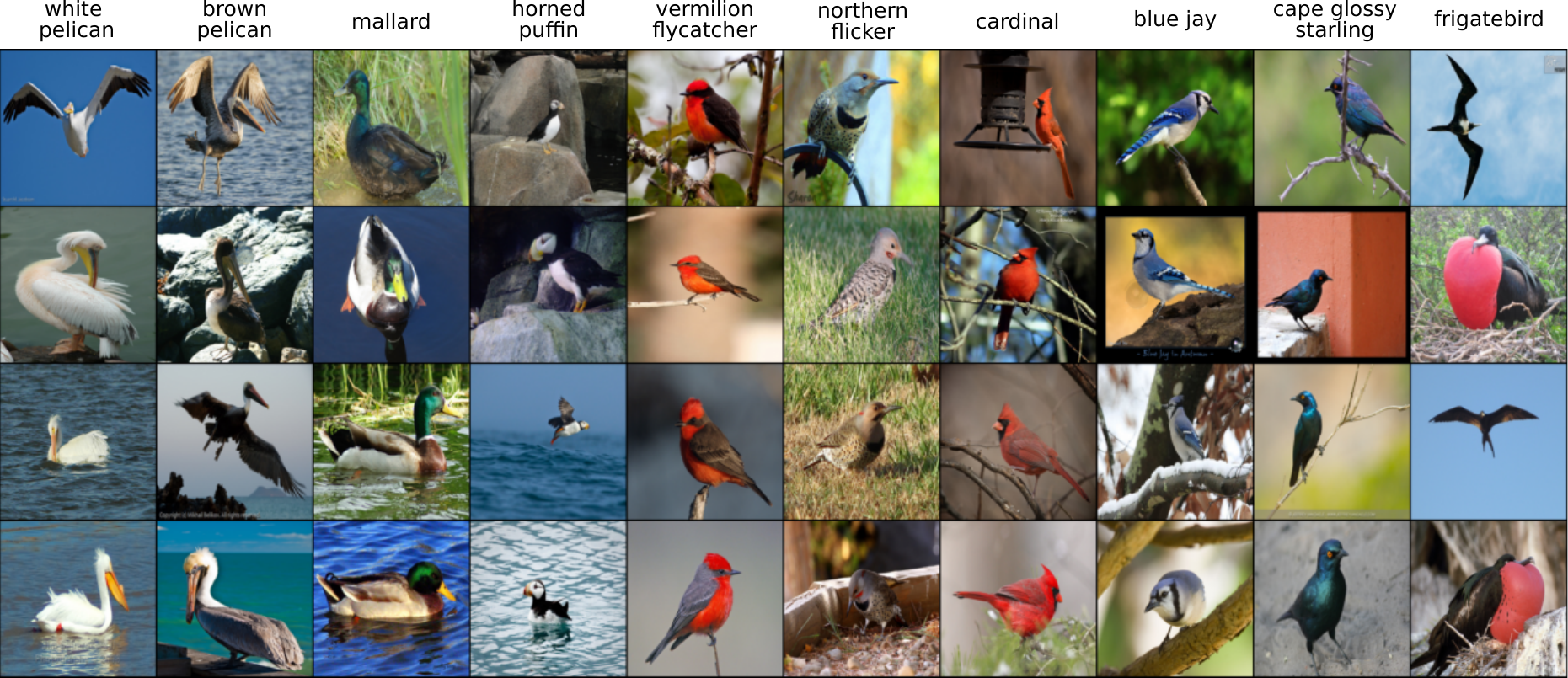}
    \caption{\label{fig:cub_images}Example test images from the CUB dataset with their respective classes.}
\end{figure}

\begin{table}[h!]
    \caption{\label{table:cub_accuracy}Zero-shot classification accuracy on the CUB dataset.}
    \vspace{-10pt}
    \centering
    \resizebox{\linewidth}{!}{%
    \begin{tabular}{lcccccccccc}
        \toprule
                & \multicolumn{10}{c}{Accuracy}\\
        \cmidrule(r){2-11}
        Model               & white pelican & brown pelican & mallard & horned puffin & vermilion flycatcher & northern flicker & cardinal & blue jay & cape glossy starling & frigatebird\\
        \midrule
        CLIP:RN50           & $92.00\%$ & $95.00\%$ & $95.00\%$ & $86.67\%$ & $88.33\%$ & $78.33\%$ & $63.16\%$ & $65.00\%$ & $73.33\%$ & $78.33\%$\\
        CLIP:ViT-B/32       & $98.00\%$ & $96.67\%$ & $90.00\%$ & $93.33\%$ & $93.33\%$ & $98.33\%$ & $71.93\%$ & $83.33\%$ & $90.00\%$ & $80.00\%$\\
        CLIP:ViT-L/14       & $98.00\%$ & $100.00\%$ & $100.00\%$ & $95.00\%$ & $100.00\%$ & $98.33\%$ & $100.00\%$ & $85.00\%$ & $100.00\%$ & $96.67\%$\\
        OpenClip:ViT-B-32   & $100.00\%$ & $98.33\%$ & $98.33\%$ & $88.33\%$ & $96.67\%$ & $96.67\%$ & $100.00\%$ & $88.33\%$ & $96.67\%$ & $88.33\%$\\
        OpenClip:ViT-L-14   & $100.00\%$ & $100.00\%$ & $100.00\%$ & $96.67\%$ & $100.00\%$ & $100.00\%$ & $100.00\%$ & $91.67\%$ & $95.00\%$ & $93.33\%$\\
        FLAVA               & $100.00\%$ & $100.00\%$ & $98.33\%$ & $95.00\%$ & $91.67\%$ & $76.67\%$ & $100.00\%$ & $91.67\%$ & $88.33\%$ & $90.00\%$\\
        ALIGN               & $96.00\%$ & $96.67\%$ & $95.00\%$ & $98.33\%$ & $78.33\%$ & $86.67\%$ & $92.98\%$ & $83.33\%$ & $80.00\%$ & $55.00\%$\\
        BLIP                & $98.00\%$ & $80.00\%$ & $96.67\%$ & $65.00\%$ & $55.00\%$ & $61.67\%$ & $61.40\%$ & $90.00\%$ & $73.33\%$ & $75.00\%$\\
        \midrule\midrule
        average             & $97.75\% \pm 2.54\%$ & $95.83\% \pm 6.24\%$ & $96.67\% \pm 3.12\%$ & $89.79\% \pm 10.08\%$ & $87.92\% \pm 14.09\%$ & $87.08\% \pm 12.96\%$ & $86.18\% \pm 16.42\%$ & $84.79\% \pm 8.14\%$ & $87.08\% \pm 9.75\%$ & $82.08\% \pm 12.47\%$\\
        \bottomrule
    \end{tabular}}
\end{table}

\begin{figure}[h!]
    \centering
    \subcaptionbox{\label{fig:cub_rank_agreement}Rank agreement.}{\includegraphics[width=0.8\linewidth]{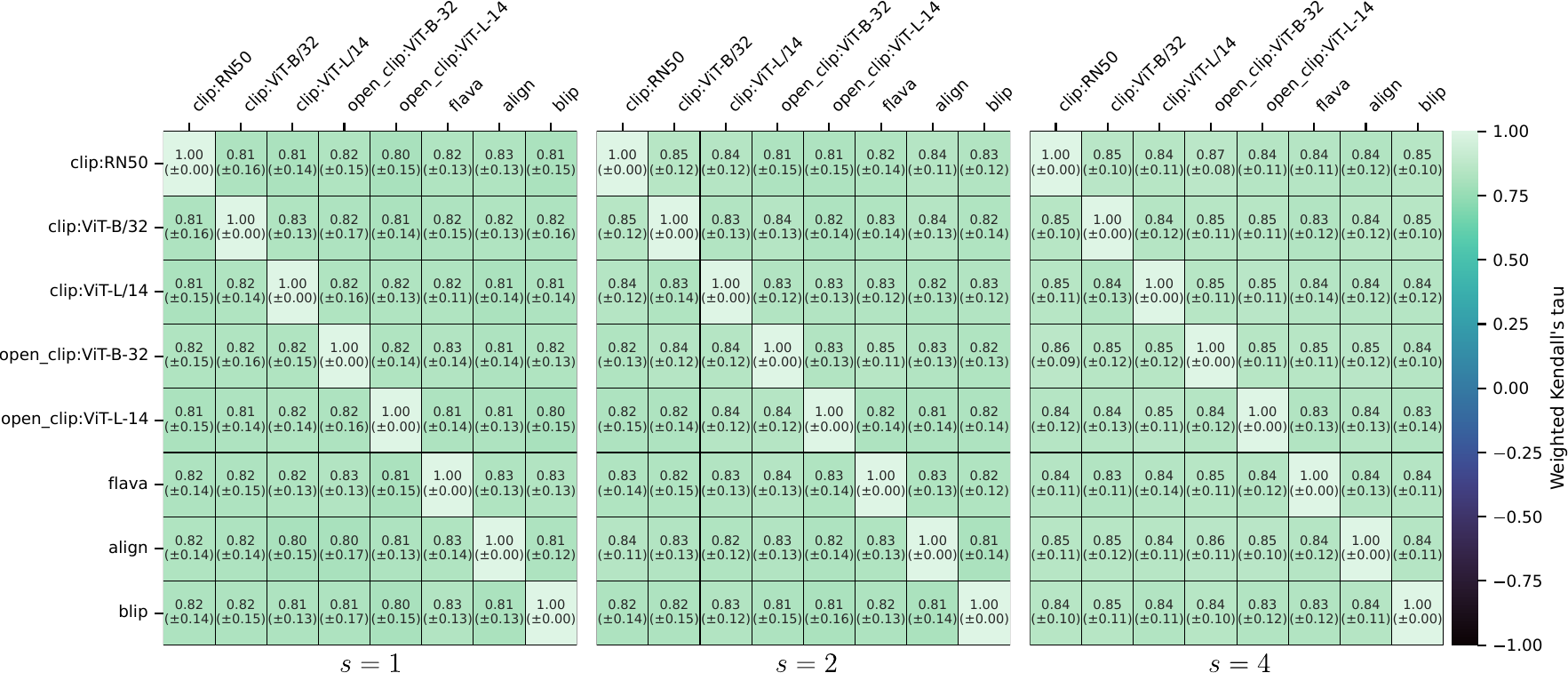}}
    \subcaptionbox{\label{fig:cub_importance_agreement}Importance agreement.}{\includegraphics[width=0.8\linewidth]{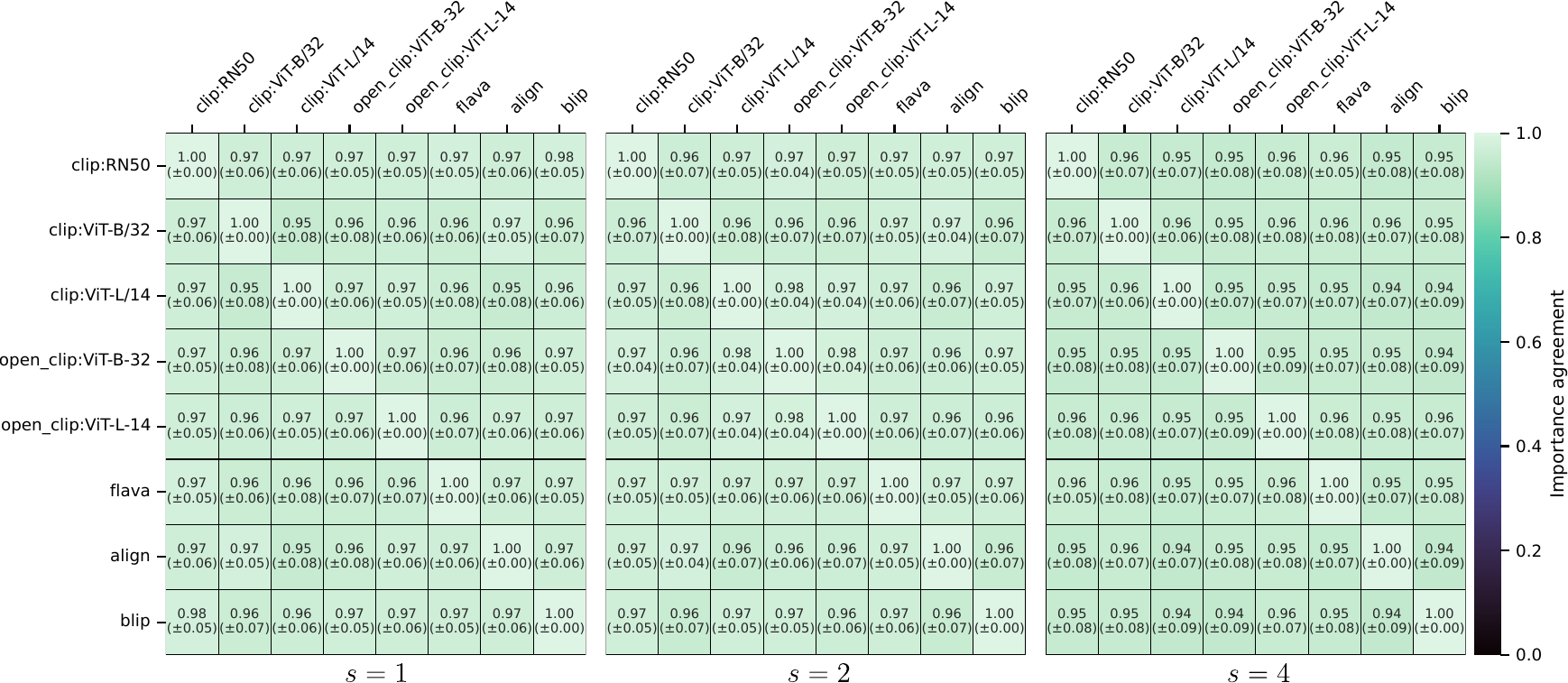}}
    \caption{\label{fig:cub_agreement}$\xSKIT$ agreement results on the CUB dataset across 8 different vision-language models as a function of conditioning set size, $s$. Results are reported as means and standard deviations over the random 100 images used in the experiment.}
\end{figure}

\begin{table}[h!]
    \caption{\label{table:cub_f1}$\xSKIT$ importance detection $f_1$ score on the CUB dataset as a function of conditioning set size $s$ for all models used in the experiment.}
    \vspace{-10pt}
    \centering
    \begin{small}
    \resizebox{0.49\linewidth}{!}{%
    \begin{tabular}{lccc}
    \toprule
            & \multicolumn{3}{c}{Importance $f_1$ score}\\
    \cmidrule(r){2-4}
    Model               & $s=1$ & $s=2$ & $s=4$\\
    \midrule
    CLIP:RN50           & $0.92 \pm 0.15$ & $0.91 \pm 0.15$ & $0.88 \pm 0.17$\\
    CLIP:ViT-B/32       & $0.93 \pm 0.14$ & $0.95 \pm 0.09$ & $0.90 \pm 0.13$\\
    CLIP:ViT-L/14       & $0.92 \pm 0.16$ & $0.92 \pm 0.16$ & $0.88 \pm 0.15$\\
    OpenClip:ViT-B-32   & $0.91 \pm 0.17$ & $0.92 \pm 0.15$ & $0.89 \pm 0.14$\\
    \bottomrule
    \end{tabular}}
    \hfill
    \resizebox{0.49\linewidth}{!}{%
    \begin{tabular}{lccc}
    \toprule
            & \multicolumn{3}{c}{Importance $f_1$ score}\\
    \cmidrule(r){2-4}
    Model               & $s=1$ & $s=2$ & $s=4$\\
    \midrule
    OpenClip:ViT-L-14   & $0.94 \pm 0.13$ & $0.92 \pm 0.16$ & $0.88 \pm 0.16$\\
    FLAVA               & $0.93 \pm 0.13$ & $0.93 \pm 0.13$ & $0.89 \pm 0.15$\\
    ALIGN               & $0.95 \pm 0.13$ & $0.94 \pm 0.12$ & $0.91 \pm 0.11$\\
    BLIP                & $0.92 \pm 0.15$ & $0.91 \pm 0.17$ & $0.86 \pm 0.19$\\
    \bottomrule
    \end{tabular}}
    \end{small}
\end{table}

\begin{figure}[h!]
    \centering
    \includegraphics[width=\linewidth]{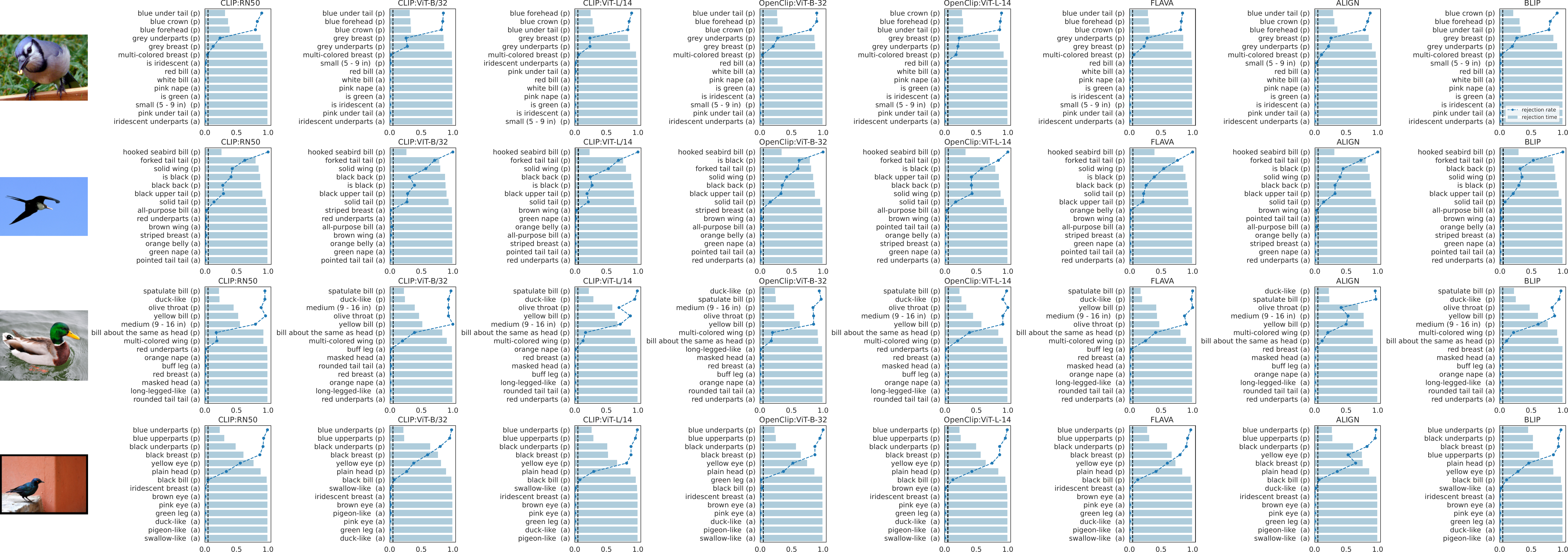}
    \caption{\label{fig:cub_results_supp}$\xSKIT$ ranks of importance across all models for four example images in the CUB dataset. Results are averaged over 100 repetitions of the tests with $\tau^{\max} = 200$, and conditioning on random subsets of 1 concept (i.e., $s = 1$).}
\end{figure}

\newpage
\clearpage
\subsection{Imagenette Dataset}
In this section, we include additional tables and figures for the Imagenette dataset experiment.

\begin{figure}[h!]
    \centering
    \includegraphics[width=\linewidth]{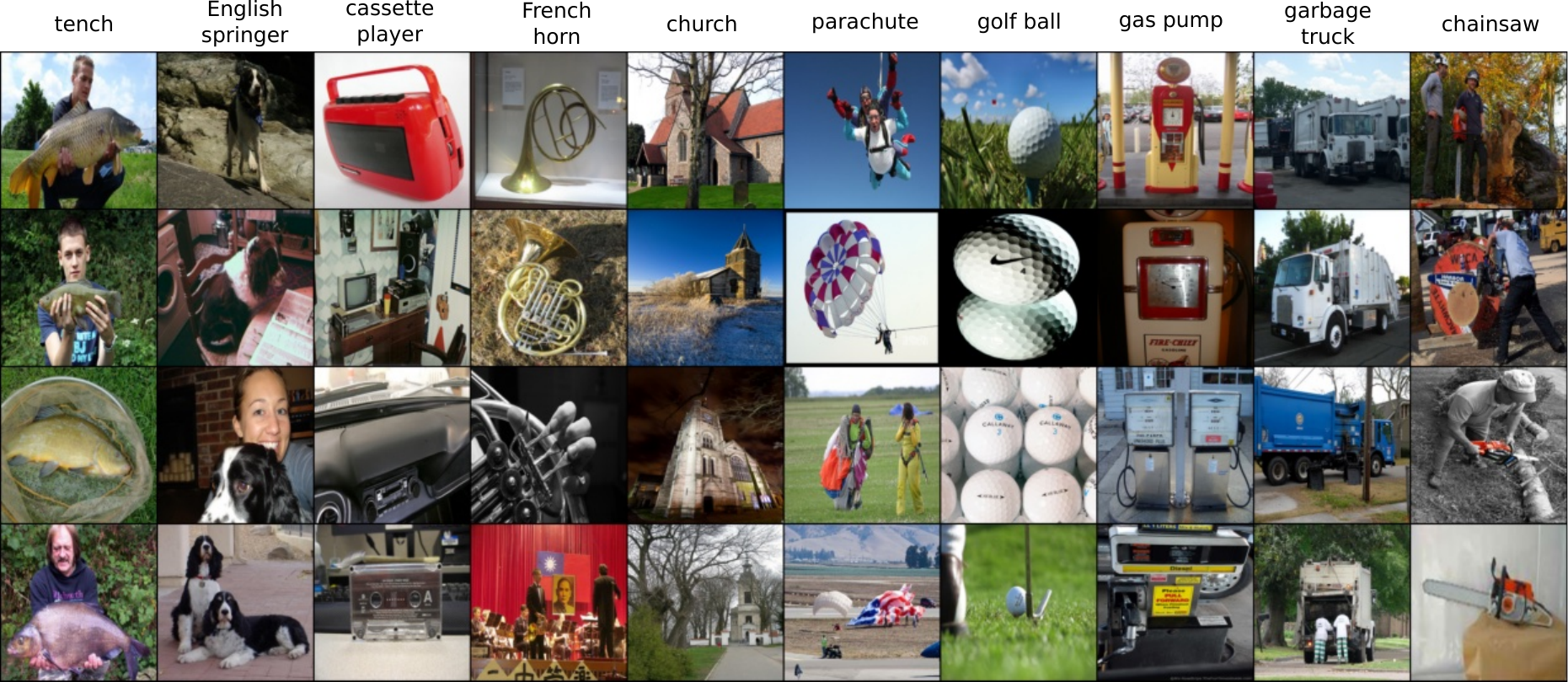}
    \caption{\label{fig:imagenette_images}Example images from the Imagenette dataset with their respective labels.}
    \vspace{-10pt}
\end{figure}

\begin{table}[h!]
    \caption{\label{table:imagenette_accuracy}Zero-shot classification accuracy on the Imagenette dataset.}
    \vspace{-10pt}
    \centering
    \resizebox{\linewidth}{!}{%
    \begin{tabular}{lcccccccccc}
        \toprule
                & \multicolumn{10}{c}{Accuracy}\\
        \cmidrule(r){2-11}
        Model & tench & English springer & cassette player & French horn & church & parachute & golf ball & gas pump & garbage truck & chainsaw\\
        \midrule
        CLIP:RN50 & $99.48\%$ & $99.49\%$ & $95.80\%$ & $99.49\%$ & $100.00\%$ & $99.74\%$ & $97.74\%$ & $91.65\%$ & $98.71\%$ & $96.63\%$\\
        CLIP:ViT-B/32 & $99.74\%$ & $99.24\%$ & $96.08\%$ & $98.73\%$ & $100.00\%$ & $98.97\%$ & $99.25\%$ & $97.61\%$ & $99.49\%$ & $99.22\%$\\
        CLIP:ViT-L/14 & $99.22\%$ & $99.75\%$ & $99.72\%$ & $99.75\%$ & $100.00\%$ & $99.74\%$ & $99.75\%$ & $100.00\%$ & $99.74\%$ & $99.74\%$\\
        OpenClip:ViT-B-32 & $100.00\%$ & $100.00\%$ & $98.32\%$ & $99.24\%$ & $99.76\%$ & $99.49\%$ & $99.25\%$ & $97.61\%$ & $99.23\%$ & $97.67\%$\\
        OpenClip:ViT-L-14 & $100.00\%$ & $100.00\%$ & $99.72\%$ & $99.75\%$ & $100.00\%$ & $100.00\%$ & $99.50\%$ & $99.28\%$ & $99.49\%$ & $98.96\%$\\
        FLAVA & $99.74\%$ & $99.24\%$ & $97.76\%$ & $98.22\%$ & $100.00\%$ & $98.72\%$ & $99.00\%$ & $97.61\%$ & $99.23\%$ & $96.37\%$\\
        ALIGN & $99.74\%$ & $100.00\%$ & $99.44\%$ & $99.75\%$ & $100.00\%$ & $99.49\%$ & $99.75\%$ & $99.28\%$ & $99.49\%$ & $98.96\%$\\
        BLIP & $100.00\%$ & $98.48\%$ & $93.56\%$ & $99.75\%$ & $100.00\%$ & $99.49\%$ & $99.50\%$ & $98.09\%$ & $99.23\%$ & $98.19\%$\\
        \midrule\midrule
        average & $99.74\% \pm 0.26\%$ & $99.53\% \pm 0.50\%$ & $97.55\% \pm 2.09\%$ & $99.33\% \pm 0.54\%$ & $99.97\% \pm 0.08\%$ & $99.46\% \pm 0.39\%$ & $99.22\% \pm 0.61\%$ & $97.64\% \pm 2.43\%$ & $99.33\% \pm 0.29\%$ & $98.22\% \pm 1.15\%$\\
        \bottomrule
    \end{tabular}}
\end{table}

\begin{figure}[h!]
    \centering
    \includegraphics[width=0.8\linewidth]{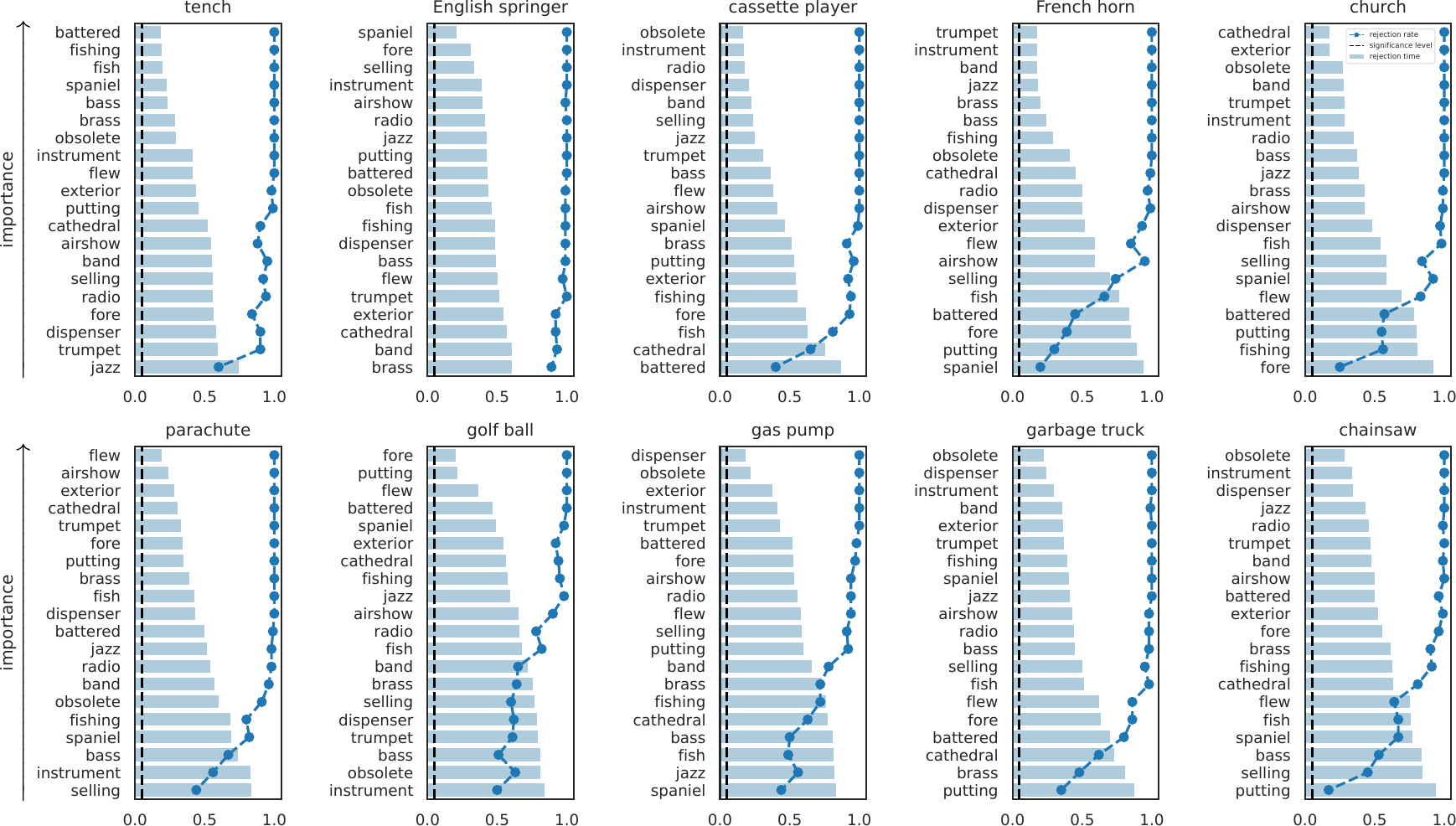}
    \caption{\label{fig:imagenette_global_supp}$\SKIT$ importance results for CLIP:ViT-L/14 on the Imagenette dataset.}
\end{figure}

\begin{figure}[h!]
    \centering
    \vspace{-50pt}
    \includegraphics[width=0.8\linewidth]{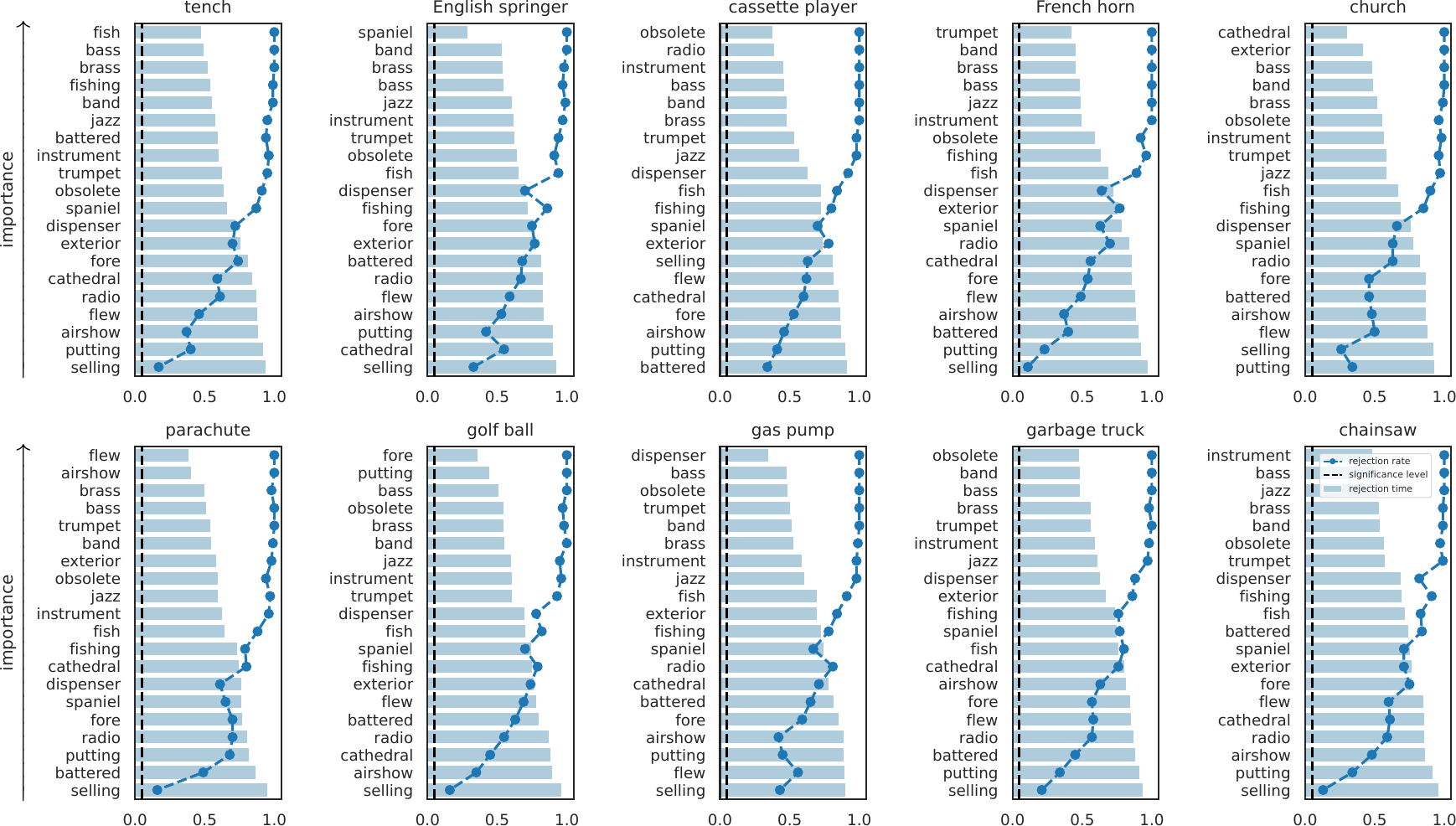}
    \caption{\label{fig:imagenette_global_cond_supp}$\cSKIT$ importance results for CLIP:ViT-L/14 on the Imagenette dataset.}
\end{figure}

\begin{figure}[h!]
    \centering
    \includegraphics[width=0.8\linewidth]{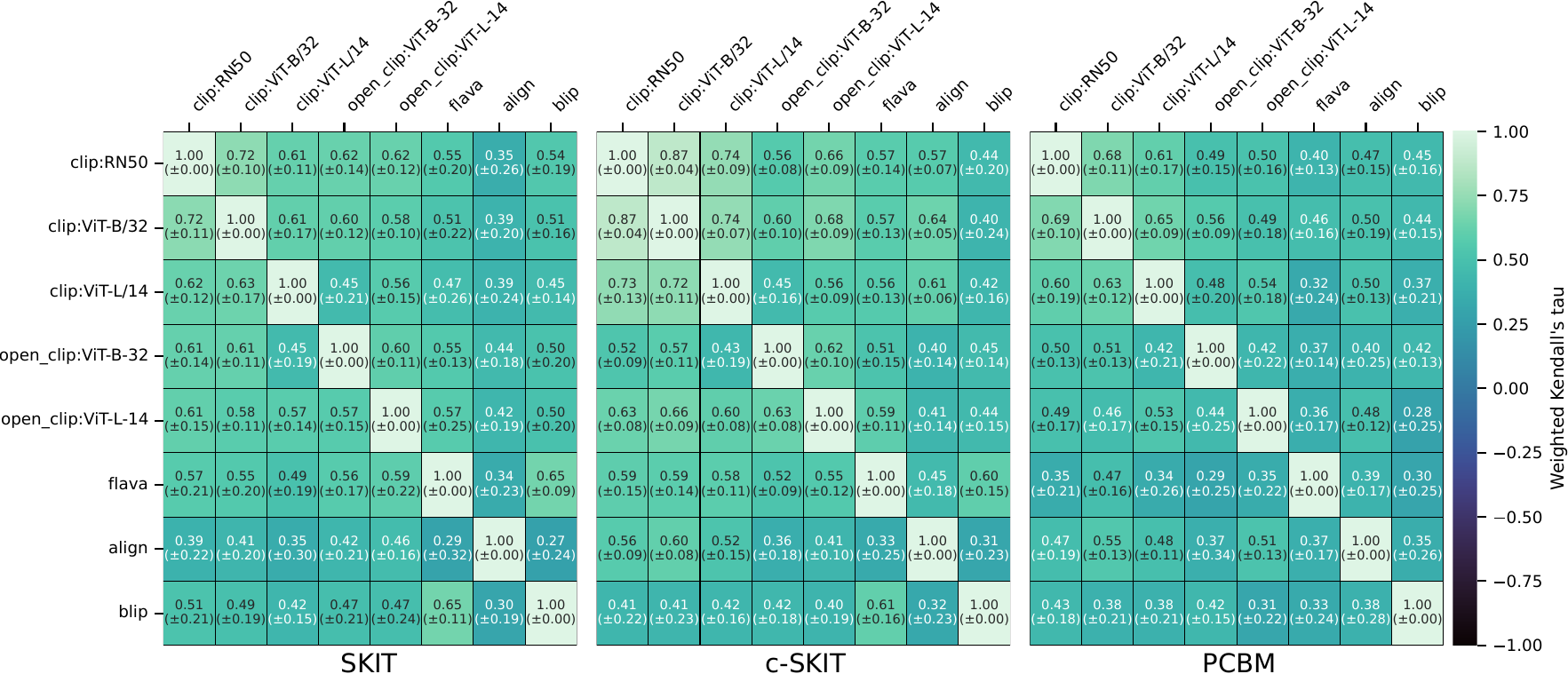}
    \caption{\label{fig:imagenette_rank_agreement}Rank agreement comparison between $\SKIT$, $\cSKIT$, and PCBM on the Imagenette dataset across 8 different vision-language models. Result are reported as mean and standard deviation over the 10 classes considered in the experiment.}
\end{figure}

\begin{figure}[h!]
    \centering
    \includegraphics[width=0.8\linewidth]{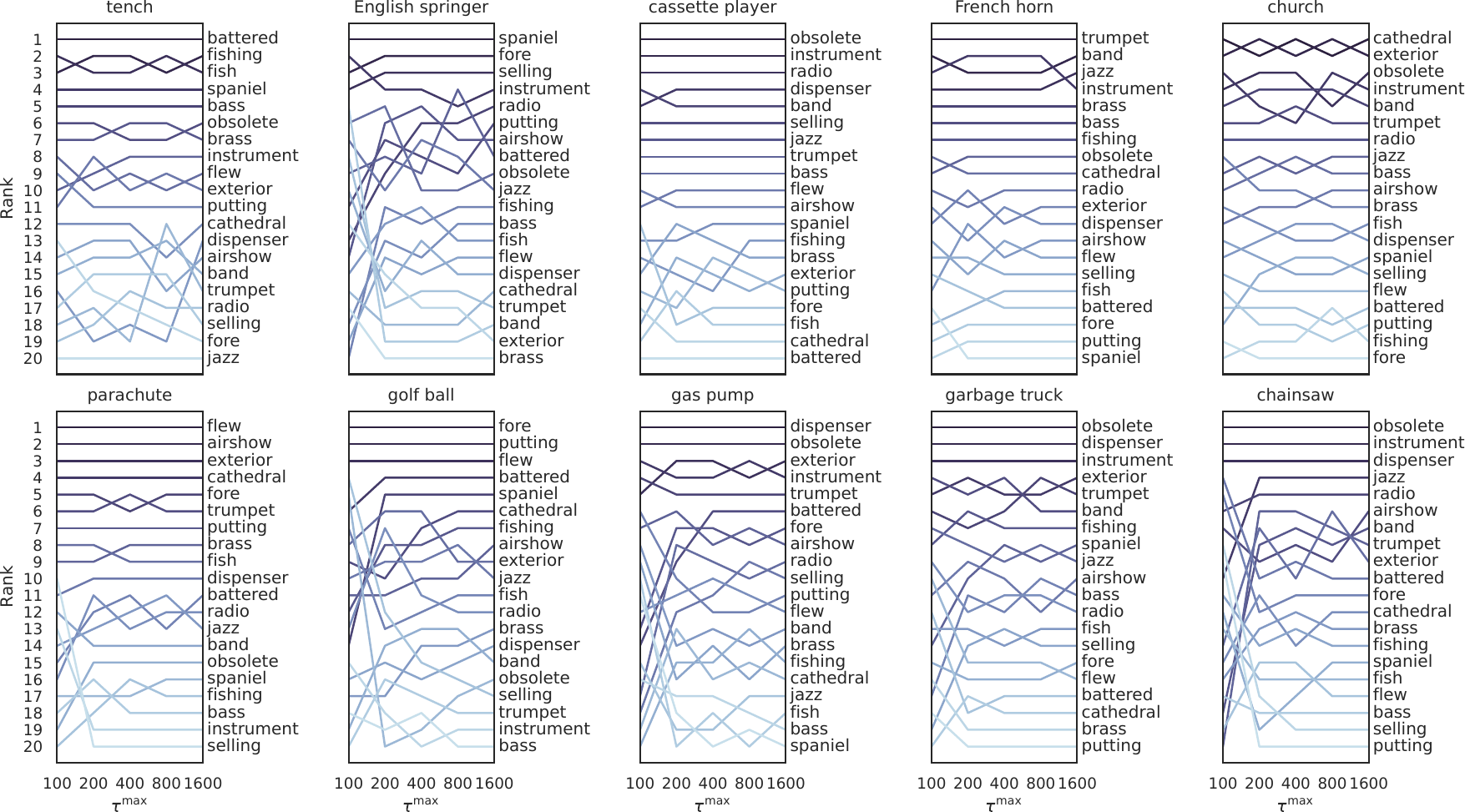}
    \caption{\label{fig:imagenette_global_tau}$\SKIT$ ranks of importance for CLIP:ViT-L/14 on the Imagenette dataset as a function of $\tau^{\max}$.}
\end{figure}

\begin{figure}[h!]
    \centering
    \includegraphics[width=0.8\linewidth]{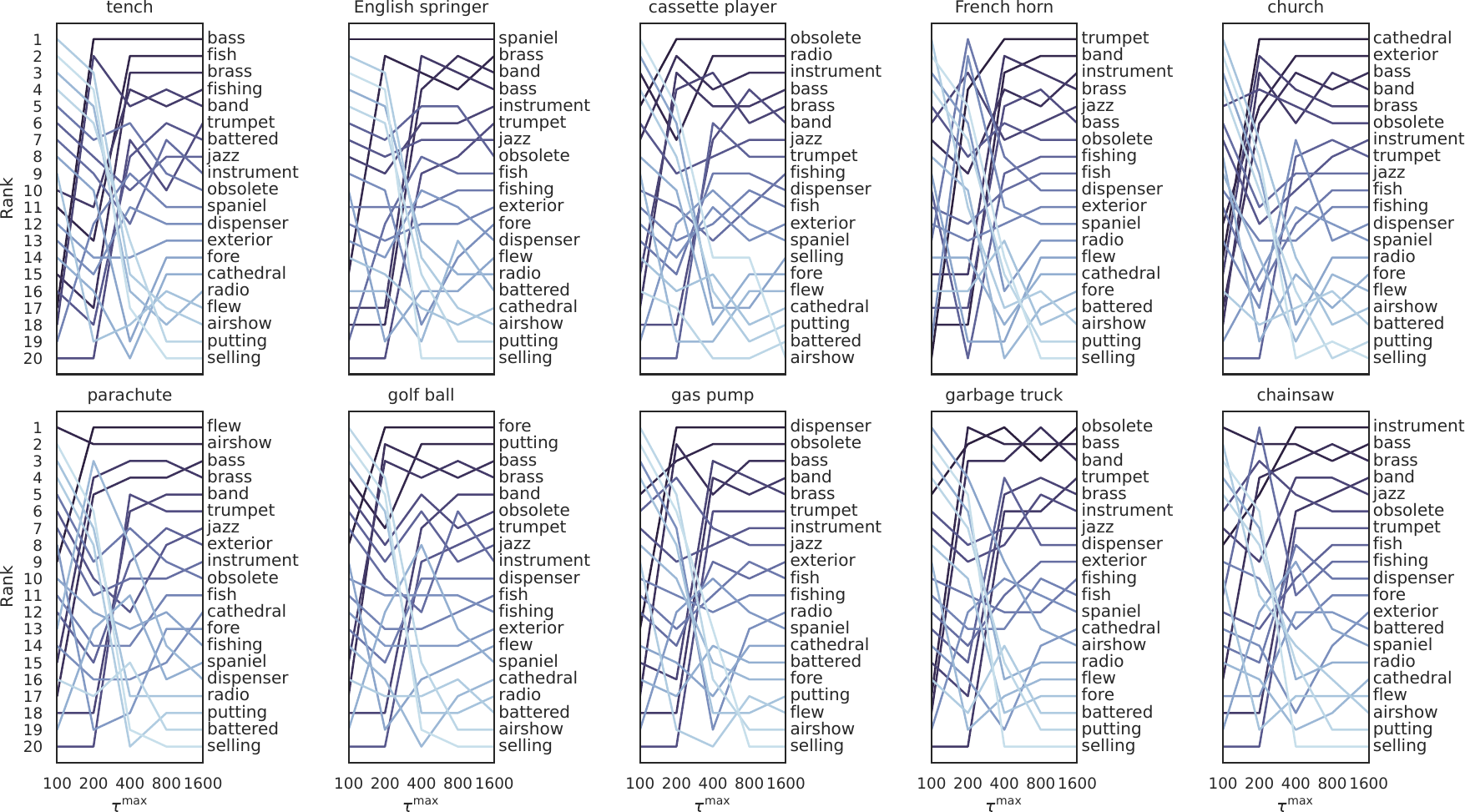}
    \caption{\label{fig:imagenette_global_cond_tau}$\cSKIT$ ranks of importance for CLIP:ViT-L/14 on the Imagenette dataset as a function of $\tau^{\max}$.}
\end{figure}

\begin{figure}[h!]
    \centering
    \subcaptionbox{\label{fig:imagenette_local_rank_agreement}Rank agreement.}{\includegraphics[width=0.8\linewidth]{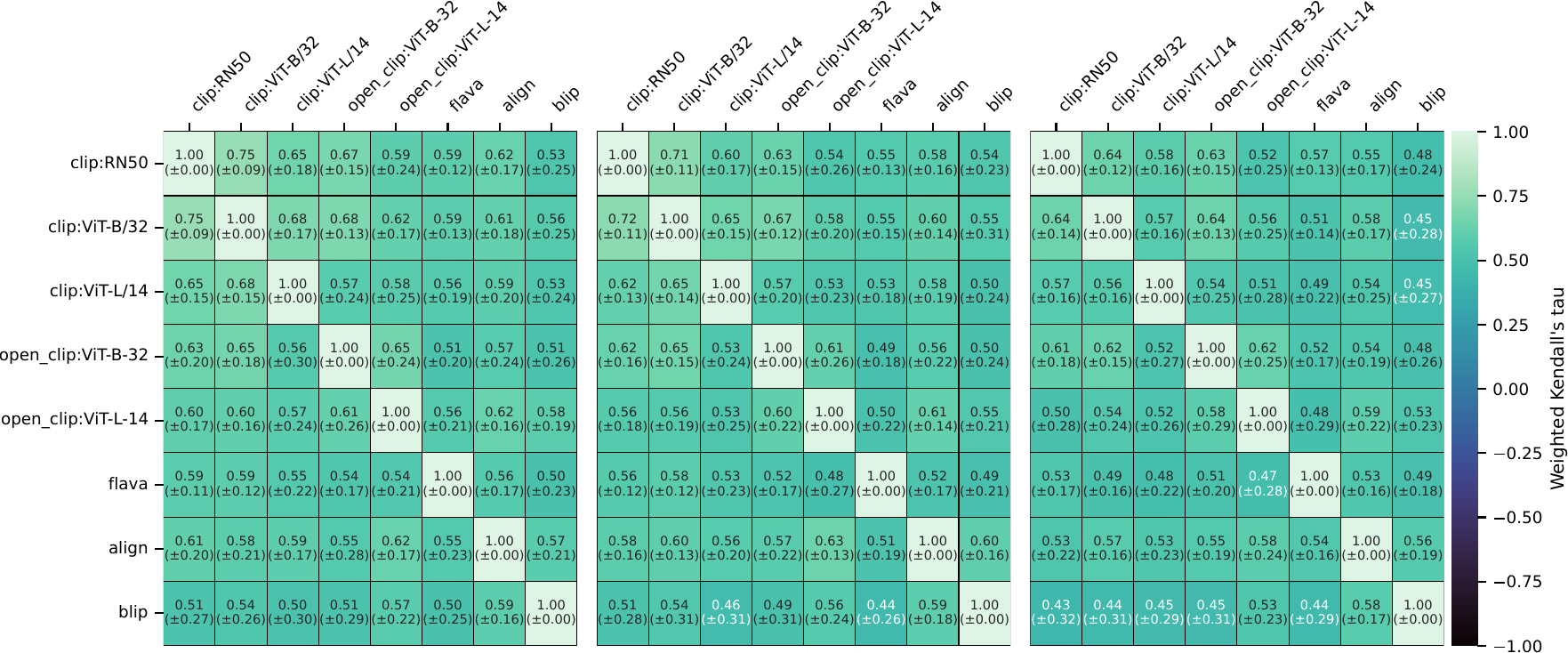}}
    \subcaptionbox{\label{fig:imagenette_local_importance_agreement}Importance agreement.}{\includegraphics[width=0.8\linewidth]{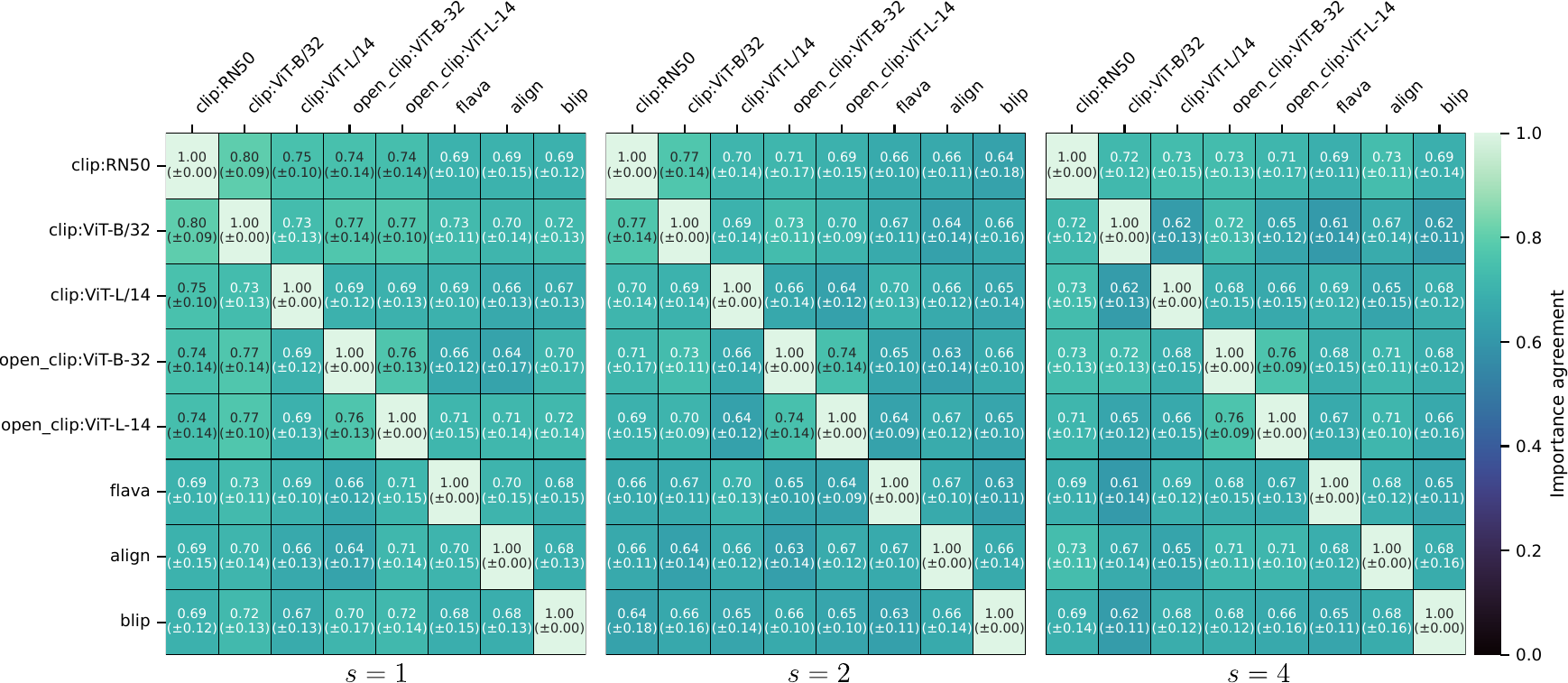}}
    \caption{\label{fig:imagenette_local_agreement}$\xSKIT$ agreement results on the Imagenette dataset across 8 different vision-language models as a function of conditioning set size, $s$. Results are reported as means and standard deviations over the random 20 images used in the experiment.}
\end{figure}

\begin{figure}[h!]
    \centering
    \includegraphics[width=\linewidth]{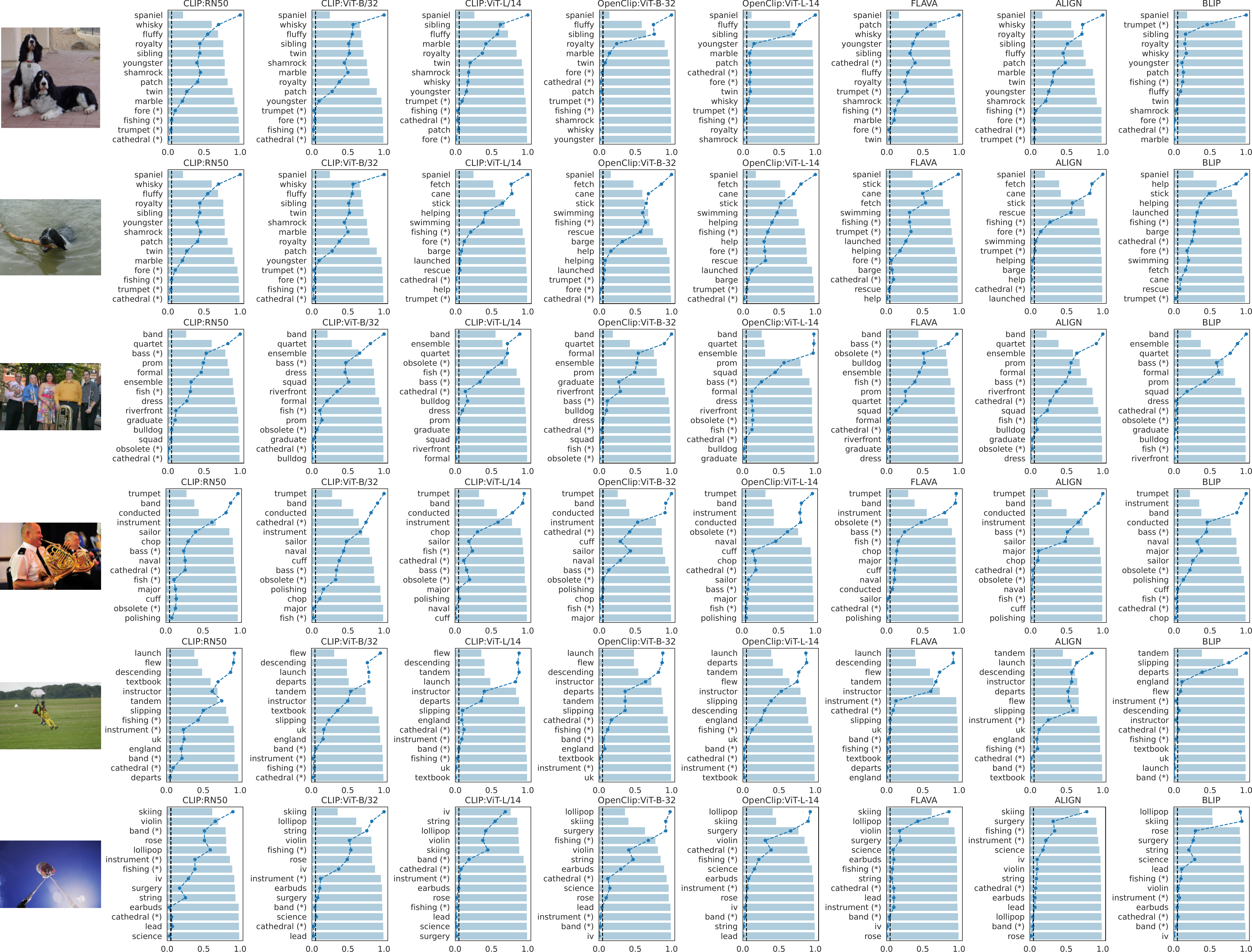}
    \caption{\label{fig:imagenette_local_results_supp}$\xSKIT$ ranks of importance across all models for 2 examples images from 3 classes in the Imagenette dataset. Results are averaged over 100 repetitions of the tests with $\tau^{\max} = 200$, and conditioning on random subsets of 1 concept (i.e., $s = 1$).}
\end{figure}

\begin{figure}[h!]
    \centering
    \includegraphics[width=\linewidth]{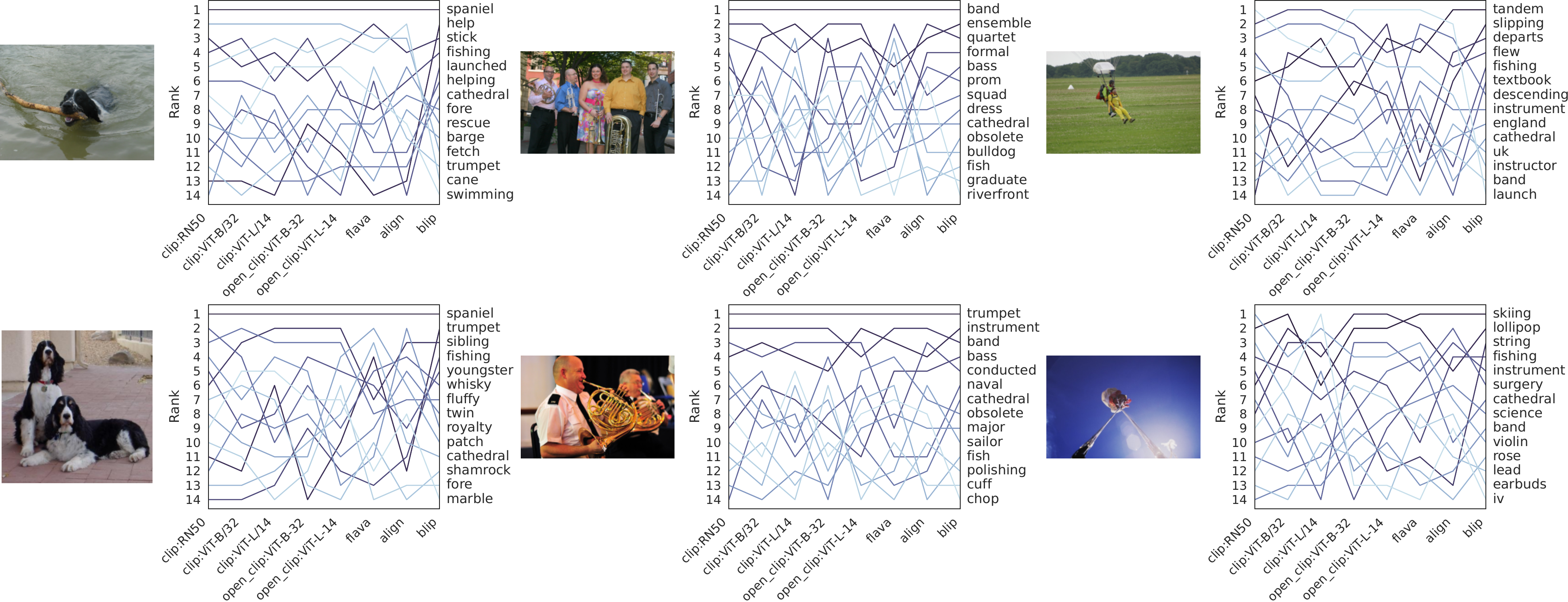}
    \caption{\label{fig:imagenette_local_results_supp_backbone}Summary of $\xSKIT$ ranks of importance across all models for 2 examples images from 3 classes in the Imagenette dataset.}
\end{figure}
\end{document}